\newcommand{\argmin}[1]{\underset{#1}{\text{argmin }}}
\newcommand{\vertiii}[1]{{\left\vert\kern-0.25ex\left\vert\kern-0.25ex\left\vert #1 
    \right\vert\kern-0.25ex\right\vert\kern-0.25ex\right\vert}}
\theoremstyle{plain}
\newtheorem{theorem}{Theorem}
\newtheorem{lemma}{Lemma}
\newtheorem{assumption}{Assumption}
\newtheorem{prop}{Proposition}
\newtheorem*{rep@theorem}{\rep@title}
\newcommand{\newreptheorem}[2]{%
\newenvironment{rep#1}[1]{%
 \def\rep@title{#2 \ref{##1}}%
 \begin{rep@theorem}}%
 {\end{rep@theorem}}}
\title{Collaborative Prediction: To Join or To Disjoin Datasets }
\author[1]{Kyung Rok Kim\thanks{Corresponding to: kkrok@unc.edu, guanting@unc.edu}}
\author[2]{Yansong Wang}
\author[3]{Xiaocheng Li}
\author[1]{Guanting Chen$^*$}
\affil[1]{
    University of North Carolina at Chapel Hill
}
\affil[2]{
    University of Science and Technology of China
    %Chery HuiYin Motor Finance Service Company Ltd.
}
\affil[3]{
    Imperial College London
}
\begin{document}
\maketitle

\thispagestyle{fancy}
\pagestyle{plain}

\begin{abstract}
  With the recent rise of generative Artificial Intelligence (AI), the need of selecting high-quality dataset to improve machine learning models has garnered increasing attention. However, some part of this topic remains underexplored, even for simple prediction models. In this work, we study the problem of developing practical algorithms that select appropriate dataset to minimize population loss of our prediction model with high probability. Broadly speaking, we investigate when datasets from different sources can be effectively merged to enhance the predictive model's performance, and propose a practical algorithm with theoretical guarantees. By leveraging an oracle inequality and data-driven estimators, the algorithm reduces population loss with high probability. Numerical experiments demonstrate its effectiveness in both standard linear regression and broader machine learning applications. Code is available at \url{https://github.com/kkrokii/collaborative_prediction}. 
\end{abstract}

\section{INTRODUCTION}\label{sec:intro}
In the era of big data and AI, it is widely believed that more data will always help downstream performance even if the data could potentially introduce noise or even adversarial effects. However, such an approach is applicable to machine learning models that are not susceptible to overfitting as in \cite{breiman2001random, chen2016xgboost}. Meanwhile, classical methods such as linear regression and classification still serve as alternatives for prediction tasks and are still widely used in AI-related pipelines. For instance, linear probing is frequently used for large language models and other machine learning models \citep{alain2016understanding, zhang2022probing}. With the rise of generative AI, there has been a shift in the approach to better utilize data. For example, the fine-tuning and alignment stages \citep{ouyang2022training} increasingly emphasize the need for higher-quality datasets to achieve better performance. 

In this work, we investigate methods for selecting higher-quality data to enhance predictive performance. As modern AI architectures become increasingly complex, systematically studying the impact of improved data selection on performance remains a challenging task. Henceforth, we aim to study the data refinement problem for linear regression and classification problems, which are well studied in the theoretical statistics/machine learning community, while also relevant with current development of AI.

Combining multiple datasets to train machine learning models is a common practice as increasing the number of samples usually leads to an improvement of performance of the model. However, it is not always granted that a model trained on diverse datasets would perform better, compared to separate models fitted on each dataset respectively. As datasets may or may not be similar to each other, this practice raises the question of when they can be safely merged.

Given a collection of datasets, a natural way to deal with the problem would be to combine only the datasets that are relevant. By sharing information across several datasets, a model could benefit from a larger sample size. Meanwhile, the model can enhance its overall performance and provide a more accurate forecast on each task if the underlying datasets are related. This procedure of joint training among selectively chosen datasets is referred to as collaborative prediction. In collaborative prediction, the collection of assorted datasets is divided into several partitions and the same model is trained on each partition.

Research on multi-task learning \citep{evgeniou2004} and data integration \citep{merkouris2010} also aims to address this problem by utilizing information from other datasets. The majority of the studies on multi-task learning are centered around regularization of tasks. Various studies present different objective functions, optimization methods, and convergence analyses. Although optimization methods are proved to converge and have been empirically demonstrated to perform well on numerical experiments, the formulation of the objectives or regularization is less grounded by theory. Table \ref{table:vs_mtl} showcases the performance advantage of our algorithm over the regression-based multi-task learning algorithms. We also note that our problem setting differs from federated learning \citep{kairouz2021advances}, as we do not consider issues of data heterogeneity or exchangeability constraints. For a discussion of related work, we refer readers to Appendix~\ref{sec:related_work}.

\begin{table}[t]
\caption{Out-of-Sample Error Reduction of Our Algorithm vs Other Regression Based Multi-task Algorithms \citep{murugesan2017co}. Notice that these multi-task settings are not designed to specifically reduce the out-of-sample error.}
    \label{table:vs_mtl}
    \centering
    \begin{tabular}{@{}l|cccc@{}}
        \toprule
        \textbf{Datasets} & \textbf{ RSS} & \textbf{SSF} & \textbf{CFGSF}\\
        \midrule
        \textbf{Multi-task Learning} &$12\%$& $37\%$ &$18\%$\\
        \textbf{Our Algorithm} & $87\%$& $64\%$ & $71\%$  \\
        \bottomrule
    \end{tabular}
\end{table}

In this context, we propose an algorithm with provable guarantees to determine whether to combine datasets for predictive models. Our approach leverages data-driven estimations and high-probability concentration bounds. The proposed algorithm is highly versatile, applicable to both simple predictive models, such as linear regression, and more complex deep neural network architectures.

The contributions of this study are threefold:  

\begin{itemize}  
    \item First, we investigate the problem of combining multiple datasets for predictive modeling. We characterize the limitations of different models and provide theoretical insights.  
    \item Second, we establish theoretical conditions for determining whether to combine datasets, ensuring a high-probability reduction in population loss, and develop practical algorithms applicable to predictive models.  
    \item Third, we empirically demonstrate the effectiveness of our algorithm across various real-world datasets. Additionally, we highlight its applicability to both classical and modern predictive models, such as neural networks, and show its effectiveness in improving performance.  
\end{itemize}

\section{Collaborative Prediction}\label{sec_collaborative_prediction}
Let us begin with a motivating example. Denote by $\mathcal{X}$ the space of features and $\mathcal{Y}$ the space of predictors. Suppose we are given two datasets,  
$D_1 = \{(x^{(1)}_i, y^{(1)}_i)\}_{i=1}^{n_1}$ and $D_2 = \{(x^{(2)}_i, y^{(2)}_i)\}_{i=1}^{n_2}$,  
drawn from distributions $P_1$ and $P_2$, respectively. Our goal is to train a predictive model $f_{\theta}(\cdot)$ from the hypothesis class  
$\mathcal{H} = \{ f_{\theta} \mid f_{\theta}(\cdot): \mathcal{X} \to \mathcal{Y}, \theta \in \Theta \}$,  
which may range from simple linear functions to complex deep neural networks.
One approach is to train different models for each dataset, that is, we get $f_{\theta_1}$ after training on dataset $D_1$, and $f_{\theta_2}$ after training on dataset $D_2$. Another approach is to train the model $f_{\theta_c}$ on combined dataset $D_1\cup D_2$. Given a loss function $\ell:\mathcal{Y}\times\mathcal{Y} \to \mathbb{R}$, the decision to combine the datasets or use them separately depends on comparing the population loss of the combined model with that of the models trained on individual datasets. That is, we need to compare 
\begin{align*}
&R(f_{\theta_c},f_{\theta_c})\\ 
&:= \mathbb{E}_{(x,y) \sim P_1}[\ell(f_{\theta_c}(x), y)] + \mathbb{E}_{(x,y) \sim P_2}[\ell(f_{\theta_c}(x), y)]\    
\end{align*}
with
 \begin{align*}
&R(f_{\theta_1},f_{\theta_2})\\ 
&:= \mathbb{E}_{(x,y) \sim P_1}[\ell(f_{\theta_1}(x), y)] + \mathbb{E}_{(x,y) \sim P_2}[\ell(f_{\theta_2}(x), y)].     
\end{align*}
In practice, we only have access to the empirical loss. One simple approach is to compare the sample loss defined as 
\begin{align*}
    \hat{R}(f_{\theta_1}, f_{\theta_2}) &= \frac{1}{n_1}\sum_{i=1}^{n_1}\ell(f_{\theta_1}(x_i^{(1)}),y_i^{(1)})\\ 
    &+ \frac{1}{n_2}\sum_{i=1}^{n_2}\ell(f_{\theta_2}(x_i^{(2)}),y_i^{(2)}).
\end{align*}

\begin{table}[t]
\caption{Accuracy of Algorithm \ref{alg2} and Direct Comparison.}
\label{table:motivation}
    \centering
    \begin{tabular}{@{}c|c|c|c@{}}
        \toprule
        $d$ & \textbf{Merge?} & \textbf{Our Algorithm} & \textbf{Direct Comparison}\\
        \midrule
        $0$     & \text{Yes}    & $82.7\%$ & $17.3\%$\\
        $0.1$   & \text{Yes}    & $79.4\%$ & $20.5\%$  \\
        $0.3$   & \text{No}     & $73.1\%$ & $80.5\%$  \\
        \bottomrule
    \end{tabular}
\end{table}

However, this could fall short in some simple settings. In Table \ref{table:motivation}, we draw datasets from $P_1$ and $P_2$ where $x \sim N(0,I_p)$, $\epsilon \sim N(0,1)$, $y = x^\top \beta + \epsilon$ for some $\beta$ for each $P_1$ and $P_2$, and we control the distance between the two distributions by setting different values of $d = \frac{1}{\sqrt{p}}||\beta_1 - \beta_2||$. The finding is that directly comparing the empirical loss (the column \textsf{Direct} in Table \ref{table1}) would result in false decisions 80\% of the time, especially under the scenario that $d$ is relatively small. However, our proposed approach performs more consistently. 

It is also theoretically challenging to provide a tight theoretical guarantee that supports the comparison of the empirical loss directly. From generalization bound result, the excess risk is bounded by 
\begin{align*}
    |\hat{R}(f_{\theta_1},f_{\theta_2}) - R(f_{\theta_1},f_{\theta_2})| \leq O\left(R_S(\mathcal{H})\right) + O\left(\sqrt{\frac{\log(1/\delta)}{n}}\right),
\end{align*}
where $R_S(\mathcal{H})$ is the empirical Rademacher complexity. By re-arranging the terms and use the upper bound for $R(f_{\theta_c},f_{\theta_c})$ and lower bound for $R(f_{\theta_1},f_{\theta_2})$, we have $R(f_{\theta_c},f_{\theta_c}) \leq R(f_{\theta_1},f_{\theta_2})$ if
\begin{align*}
    &\hat{R}(f_{\theta_c},f_{\theta_c}) + O\left(R_S(\mathcal{H})\right) + O\left(\sqrt{\frac{\log(1/\delta)}{n}}\right) \\
    \leq &\hat{R}(f_{\theta_1},f_{\theta_2}) - O\left(R_S(\mathcal{H})\right) -O\left(\sqrt{\frac{\log(1/\delta)}{n}}\right).
\end{align*}
The inequality above implies that to estimate the population loss from the empirical loss, we must quantify $R_S(\mathcal{H})$ (or VC dimension), which is challenging for many predictive models. Moreover, the constants (denoted by the big-$O$ notation) in the standard generalization bounds often rely on additional assumptions on the loss function and distributions, such as boundedness or sub-Gaussian tail behavior, imposing further restrictions.

To summarize, the reasons not to compare the empirical loss directly are, first, it sometimes performs badly in practice; second, it is non-trivial to give tight theoretical guarantees backing this approach. Therefore, we narrow our focus to linear regression and classification for the following reasons:  

\begin{itemize}  
    \item First, these models are relatively simple and well studied in the statistical learning literature, allowing us to derive insights that may be generalized to practical scenarios.  
    \item Second, they serve as fundamental building blocks in modern machine learning pipelines. We anticipate that some of the inherent properties of the proposed algorithm may extend to more complex machine learning models.  
\end{itemize}  

Motivated by this, we develop theoretical guarantees for population loss comparison in simpler models and propose algorithms applicable to both fundamental tasks and modern machine learning applications.

\subsection{Formulation}
We now formally introduce the collaborative prediction problem. Let $\{D_k\}_{k=1}^K$ be a collection of datasets, where each dataset consists of pairs $(x^{(k)}, y^{(k)})$ sampled from $\mathcal{X} \times \mathcal{Y}$ according to some probability distribution $\{P_k\}_{k=1}^K$ over the joint space. Let $\mathcal{A}(\cdot)$ denote an algorithm that takes a dataset as input and outputs a predictive model. Specifically, when trained on a dataset $D$, the algorithm produces a function $\mathcal{A}(D): \mathcal{X} \to \mathcal{Y}$.  

To evaluate the model performance, we employ a loss function $\ell: \mathcal{Y} \times \mathcal{Y} \to [0,1]$. The population loss of the trained model is defined as  
\begin{equation}  
L(D, P) = \mathbb{E}_{(x,y) \sim P} [\ell(\mathcal{A}(D)(x), y)],  
\end{equation}  
where $P$ represents the true underlying distribution.

Suppose the algorithm is trained on each dataset $D_k$ and predicts models separately. The loss incurred by fitting individual models is the sum of losses on each dataset $\sum_{k=1}^K L(D_k, P_k)$. If the algorithm is instead trained on all the datasets $D_{all} = \cup_{k=1}^K D_k$, the loss incurred by fitting a unified model is $\sum_{k=1}^K L(D_{all}, P_k)$.
Generally speaking, merging all the data would improve the \textit{estimation error} (or variance) as the number of samples on which the algorithm is trained has increased, but it would deteriorate the \textit{approximation error} (or bias) as the same model is applied to all the datasets that are possibly different. The problem of interest is to find a partition $\mathcal{S}$ of the collection of datasets $\{D_k\}_{k=1}^K$ such that the algorithm, when trained on each partition, minimizes the total loss. Formally, a partition $S$ is defined over the index set $\{1,2,\dots, K\}$ as a collection of disjoint subsets $\{I_i\}_{i=1}^N$, where each $I_i$ satisfies $I_i \subset \{1,2,\dots, K\}$ and $\cup_{i=1}^N I_i = \{1,2,\dots, K\}.$
We denote the partition cell to which an index $k$ belongs by $S(k)$, which means $S(k) = i$ if $k \in I_i$. Our objective is to find a partition $S \in \mathcal{S}$ that minimizes the total population loss:  
\begin{equation*}  
\min_{S\in \mathcal{S}} \sum_{k=1}^K L(\cup_{j: S(j) = S(k)} D_j, P_k).  
\end{equation*}  

\subsection{Collaborative Prediction for Regression}
In this section, we establish theoretical results and data-driven algorithms on the collaborative prediction problem for regression.  We first propose theoretical results on combining \textit{two} datasets, then we progressively develop practical algorithms, extending from the combination of two datasets to multiple datasets.

\subsubsection{Criterion for Combining Datasets for Linear Regression}
Suppose a dataset $D_k = \{( x^{(k)}_i, y^{(k)}_i )\}_{i=1}^{n_k}$ is generated from a distribution $P_k$ on $\mathcal{X} \times \mathcal{Y}$ for $k=1, \cdots, K$, where $\mathcal{X} = \mathbb{R}^{p}$, $\mathcal{Y} = \mathbb{R}$, and $K=2$ temporarily. We have the following standard assumption.
\begin{assumption} \label{assumption1}
    For all $k \leq K$, we assume that $x_i^{(k)}$ follows the marginal distribution $P_{\mathcal{X}}^{(k)}$, and there exists a parameter vector $\beta^{(k)}$ such that, for all $i \leq n_k$, the response variable satisfies $y_i^{(k)} = x_i^{(k)\top} \beta^{(k)} + \epsilon_i^{(k)}$, 
where $\epsilon_i^{(k)} \sim N(0, \sigma^2)$ with $\sigma^2 > 0$. Additionally, the covariates $x_i^{(k)}$ and the noise $\epsilon_i^{(k)}$ are independent.
\end{assumption}
With this assumption, for a cleaner notation, let us condense $D_k$ more compactly as $(X^{(k)}, y^{(k)}) = ([x^{(k)}_1, \cdots, x^{(k)}_{n_k}]^\top, [y^{(k)}_1, \cdots, y^{(k)}_{n_k}]^\top) \in \mathbb{R}^{n_k \times p} \times \mathbb{R}^{n_k}$, and a merged dataset $D_1 \cup D_2$ as $(X^{(c)}, y^{(c)}) = ([X^{(1)\top}, X^{(2)\top}]^\top, [y^{(1)\top}, y^{(2)\top}]^\top)$. With this setting, we fit linear regression models on $D_1$ and $D_2$. 

The minimizer of the empirical loss is the ordinary least squares (OLS) estimator. The OLS estimator fitted from a single dataset $D_k$ can be denoted by 
\begin{align*}
    \hat{\beta}^{(k)} = (X^{(k)\top} X^{(k)})^{-1} X^{(k)\top} y^{(k)},
\end{align*}
and the estimator fitted from the combined dataset $D_1 \cup D_2$ is
\begin{align*}
    \hat{\beta}^{(c)} = (X^{(c)\top} X^{(c)})^{-1} X^{(c)\top} y^{(c)}.
\end{align*}

Now we characterize the population loss. In the setting of linear regression, the loss $\ell$ is the $L^2$ loss and the population loss is referred to as the out-of-sample error (OSE). When $(x,y)$ is sampled from $P$, we can denote the OSE of the regressor $\hat{f}(x) = x^{\top}\hat{\beta}$ by 
\begin{align*}
    \text{OSE}(\hat{\beta}, P) = \mathbb{E}_{(x,y)\sim P}\left[\left(y - x^{\top}\hat{\beta}\right)^2\right].
\end{align*}
With this setting, the comparison of population loss can be denoted by
\begin{align}\label{eq1}
\sum_{k=1}^2 \text{OSE}(\hat{\beta}^{(k)}, P_k) > 
\sum_{k=1}^2 \text{OSE}(\hat{\beta}^{(c)}, P_k) .
\end{align}
However, in practice, we only observe the sample loss. Again, it is necessary to establish theoretical results that allow us to infer population loss comparisons from sample loss observations.

To have an alternative form that is easier to compute, we define
\begin{align*}
    h & : \mathbb{R} \rightarrow \mathbb{R},    &&h(x) :=  A_0 x\\
    g & : \mathbb{R}^p \times \mathbb{R}^p \rightarrow \mathbb{R}, &&g(y, z) := ||y-z||_{B_0}^2
\end{align*}
where $A_0$ and $B_0$ are constants determined by the underlying distributions. Their precise definitions are given in Theorem \ref{thm1} in Appendix \ref{thm1section}. $||\cdot||_{B_0}$  is the Mahalanobis norm so that $||v||_{B_0} = \sqrt{v^{\top} B_0 v}$. The following theorem establishes an equivalent condition for equation \eqref{eq1}, and the proof is deferred to Appendix \ref{thm1section}.

\begin{theorem} \label{thm1}
Equation \eqref{eq1} holds if and only if $h(\sigma^2) > g(\beta^{(1)}, \beta^{(2)})$.
\end{theorem}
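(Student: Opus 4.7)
The right-hand side of the theorem is deterministic while both $\hat\beta^{(k)}$ and $\hat\beta^{(c)}$ are random, so I would interpret the OSE comparison in expectation over the training data. Under Assumption \ref{assumption1}, a direct expansion gives, for any fixed estimator $\hat\beta$,
\[
\mathrm{OSE}(\hat\beta, P_k) = \sigma^2 + (\hat\beta - \beta^{(k)})^\top \Sigma^{(k)} (\hat\beta - \beta^{(k)}),
\]
with $\Sigma^{(k)} := \mathbb{E}_{x \sim P_{\mathcal{X}}^{(k)}}[xx^\top]$. Taking expectation over the data and using $\mathbb{E}[v^\top \Sigma^{(k)} v] = \mathrm{tr}(\Sigma^{(k)}\mathbb{E}[vv^\top])$ reduces \eqref{eq1} to a weighted comparison between the mean-squared-error matrices of $\hat\beta^{(k)}$ and $\hat\beta^{(c)}$.

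For the individual estimators, standard OLS analysis gives $\mathbb{E}[\hat\beta^{(k)} \mid X^{(k)}] = \beta^{(k)}$, so the bias vanishes and $\mathbb{E}[\mathrm{OSE}(\hat\beta^{(k)}, P_k)] = \sigma^2 + \sigma^2\,\mathrm{tr}\bigl(\Sigma^{(k)}\mathbb{E}[(X^{(k)\top}X^{(k)})^{-1}]\bigr)$. For the combined estimator, setting $M_k := X^{(k)\top}X^{(k)}$ and $M_c := M_1 + M_2$, plugging $y^{(c)} = [X^{(1)}\beta^{(1)};\, X^{(2)}\beta^{(2)}] + \epsilon^{(c)}$ into the OLS formula yields
\[
\hat\beta^{(c)} - \beta^{(k)} = M_c^{-1} M_{k'}\bigl(\beta^{(k')} - \beta^{(k)}\bigr) + M_c^{-1} X^{(c)\top}\epsilon^{(c)},
\]
where $k'$ denotes the other index. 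Independence of $\epsilon^{(c)}$ from the designs kills the cross term when we square and take expectation, cleanly splitting $\mathbb{E}[\mathrm{OSE}(\hat\beta^{(c)}, P_k)]$ into a variance part $\sigma^2\,\mathrm{tr}(\Sigma^{(k)}\mathbb{E}[M_c^{-1}])$ and a bias part quadratic in $\beta^{(1)} - \beta^{(2)}$.

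Summing over $k = 1, 2$ and moving all $\sigma^2$-dependent terms to one side and all $(\beta^{(1)} - \beta^{(2)})$-dependent terms to the other, the inequality \eqref{eq1} becomes $A_0\sigma^2 > \|\beta^{(1)} - \beta^{(2)}\|_{B_0}^2$ with
\[
A_0 = \sum_{k=1}^{2}\mathrm{tr}\bigl(\Sigma^{(k)}\mathbb{E}[M_k^{-1}]\bigr) - \mathrm{tr}\bigl((\Sigma^{(1)} + \Sigma^{(2)})\mathbb{E}[M_c^{-1}]\bigr),
\]
\[
B_0 = \sum_{k=1}^{2}\mathbb{E}\bigl[M_{k'}\,M_c^{-1}\,\Sigma^{(k)}\,M_c^{-1}\,M_{k'}\bigr],
\]
which identifies $h(\sigma^2) = A_0\sigma^2$ and $g(\beta^{(1)},\beta^{(2)}) = \|\beta^{(1)} - \beta^{(2)}\|_{B_0}^2$. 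The main obstacle will be careful bookkeeping: keeping conditional versus unconditional expectations straight, and verifying that both bias pieces — whose natural direction vectors are $\beta^{(2)} - \beta^{(1)}$ for $k=1$ and $\beta^{(1)} - \beta^{(2)}$ for $k=2$ — combine into a single quadratic form in the same difference, which they do since quadratic forms are sign-invariant. Positive semidefiniteness of $B_0$ is automatic from its sum-of-congruences structure, making $\|\cdot\|_{B_0}$ a legitimate Mahalanobis seminorm, and the entire argument is a chain of equivalences so both directions of the iff follow at once.
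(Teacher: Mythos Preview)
Your proposal is correct and follows essentially the same route as the paper: a bias--variance decomposition of $\hat\beta^{(c)}-\beta^{(k)}$ into the design-dependent bias $M_c^{-1}M_{k'}(\beta^{(k')}-\beta^{(k)})$ and the noise term $M_c^{-1}X^{(c)\top}\epsilon^{(c)}$, yielding exactly the paper's $A_0$ and $B_0$ (their $W_k$, $Z_{k'}$ are your $\Sigma^{(k)}$, $M_c^{-1}M_{k'}$). Your interpretation of OSE as averaged over the training data is also what the paper does implicitly in its Lemma~\ref{alemma1}; the only cosmetic difference is that the paper reaches the same split via a law-of-total-variance detour (Lemma~\ref{alemma2}) rather than writing out $\hat\beta^{(c)}-\beta^{(k)}$ directly as you do.
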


Theorem \ref{thm1} agrees with what one would have expected with respect to the true parameters $\beta^{(1)}$ and $\beta^{(2)}$. If the two models were close so that $\lVert \beta^{(1)}-\beta^{(2)} \rVert$ for some norm $||\cdot||$ is small, then we can have a similar condition for $||\beta^{(1)}-\beta^{(2)}||_{B_0}$ by norm equivalence. The condition in the theorem indicates that merging the datasets indeed reduces the error. 

\subsubsection{Provable Algorithm for Combining Datasets}
The true parameters $\beta^{(k)}$ are unknown in practice, and we need to develop data-driven algorithms based on Theorem \ref{thm1}. By concentration and anti-concentration inequalities, with high probability we can control the deviation of the estimation of $\sigma^2$, $\beta^{(1)}$, and $\beta^{(2)}$ from their true value. 

More specifically, the estimators can be bounded in the following way with a confidence level $\delta$. They involve some distributional constants $\tilde{A}_1(\delta)$ and $\tilde{A}_2(\delta)$ which depend on $\delta$, and $D(\cdot)$, a function of the input data, all defined in detail in Appendix \ref{phi_psi_section}. Next, we define 
\begin{align*}
\phi_\delta(u,v,w) = \tilde{A_1}(\delta) w + \tilde{A_2}(\delta) \lVert D(u-v) \rVert^2
\end{align*}
and 
\begin{align*}
\psi_\delta(u,v,w) = \left\{ \sqrt{g(u,v)} + 
    T_\delta(w, \lVert D(u-v) \rVert)
    \right\}^2
\end{align*}
where $T_\delta(x,y)$ is at the scale of $O(max(x,y))$, and also depends on the confidence level $\delta$. 
Under Assumption \ref{assumption1}, the OLS estimators $\hat{\beta}^{(1)}$ and $\hat{\beta}^{(2)}$ are normally distributed. Therefore, if the parameters $\beta^{(k)}$ are replaced by their estimators $\hat{\beta}^{(k)}$ in Theorem \ref{thm1}, then $g(\hat{\beta}^{(1)}, \hat{\beta}^{(2)})$ would be close to its true value $g({\beta}^{(1)}, {\beta}^{(2)})$. $\psi_\delta$ acts as a bound of $g(\hat{\beta}^{(1)}, \hat{\beta}^{(2)})$ with a margin depending on the properties of the underlying distribution and $\delta$ as defined above. Similarly, the sample variance of the noise on the combined dataset $\hat{\sigma}^{(c) 2}$, which has the form 
$\hat{\sigma}^{(c)2} = \lVert y^{(c)} - X^{(c)} \hat{\beta}^{(c)} \rVert^2 /(n_1+n_2-p)$, would lie near its true value $\sigma^2$. Hence $\phi_\delta$, as a linear function of $w$, behaves as a surrogate of the linear function $h$. The following lemma formalizes this result, and we leave the proof in Appendix \ref{pfsection}.

\begin{lemma} \label{lemma1}
Under Assumption \ref{assumption1}, with probability at least $1-5\delta$, we have equation \eqref{eq1} holds if 
\begin{align}\label{eqn_lemma1}
\phi_\delta (\hat{\beta}^{(1)}, \hat{\beta}^{(2)}, \hat{\sigma}^{(c) 2}) \,\,\geq \,\, \psi_\delta(\hat{\beta}^{(1)}, \hat{\beta}^{(2)}, \hat{\sigma}^{(c) 2}).
\end{align}
\end{lemma}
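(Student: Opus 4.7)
The plan is to use Theorem~\ref{thm1} to replace the population-loss inequality \eqref{eq1} by the purely deterministic statement $h(\sigma^2) > g(\beta^{(1)},\beta^{(2)})$, and then build a high-probability ``good event'' $E$ with $\mathbb{P}(E)\geq 1-5\delta$ on which the condition \eqref{eqn_lemma1} forces this inequality through the sandwich
\begin{align*}
h(\sigma^2) \;\geq\; \phi_\delta\bigl(\hat\beta^{(1)},\hat\beta^{(2)},\hat\sigma^{(c)2}\bigr) \;\geq\; \psi_\delta\bigl(\hat\beta^{(1)},\hat\beta^{(2)},\hat\sigma^{(c)2}\bigr) \;\geq\; g\bigl(\beta^{(1)},\beta^{(2)}\bigr).
\end{align*}
The middle inequality is precisely the hypothesis \eqref{eqn_lemma1}, so the work is to show that the outer two inequalities hold deterministically on $E$.

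For the right inequality I would proceed by the triangle inequality in the Mahalanobis norm $\|\cdot\|_{B_0}$:
\begin{align*}
\sqrt{g(\beta^{(1)},\beta^{(2)})} \;\leq\; \sqrt{g(\hat\beta^{(1)},\hat\beta^{(2)})} \;+\; \bigl\|(\hat\beta^{(1)}-\beta^{(1)})-(\hat\beta^{(2)}-\beta^{(2)})\bigr\|_{B_0}.
\end{align*}
Under Assumption~\ref{assumption1}, each $\hat\beta^{(k)}-\beta^{(k)}$ is Gaussian with covariance $\sigma^2(X^{(k)\top}X^{(k)})^{-1}$, so a standard Gaussian tail bound, applied once for each $k$, controls the second term by an expression whose size is at the scale of $\sigma$ times a data-dependent quantity. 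Replacing $\sigma^2$ by the observable $\hat\sigma^{(c)2}$ (using a chi-square upper-tail bound for $\hat\sigma^{(c)2}$) turns this into the form $T_\delta(\hat\sigma^{(c)2},\|D(\hat\beta^{(1)}-\hat\beta^{(2)})\|)$ appearing in $\psi_\delta$, which after squaring yields $\psi_\delta\ge g(\beta^{(1)},\beta^{(2)})$.

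For the left inequality I would compute $\mathbb{E}[\hat\sigma^{(c)2}]$ under the (possibly misspecified) joint model: since $\hat\sigma^{(c)2} = \|(I-P^{(c)})y^{(c)}\|^2/(n_1+n_2-p)$ and $y^{(c)}$ has a piecewise-defined mean, one gets $\mathbb{E}[\hat\sigma^{(c)2}] = \sigma^2 + \text{(quadratic form in }\beta^{(1)}-\beta^{(2)})$. A chi-square type lower-tail bound for the Gaussian quadratic form $\|(I-P^{(c)})\epsilon^{(c)}\|^2$, combined with replacing the unobservable $\|\beta^{(1)}-\beta^{(2)}\|^2$ by $\|D(\hat\beta^{(1)}-\hat\beta^{(2)})\|^2$ via another Gaussian tail bound, lets one rearrange into the linear form $\tilde A_1(\delta)\hat\sigma^{(c)2} + \tilde A_2(\delta)\|D(\hat\beta^{(1)}-\hat\beta^{(2)})\|^2 \leq A_0\sigma^2$, which is exactly $\phi_\delta\leq h(\sigma^2)$. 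Taking a union bound over these five concentration events (two for $\hat\sigma^{(c)2}$ and the misspecification correction, two for $\hat\beta^{(1)}-\beta^{(1)}$ and $\hat\beta^{(2)}-\beta^{(2)}$, and one for the combined deviation needed in $T_\delta$) accounts for the $5\delta$ slack.

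The main obstacle, and the step that must be executed with care, is the left inequality: $\hat\sigma^{(c)2}$ is a biased estimator of $\sigma^2$ under heterogeneity, and the bias involves exactly the quantity $\|\beta^{(1)}-\beta^{(2)}\|^2$ that also appears on the opposing side of the inequality. Resolving this circularity requires a delicate choice of the constants $\tilde A_1(\delta)$, $\tilde A_2(\delta)$ and $D$ so that the bias correction term in $\phi_\delta$ absorbs exactly the misspecification contribution to $\mathbb{E}[\hat\sigma^{(c)2}]$ while leaving enough margin for the chi-square and Gaussian fluctuations; the definitions deferred to Appendix~\ref{phi_psi_section} are presumably calibrated to make this bookkeeping work.
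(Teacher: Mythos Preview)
Your high-level plan is exactly the paper's: reduce \eqref{eq1} to $h(\sigma^2)>g(\beta^{(1)},\beta^{(2)})$ via Theorem~\ref{thm1}, then prove the sandwich $h(\sigma^2)\geq\phi_\delta\geq\psi_\delta\geq g(\beta^{(1)},\beta^{(2)})$ on a good event of probability $\geq 1-5\delta$, with the paper splitting the five events as two for the left inequality and three for the right.

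However, the tail directions you invoke are swapped in both places, and this matters. For the \emph{left} inequality $h(\sigma^2)\geq\phi_\delta$ you need an \emph{upper}-tail (Hanson--Wright) bound on $(n_1+n_2-p)\hat\sigma^{(c)2}$, giving $\hat\sigma^{(c)2}\leq c_1\sigma^2+c_2\|D(\beta^{(1)}-\beta^{(2)})\|^2$, which you then invert to lower-bound $\sigma^2$; a lower-tail bound on the noise part goes the wrong way. For the \emph{right} inequality $\psi_\delta\geq g$ you need an \emph{upper} bound on $\sigma$, hence a \emph{lower} bound on $\hat\sigma^{(c)2}$; an upper-tail bound on $\hat\sigma^{(c)2}$ cannot deliver this.

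The more substantive gap is on that right side: $(n_1+n_2-p)\hat\sigma^{(c)2}$ equals $\|Z\|^2$ for $Z\sim N(\mu,\sigma^2 I)$ with $\|\mu\|=\|D(\beta^{(1)}-\beta^{(2)})\|$, so it is \emph{non-central} under heterogeneity, and a standard chi-square tail does not apply. The paper closes this by invoking a small-ball \emph{anti-concentration} inequality for Gaussians (Theorem~\ref{cw} and Lemma~\ref{cwlemma}) to lower-bound $\hat\sigma^{(c)2}$, and then solves the resulting quadratic to extract $\sigma\leq c_0(\hat\sigma^{(c)2},\|D(\hat\beta^{(1)}-\hat\beta^{(2)})\|)$, which feeds into $T_\delta$. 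Your proposal does not identify this step, and without it the replacement of $\sigma$ by observables in $\psi_\delta$ cannot be justified.
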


Lemma \ref{lemma1} provides a computationally feasible approach to verify \eqref{eq1}. It utilizes the computationally feasible terms $\phi_\delta(\hat{\beta}^{(1)}, \hat{\beta}^{(2)}, \hat{\sigma}^{(c) 2})$ and $\psi_\delta(\hat{\beta}^{(1)}, \hat{\beta}^{(2)}, \hat{\sigma}^{(c) 2})$ as proxies of the terms $h(\sigma^2)$ and $g(\beta^{(1)}, \beta^{(2)})$ in the oracle inequality, which are computationally infeasible. This result implies a directly implementable algorithm, detailed in Algorithm \ref{alg0}.

\begin{algorithm}[ht!] \caption{High Probability Criterion to Combine Two Datasets} \label{alg0}
\begin{algorithmic}[1]
\Statex Input: Datasets $\{D_1, D_2\}$, confidence level $\delta$
\Statex Output: A decision $\textbf{Merge}$ whether to merge the datasets or not
\State Fit linear models $\hat{\beta}^{(1)}$, $\hat{\beta}^{(2)}$, and $\hat{\beta}^{(c)}$ on $D_1$, $D_2$, and $D_1 \cup D_2$, respectively
\State Compute $\left(\phi_\delta(\hat{\beta}^{(1)}, \hat{\beta}^{(2)}, \hat{\sigma}^{(c) 2}) ,\psi_\delta(\hat{\beta}^{(1)}, \hat{\beta}^{(2)}, \hat{\sigma}^{(c) 2}) \right)$
\State Return  $\textbf{Merge} = \mathbbm{1}_{\{ \phi_\delta > \psi_\delta\}}$
\end{algorithmic}
\end{algorithm}
Proposition \ref{prop1} guarantees that with high probability, the decision made by Algorithm \ref{alg0} is valid.
\begin{prop} \label{prop1}
Under Assumption \ref{assumption1}, with probability at least $1-5\delta$, the output of Algorithm \ref{alg0} is a decision rule with minimum out-of-sample error being
\begin{align*}
\min \left\{ 
    \sum_{k=1}^2 \text{OSE}(\hat{\beta}^{(k)}, P_k),
    \sum_{k=1}^2 \text{OSE}(\hat{\beta}^{(c)}, P_k)
\right\}.
\end{align*}
\end{prop}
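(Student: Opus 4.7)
The plan is to derive Proposition \ref{prop1} as a direct corollary of Lemma \ref{lemma1} via a two-branch case analysis on the output of Algorithm \ref{alg0}. Fix the good event $E$ supplied by Lemma \ref{lemma1}, which occurs with probability at least $1-5\delta$ and on which the implication $\phi_\delta(\hat{\beta}^{(1)},\hat{\beta}^{(2)},\hat{\sigma}^{(c)2}) \geq \psi_\delta(\hat{\beta}^{(1)},\hat{\beta}^{(2)},\hat{\sigma}^{(c)2}) \Rightarrow \eqref{eq1}$ holds. Conditioning on $E$, it suffices to verify that the OSE of the model returned by the algorithm coincides with the minimum of the two candidate OSEs.

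If the algorithm returns $\text{Merge} = 1$, then by construction $\phi_\delta > \psi_\delta$. Applying Lemma \ref{lemma1} inside $E$ gives \eqref{eq1}, so the sum of separate OSEs strictly exceeds the combined OSE, and the combined model $\hat{\beta}^{(c)}$ which the algorithm returns is the minimizer. If instead the algorithm returns $\text{Merge} = 0$, then $\phi_\delta \leq \psi_\delta$. Here I would appeal to the symmetric counterpart of Lemma \ref{lemma1}: since $\phi_\delta$ is built as a high-probability lower surrogate for $h(\sigma^2)$ and $\psi_\delta$ as a high-probability upper surrogate for $g(\beta^{(1)},\beta^{(2)})$ (both via two-sided concentration on $\hat{\sigma}^{(c)2}$ and on $g(\hat{\beta}^{(1)},\hat{\beta}^{(2)})$), the same event $E$ also forces $\phi_\delta \leq \psi_\delta$ to imply the negation of \eqref{eq1}. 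Thus the separate pair $(\hat{\beta}^{(1)},\hat{\beta}^{(2)})$ attains OSE no larger than the combined model, matching the minimum.

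The main obstacle is precisely the $\text{Merge} = 0$ branch, because Lemma \ref{lemma1} as stated is a one-sided implication. Closing this branch requires revisiting the concentration inequalities underlying $\phi_\delta$ and $\psi_\delta$ to check that they are genuinely two-sided, so that the converse implication holds on the same event $E$ without consuming any additional probability budget. The analytical heavy lifting, namely the deviation control relating $\phi_\delta, \psi_\delta$ to $h(\sigma^2), g(\beta^{(1)},\beta^{(2)})$ through Theorem \ref{thm1}, is already packaged into Lemma \ref{lemma1}; once the symmetry is in place, the two branches combine on $E$ and the $1-5\delta$ bound is inherited directly.
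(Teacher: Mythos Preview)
The paper does not give a standalone proof of Proposition~\ref{prop1}; it is presented as an immediate repackaging of Lemma~\ref{lemma1} and Theorem~\ref{thm1}. Your $\textbf{Merge}=1$ branch matches this exactly and is correct.

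The gap you flag in the $\textbf{Merge}=0$ branch is real, and your proposed fix does not close it. By construction (Lemmas~\ref{alemma3} and~\ref{alemma4}), on the good event $E$ one has the \emph{one-sided} sandwich
\[
h(\sigma^2)\ \ge\ \phi_\delta(\hat{\beta}^{(1)},\hat{\beta}^{(2)},\hat{\sigma}^{(c)2})
\qquad\text{and}\qquad
g(\beta^{(1)},\beta^{(2)})\ \le\ \psi_\delta(\hat{\beta}^{(1)},\hat{\beta}^{(2)},\hat{\sigma}^{(c)2}).
\]
These are not two-sided bounds on $h$ and $g$; $\phi_\delta$ is engineered as a lower surrogate for $h$ and $\psi_\delta$ as an upper surrogate for $g$. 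Consequently, $\phi_\delta\le\psi_\delta$ on $E$ is perfectly compatible with $h(\sigma^2)>g(\beta^{(1)},\beta^{(2)})$, i.e.\ with merging being strictly optimal while the algorithm declines to merge. No amount of ``revisiting the concentration inequalities'' will reverse this: to get the converse implication you would need $h\le\phi_\delta$ and $g\ge\psi_\delta$, which are the opposite inequalities to those the paper proves, and in fact cannot hold simultaneously with the stated ones except in degenerate cases.

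So the honest reading is that Proposition~\ref{prop1}, as literally phrased, is stronger than what Lemma~\ref{lemma1} actually delivers. The paper's guarantee is one-sided: with probability at least $1-5\delta$, whenever Algorithm~\ref{alg0} outputs $\textbf{Merge}=1$ the combined model attains the minimum; there is no corresponding certificate for $\textbf{Merge}=0$. Your instinct that something is missing is correct, but the resolution is not a symmetry argument---it is that the proposition should be read as the one-directional statement that Lemma~\ref{lemma1} supports.
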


The high probability guarantee of Algorithm \ref{alg0} comes at the price of limited applicability. Computing the functions $\phi_\delta$ and $\psi_\delta$ requires some knowledge of the marginal distribution of $x$. Moreover, even when $\phi_\delta$ and $\psi_\delta$ can theoretically be computed under distributional assumptions, it will still pose a strong limitation because real data is very heterogeneously distributed. This motivates us to develop Algorithm \ref{alg1}, a computationally feasible algorithm.

\subsubsection{Approximation Algorithm} 
We develop Algorithm \ref{alg1}, which takes Algorithm \ref{alg0} as a subroutine and tries to approximate Algorithm \ref{alg0}'s decision boundary, and is computationally feasible. We apply Algorithm \ref{alg0} multiple times, and with a sufficient number of repetitions, Theorem \ref{thm2} in Appendix \ref{consistency_section} justifies the consistency of the approximation.

Because Algorithm \ref{alg1} is an approximation algorithm, there is no high probability guarantee. Therefore, Algorithm \ref{alg1} aims to estimate and maximize the \textit{success rate} (SR), 
meaning the output is correct compared to the estimated ground truth. From \eqref{eqn_lemma1}, for each $\delta$ the condition $\phi_\delta > \psi_\delta$ implies that our algorithm suggests a merge, and the success of the merge could be determined by whether $\text{OSE}\_\text{dif} := \sum_{k=1}^2 \text{OSE}(\hat{\beta}^{(k)}, P^{(k)}_{\mathcal{X} \times \mathcal{Y}}) -
\sum_{k=1}^2 \text{OSE}(\hat{\beta}^{(c)}, P^{(k)}_{\mathcal{X} \times \mathcal{Y}})>0$.
Henceforth, the success rate (SR) is formally defined as 
\begin{align}\label{eqn:SR}
    \text{SR}  = \mathbb{P}\left((\phi_\delta - \psi_\delta) \times \text{OSE}\_\text{dif} > 0 \right).
\end{align}

As we do not know the exact $\text{OSE}\_\text{dif}$ due to the lack of information about the distribution, we approximate it based on given data. More specifically, suppose our available dataset $D_k$ is generated from a distribution $P_k$ for $k=1,2$. A subset $D_k^{train}$ of $D_k$ is reserved for training and the rest $D_k^{out} = D_k - D_k^{train}$ is used to generate out-of-samples $\{ (\tilde{x}^{(k)}_i, \tilde{y}^{(k)}_i) \}_{i=1}^{\tilde{n}_k} $ for some out-of-sample size $\tilde{n}_k$, independent from the training data. Under these settings, $\text{OSE}\_\text{dif}$ is estimated by 
    $\widehat{\text{OSE}}\_\text{dif} := \sum_{k=1}^2 \widehat{\text{OSE}} \left(\hat{\beta}^{(k)}, D_k^{out} \right) 
- \sum_{k=1}^2 \widehat{\text{OSE}} \left(\hat{\beta}^{(c)},D_k^{out} \right)$ 
where $\widehat{\text{OSE}}$ approximates the true out-of-sample error of an estimator $\hat{\beta}$ by
\begin{align*}
\widehat{\text{OSE}} \left(\hat{\beta}, D_k^{out} \right) = \frac{1}{\tilde{n}_k} \sum_{i=1}^{\tilde{n}_k} \left( \tilde{y}^{(k)} - \tilde{x}^{(k) \top} \hat{\beta} \right)^2 .
\end{align*}
Finally, we repeatedly apply the algorithm several times and estimate $\text{SR}$ by its average success rate $\widehat{\text{SR}}$.

We then present Algorithm \ref{alg1} with SR as the optimization target. Algorithm \ref{alg1} treats $\delta$ as a hyperparameter, whose optimal value can be finetuned by maximizing the estimated success rate $\widehat{\text{SR}}$. We follow \cite{arlot2009} and employ grid search to find the optimal value. As Proposition \ref{prop1} holds in probability, we evaluate the SR by repeating Algorithm \ref{alg0} several times. The full procedure is described in detail in Algorithm \ref{alg1}.

\begin{algorithm}[ht!] \caption{Merging Two Datasets with Optimal Confidence Level} \label{alg1}
\begin{algorithmic}[1]
\Statex Input: Datasets $\{D_1, D_2\}$, hyperparameter range $[\alpha_{min}, \alpha_{max}]$, grid search window size $\eta$, accuracy threshold $\lambda$
\Statex Output: Decision $\mathbbm{1}_{merge}$ whether to merge the datasets or not, proxy accuracy $proxy\_acc$
\Statex \textbf{Part1: Tuning $\alpha$}
\For{$\alpha = \alpha_{min}, \alpha_{min} + \eta, \cdots, \alpha_{max} $}
    \State Initialize $correct_\alpha$ = 0          \label{alg1:ref1}
    \For{$j = 1, 2, \cdots, max\_iterations $}
        \State Sample $(X^{(k)}, y^{(k)})$ from $D_k$ $(k=1,2)$ 
        \State Run Algorithm \ref{alg0} on $\{(X^{(k)}, y^{(k)})\}_{k=1}^2$ with $\log{\frac{1}{\delta}}$ replaced by $\alpha$ to obtain $(\phi_\alpha, \psi_\alpha)$
        \State Generate out-of-samples $(\tilde{X}^{(k)}, \tilde{y}^{(k)})$ from $D_k$ for $k=1,2$
        \State Compute $\widehat{\text{OSE}}\_\text{dif}$
        \State $correct_\alpha \gets correct_\alpha + \mathbbm{1}_{(\phi_\alpha - \psi_\alpha) \times OSE\_\text{dif} > 0}$
    \EndFor     \label{alg1:ref2}
\EndFor
\State Choose $\alpha_{opt} = \text{argmax}_\alpha correct_\alpha$
\Statex \textbf{Part2: Analysis of the data}
\State Repeat \ref{alg1:ref1}-\ref{alg1:ref2} with $\alpha = \alpha_{opt}$
\State Return $proxy\_acc = correct_{\alpha_{opt}} / max\_iterations $ and $\textbf{Merge} = \mathbbm{1}_{proxy\_acc > \lambda}$
\end{algorithmic}
\end{algorithm}

Algorithm \ref{alg1} provides a self-contained process for making a decision in practice. It uses only a portion of the given dataset to train models. The rest are used as out-of-samples to compute $\widehat{OSE}$. In Algorithm \ref{alg1}, Algorithm \ref{alg0} is applied with different values of $\alpha$, and the optimal value is chosen based on its performance. After tuning $\alpha$, the algorithm can be employed to make the actual decision. Due to the page limit, we put Theorem \ref{thm2} which justifies the consistency of the approximation, in Appendix \ref{consistency_section}.

\subsubsection{Extension to Multiple Datasets}
Algorithm \ref{alg1} is based on Proposition \ref{prop1}, which provides a provable guarantee for a pairwise decision. To generalize to the cases where multiple datasets are given, Algorithm \ref{alg1} is applied iteratively to choose which datasets to merge. Here, we compare multiple datasets and form clusters by adopting a greedy algorithm. Specifically, given a dataset that has not yet been assigned any cluster, we compare it with other datasets that also have not been clustered. We choose the one with the largest boost in performance when merged with the given dataset, and decide to combine the two if the elevated performance exceeds a certain threshold. The comparison is repeated until no other dataset can be merged to the current cluster. Details of the algorithm are presented in Algorithm \ref{alg2}.

\begin{algorithm}[ht!] \caption{Merging Multiple Datasets with Greedy Algorithm} \label{alg2}
\begin{algorithmic}[1]
    \Statex Input: A collection of datasets $\{D_k\}_{k=1}^K$, accuracy threshold $\lambda$ 
    \Statex Output: Cluster of each dataset $\{c_k\}_{k=1}^K$
    \While{$\exists D_k$ without a cluster}
        \State Assign a new cluster $c_k$ to $D_k$
        \State For every $D_j$ without a cluster, choose $j$ that maximizes $proxy\_acc_{(k, j)}$ by Algorithm \ref{alg1}                                  \label{alg2:ref1}
        \While{$proxy\_acc_{(k,j)} > \lambda $}
            \State Assign $c_j$ the same cluster as $c_k$, and merge $D_k$ and $D_j$
            \State Repeat \ref{alg2:ref1}
        \EndWhile
    \EndWhile
    \State Return $\{c_k \}_{k=1}^K$
\end{algorithmic}
\end{algorithm}

We note that the greedy algorithm is chosen for simplicity and practicability. It is straightforward to understand, and it is guaranteed to return a non-increasing error. Moreover, Algorithm \ref{alg2} has a quadratic dependence on the number of datasets. To the best of our knowledge, there is no standard clustering method faster than the greedy algorithm among those that are suitable for our setting. This is due to the absence of a universal feature space in which datasets are embedded, which necessitates pairwise comparison to precede a clustering method. Further complexity analysis, including running time report, is provided in Appendix \ref{section:complexity}.

\subsection{Collaborative Prediction for Classification} \label{section:classification_summary}
Similar arguments can be made on classification tasks as in regression. We employ logistic regression model to make decisions and assess its population error with cross entropy loss, with details provided in Appendix \ref{classificaion_section}. In a broad sense, the decision should be made by comparing errors, where the error now refers to classification error rather than the regression error. Once again, we want to represent the population loss in terms of $\beta$, the linear parameters related to the covariates in the generating distribution. Under moderate assumptions, it can be proved that combining the datasets gives smaller population error bound when
\begin{align*}
    \Psi(\beta^{(1)}, \beta^{(2)}) \leq \Phi(\beta^{(1)}) + \Phi(\beta^{(2)}),
\end{align*} 
where the precise definition of $\Psi$ and $\Phi$ can be found in Appendix \ref{section:classification_single} and \ref{section:classification_combined}.  The major difference between the two functions, as described in Theorem \ref{thm:classification_final} in Appendix \ref{section:classification:conclusion}, is that $\Psi$ involves a term of $O\left(\left(\frac{1}{\sqrt{n_1}} + \frac{1}{\sqrt{n_2}}\right)\lVert \beta^{(1)}-\beta^{(2)} \rVert\right)$, while $\Phi$ contains $O\left(\frac{\beta}{\sqrt{n}}\right)$. Hence the error bound on the combined dataset decreases if the distance between the true parameters $\lVert \beta^{(1)} - \beta^{(2)} \rVert$ is small, which aligns with our previous conclusion on regression. Due to the page limit, we refer readers to Appendix \ref{classificaion_section} for precise statements and details.

\section{NUMERICAL EXPERIMENTS} 
In this section, we empirically show the effectiveness and broad applicability of our algorithm. The structure of the section is as follows.
\begin{itemize}
    \item In Section \ref{sec_emp_syn}, we conduct experiments in synthetic environments. The results demonstrate strong performance, aligning well with our theoretical predictions.  
    \item In Section \ref{sec_emp_real}, we evaluate our algorithm on real datasets, casting them as regression problems. It exhibits strong performance, effectively handling datasets with heterogeneous distributions.  
    \item In Section \ref{sec_emp_nn}, we extend our algorithm to neural networks. The results not only show its robustness and strong performance across different settings but also highlight its applicability to different predictive models.  
\end{itemize}
Experimental details, such as the structure of neural networks or the choice of  hyperparameters, can be found in Appendix \ref{app_nn} and \ref{section:hyperparameter_setup}.

\subsection{Synthetic Environment}\label{sec_emp_syn}
We first test our example in the synthetic environment, where we know (or can sample) the ground truth such that whether datasets should be combined can be precisely determined. Also, the extent to which the datasets are heterogeneous could be controlled by regulating the difference between the underlying distributions. By experimenting within different environment setups, we can have a comprehensive idea how and when our algorithm will be effective.

We first consider the case where two datasets are available, and further deal with general settings on subsequent sections. Each of the dataset is generated by a linear model $y^{(k)}_j = x^{\top (k)}_j \beta^{(k)} + \epsilon^{(k)}_j$ $( k=1,2$, $j=1, \cdots, n_k$, $\beta^{(k)} \in \mathbb{R}^p)$ with randomly generated covariates $x^{(k)} \sim N(\mu^{(k)}, \Sigma^{(k)})$ and Gaussian noise $\epsilon \sim N(0, \sigma^2 I)$. We select different combinations of $p$, $\mu^{(k)}$, and $\Sigma^{(k)}$, and compare the sum of the out-of-sample errors of individual models $\sum_{k=1}^2 OSE(\hat{\beta}^{(k)}, P^{(k)}_{\mathcal{X} \times \mathcal{Y}})$ with the out-of-sample error of combined model $\sum_{k=1}^2 OSE(\hat{\beta}^{(c)}, P^{(k)}_{\mathcal{X} \times \mathcal{Y}})$.

Because not many related algorithms are available, we compare Algorithm \ref{alg1} with the algorithm called Direct Comparison, which is a naive algorithm that directly compares the sample loss (described in the beginning of Section \ref{sec_collaborative_prediction}). In the synthetic environment, we know the ground truth of the decision whether or not to combine the datasets. The accuracy metric is computed by comparing the ground truth with the outputs of algorithms for evaluation.

We show results for two settings. In both environments, the datasets are sampled from different distributions. In the first setting (in Table \ref{table1}), we gradually shift $\beta^{(2)}$ by increasing $d$, where $\beta^{(2)} - \beta^{(1)} = d{\bf{1}}_p$ and ${\bf{1}}_p \in \mathbb{R}^p$ is a unit vector. In the second environment (in Table \ref{table1.2}), we make an additional change so that the covariate $x$ is sampled from a distribution with shifted mean ($\mu_1 = 0$ and $\mu_2 = \bf{1}_p$). The performance of Algorithm \ref{alg1} and Direct Comparison is summarized in Table \ref{table1} and Table \ref{table1.2}. The accuracy metric is computed by comparing the ground truth with their outputs. 

Algorithm \ref{alg1} successfully reduces out-of-sample error by making adaptive decisions on a wide variety of circumstances. Intuitively, combining the datasets for small values of $d$ would aid the performance of a model, as the underlying distributions are similar, while the datasets should be deemed as distinct and sharing information will not improve the model if $d$ is large. The ground truth decision for combining the datasets changes mostly around $c \in (0.1, 0.3)$. Algorithm \ref{alg1} intelligently sorts out the situation. For $c \leq 0.1$, combining the two datasets indeed lowers the out-of-sample error for most of the cases, and the algorithm also suggest merging the datasets. For $c \geq 0.3$, information of two distributions becomes irrelevant according to the out-of-sample errors, and Algorithm \ref{alg1} also captures this fact.

\begin{table}[t]
\caption{Accuracy of Algorithm \ref{alg1} on the Synthetic Data: $n_1 = n_2 = 50, \mu_1 = \mu_2 = 0, \Sigma_1 = \Sigma_2 = I$}
\label{table1}
    \centering
    \begin{tabular}{@{}c|c|c|c|c@{}}
        \toprule
        $p$ &   $d$ & \textbf{Merge?} & \textbf{Algorithm \ref{alg1}} & \textbf{Direct Comparison}\\
        \midrule
        $10$    & $0$   & \text{Yes}    & $82.7\%$  & $17.3\%$  \\
        $10$    & $0.1$ & \text{Yes}    & $79.4\%$  & $20.5\%$  \\
        $10$    & $0.3$ & \text{No}     & $73.1\%$  & $80.5\%$  \\
        $20$    & $0$   & \text{Yes}    & $87.4\%$  & $12.6\%$  \\
        $20$    & $0.1$ & \text{Yes}    & $80.4\%$  & $19.4\%$  \\
        $20$    & $0.3$ & \text{No}     & $70.1\%$  & $92.5\%$  \\
        \bottomrule
    \end{tabular}
\end{table}

\begin{table}[t]
\caption{Accuracy of Algorithm \ref{alg1} on the Synthetic Data: $n_1 = n_2 = 50, \mu_1 = 0, \mu_2 = {\bf{1}}_p, \Sigma_1 = \Sigma_2 = I$}
\label{table1.2}
    \centering
    \begin{tabular}{@{}c|c|c|c|c@{}}
        \toprule
        $p$ &   $d$ & \textbf{Merge?} & \textbf{Algorithm \ref{alg1}} & \textbf{Direct Comparison}\\
        \midrule
        $10$    & $0$   & \text{Yes}    & $80.6\%$  & $19.4\%$  \\
        $10$    & $0.1$ & \text{Yes}    & $73.6\%$  & $22.6\%$  \\
        $10$    & $0.3$ & \text{No}     & $98.0\%$  & $98.2\%$  \\
        $20$    & $0$   & \text{Yes}    & $84.7\%$  & $15.3\%$  \\
        $20$    & $0.1$ & \text{Yes}    & $66.6\%$  & $32.7\%$  \\
        $20$    & $0.3$ & \text{No}     & $99.1\%$  & $99.9\%$  \\
        \bottomrule
    \end{tabular}
\end{table}

\subsection{Application on Real-World Data} \label{sec_emp_real}
We now apply the proposed algorithm to real-life datasets. We demonstrate results on four datasets, which are selected from frequently cited sources in machine learning community such as the UC Irvine repository. The datasets are: 1) Demand Forecast for Optimized Inventory Planning (DFOIP) \citep{aguilar2023}, which contains records of online transactions on different items, 2) Walmart Data Analysis and Forcasting (WDAF) \citep{sahu2023}, which consists of weekly sales and factors that might impact customers, 3) Boom Bikes (BB) \citep{mishra2021}, data on the number of daily bike rentals with weather information, and 4) Productivity Prediction of Garment Employees (PPGE) \citep{siri2021}, data on productivity of workers in a company. 

We manually divide the datasets into separate parts based on certain features. The datasets can now be viewed as sampled from different conditional distributions (conditioned on the feature), and it becomes reasonable to believe that a single model can be consistently applied to these parts. Namely, DFOIP is partitioned for each item, the ten most popular of which are used for the experiment. WDAF is divided into holiday and non-holiday weeks, BB is split by weather conditions, and PPGE is divided according to day.  

We find that carefully choosing which datasets to combine can greatly improve model performance. Merging all DFOIP datasets, for instance, increases data size but may reduce accuracy, as customer behavior varies across products. Separate models offer tailored predictions, but may struggle with new items due to limited data. In this case, Algorithm \ref{alg2} provides a solution by merging datasets of similar products. 

The performance of Algorithm \ref{alg2} is illustrated in Table \ref{table2}. Note that on real data, the true out-of-sample errors are inaccessible as parameters of underlying distributions are unknown. Hence out-of-sample errors are estimated via bootstrapped samples. As Table \ref{table2} exhibits, Algorithm \ref{alg2} reduces the out-of-sample error by more than 17.48\% by aggregating pertinent datasets, compared to out-of-sample error of individual models trained independently on each dataset. The results suggest that even with simple clustering, our algorithm greatly reduces the error.

\begin{table}[t]
\caption{Performance of Algorithm \ref{alg2} Reducing $\widehat{\text{OSE}}$}
\label{table2}
    \centering
    \begin{tabular}{@{}c|c|c|c@{}}
        \toprule
                & \textbf{Individual}    & \textbf{Algorithm \ref{alg2}}   & \textbf{Reduction} \\
        \midrule
        DFOIP   & $4.14 \times 10^8$    & $1.25 \times 10^6$    & $99.7\%$  \\
        WDAF    & $8.98 \times 10^{11}$ & $7.41 \times 10^{11}$ & $17.5\%$  \\
        BB      & $2.057$               & $1.347$               & $34.5\%$  \\
        PPGE    & $5.57 \times 10^{-2}$ & $3.95 \times 10^{-2}$ & $29.1\%$  \\
        \bottomrule
    \end{tabular}
\end{table}

\subsection{Integration on Neural Networks}\label{sec_emp_nn}
We further integrate our algorithm into neural networks. Neural networks are often pre-trained on massive amount of data and then fine-tuned for specific tasks by modifying the final layer. It is well-known that the representations before the final layer contain useful information of the input. By treating these representations as input, our algorithm can effectively distinguish between similar and dissimilar representations and improve predictive performance. See Appendix \ref{app_nn} for the details of applying our algorithms in the representation space of neural networks.

Five datasets are analyzed in this section: 1) Rossmann Store Sales (RSS) \citep{florianknauer2015}, which consists of sales of different stores along with additional information on each day, 2) Store Sales Forcasting (SSF) \citep{admin2014}, weekly sales data of Walmart stores, 3) Corporación Favorita Grocery Sales Forecasting (CFGSF) \citep{favorita2017}, another retail sales data in Ecuador, 4) Seoul Bike Sharing Demand (SBSD) \citep{veerappampalayam2020}, weekly bike demand data, and 5) Metro Interstate Traffic Volume (MITV) \citep{hogue2019}, data of traffic volume in Minneapolis. Retail datasets are divided by store types. SSF and CFGSF is further split into department and item types. The last two datasets are divided based on season and weather.

We use two simple networks: multi-layer perceptron with one hidden layer (MLP1) and two hidden layers (MLP2). While more sophisticated and advanced models could definitely be used to extract representation, and those models would indeed return finer representation of data, our algorithm would be further highlighted if it performs well even on these simple models. After training the models, we take the penultimate layer's representations as new datasets. We then apply Algorithm \ref{alg2} and check whether out-of-sample error is reduced by the algorithm.

\begin{table}[t]
\caption{Algorithm \ref{alg2} on Representation with $\widehat{\text{OSE}}$}
\label{table3}
    \centering
    \begin{tabular}{@{}c|c|c|c@{}}
        \toprule
        MLP1    & Individual    & Alg. \ref{alg2}   & Reduction \\
        \midrule
        RSS     & $2.197 \times 10^3$   & $2.326 \times 10^2$   & $89.41\%$   \\ 
        SSF     & $1.381 \times 10^4$   & $4.310 \times 10^3$   & $68.79\%$   \\ 
        CFGSF   & $1.564 \times 10^1$   & $4.308$               & $72.46\%$   \\ 
        SBSD    & $1.325 \times 10^1$   & $4.830$               & $63.55\%$   \\ 
        MITV    & $2.111 \times 10^3$   & $1.388 \times 10^3$   & $72.46\%$   \\ 
        \bottomrule
    \end{tabular}
    
    \begin{tabular}{@{}c|c|c|c@{}}
        \toprule
        MLP2    & Individual    & Alg. \ref{alg2}   & Reduction \\
        \midrule
        RSS   & $1.327 \times 10^3$ & $2.760 \times 10^2$ & $79.20\%$   \\ 
        SSF   & $3.277 \times 10^6$ & $3.816 \times 10^4$ & $98.83\%$   \\ 
        CFGSF & $1.320 \times 10^2$ & $4.723 \times 10^1$ & $64.22\%$   \\ 
        SBSD  & $1.146 \times 10^1$ & $4.110$             & $63.55\%$   \\ 
        MITV  & $1.699 \times 10^3$ & $2.028 \times 10^2$ & $88.06\%$   \\ 
        \bottomrule
    \end{tabular}
\end{table}

The outcome of Algorithm \ref{alg2} is summarized in Table \ref{table3} along with the out-of-sample errors estimated by bootstrapping as before. The errors are measured on representation of the models rather than the original datasets. As can be confirmed from Table \ref{table3}, the error decreases 64.22\% at minimum. The results suggest that our algorithm is good at distinguishing and combining relevant datasets, even on these simple neural network models, suggesting the effectiveness of our algorithm for more complex models as well.

\begin{figure}[h]
\begin{center}
\vspace{.3in}
\includegraphics[scale=0.25]{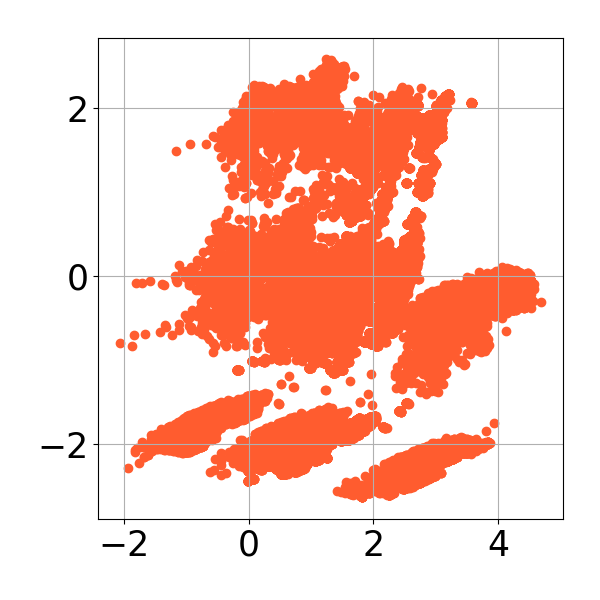}
\includegraphics[scale=0.25]{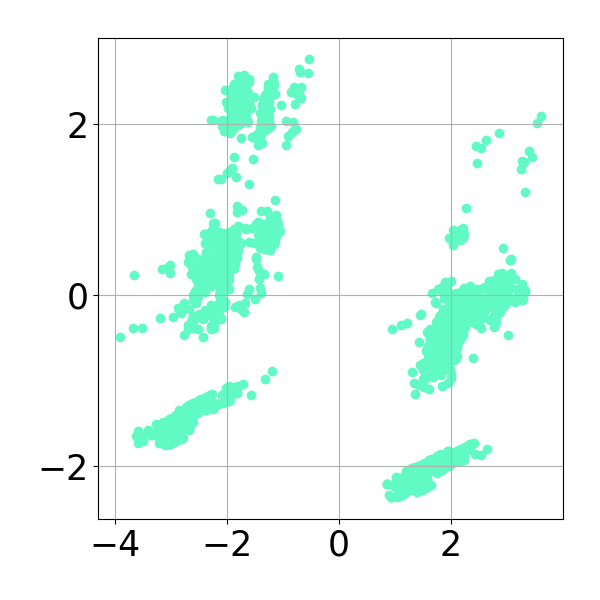}
\includegraphics[scale=0.25]{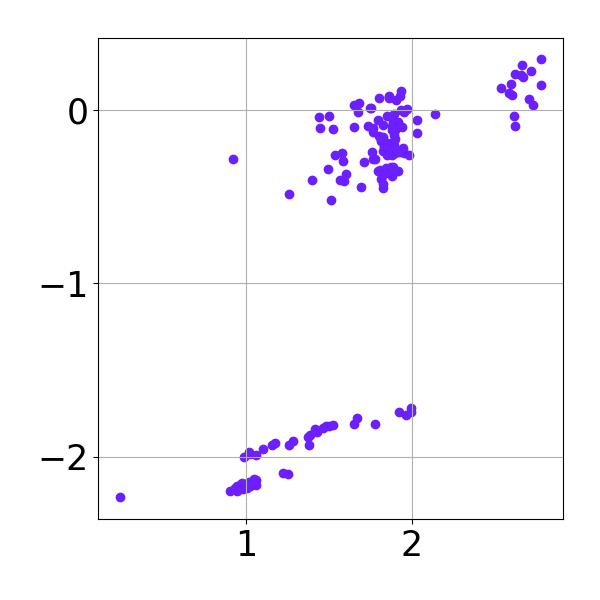}
\includegraphics[scale=0.25]{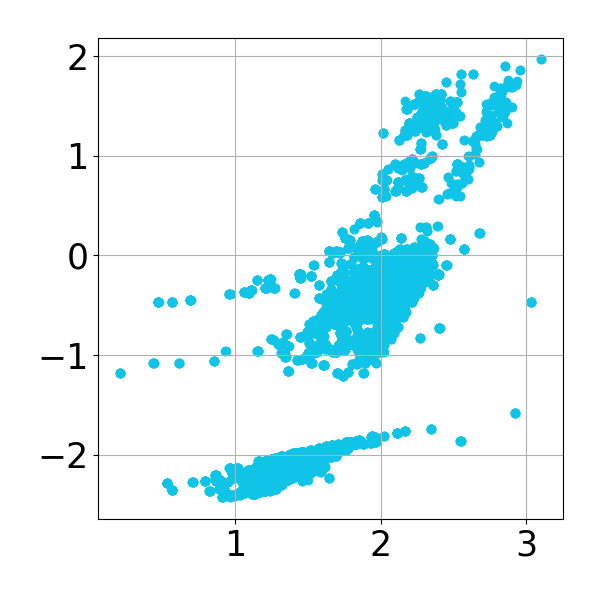}
\vspace{.3in}
\caption{Representation of MLP2 on CFGSF}
\label{fig1}
\end{center}
\end{figure}

The effectiveness of clustering the datasets can also be confirmed by visualizing the representation. Figure \ref{fig1} demonstrates representation of MLP2 projected onto $\mathbb{R}^2$ by principal component analysis. We present four collections of datasets from CFGSF, with each collection containing multiple datasets that are classified as the same cluster by Algorithm \ref{alg2}. As illustrated, each cluster includes multiple groups of representation from different datasets, which are heterogeneous and non-Gaussian. Nevertheless, the datasets are clustered to reduce the overall error. Figure \ref{fig1} shows that our algorithm successfully reduces error even when the underlying distribution of the input data does not meet the theoretical assumptions.

\section{CONCLUSION}

We present a data-driven algorithm for combining multiple datasets to share information across similar tasks with theoretical guarantee.  We empirically demonstrate that our proposed algorithm smartly combines similar datasets and reduces population loss on real data. Additionally, we showed that it can be successfully integrated into neural networks. Future works include exploring other clustering strategies with provable guarantees to improve dataset selection. It is also interesting to extend this framework to other machine learning models.

% References
\bibliography{uai2025-template}

\newpage
\onecolumn
\title{Supplementary Material}
\emptythanks
\maketitle
\appendix

\section{Related Work}\label{sec:related_work}
\paragraph{Multi-task learning.}
Given a collection of datasets from multiple tasks, multi-task learning aims to enhance the learning ability by exploiting commonalities between the tasks. A large portion of the literature focuses on optimization algorithms and their convergence on risk minimization problem. Structural learning with linear predictors was studied by \cite{ando2005}. Without theoretical justification, \cite{evgeniou2004} took regularization into account in multi-task learning. \cite{amit2007} proposed to decompose tasks into feature clusters and task clusters to learn common characteristics among different tasks. \cite{lin2018}, \cite{kumar2012}, \cite{barzilai2015}, and \cite{murugesan2017co} also used similar ideas with different regularization tactics. \cite{zhou2016} and \cite{yao2019} suggested to seek for representative tasks to flexibly cluster tasks by allowing a single task to be related to multiple representative tasks without specifying the number of cluster in advance.

Several studies also attempted to address the question related to the performance of the estimators. \cite{lounici2009} worked on multi-task linear regression and provided a performance guarantee with high probability for a specific regularization coefficient, assuming sparsity of the tasks. \cite{liang2009} devised asymptotical criterion for when a plugin estimator should be preferred to oracle or unregularized estimator. \cite{blum2017} studied PAC framework of multi-task learning on personalized and centralized settings without making connection among similar tasks. The work that is most closely related to our study was done by \cite{solnon2012}, where they tackled multi-task kernel regression problem by estimating regularization coefficient based on the given data and providing high probability guarantees on their model.

\paragraph{Concentration and anti-concentration inequality.}
Studies on concentration inequality aim to provide a bound on the tail probability of a distribution. A wide range of study deals with sub-Gaussian random variables, see \cite{vershynin2018} and \cite{boucheron2013} for example. \cite{hanson1971} first proposed a tail bound for a quadratic form of independent random variables, and \cite{rudelson2013} later gave a modern proof. \cite{hsu2011} also dealt with quadratic forms of sub-Gaussian random vectors. Meanwhile, works on anti-concentration inequality pose a bound on the probability that a random variable resides in a small region. The concept of anti-concentration is based on Levy's concentration function (\cite{kolmogorov1958}). \cite{latala2005} provided an upper bound on the probability that a random variable falls in a small ball. \cite{dasgupta2002} presented similar type of bound on $\chi^2$ distribution.

\paragraph{Representation in neural networks.}
It is widely accepted that features from a neural network can act as representation of data. \cite{alain2016understanding} extensively investigated features of each layer of neural networks and empirically showed that deeper features predict class labels well. \cite{li2017} discovered that features of different models could converge to a set of features which serves as a low dimensional representation. \cite{athiwaratkun2015} removed the top output layer of convolutional neural networks (CNNs) and demonstrated that classification with features outperforms the original CNN. \cite{bakker2003} used outputs from the penultimate layer of a multi-layer perceptron as features for multi-task learning, and empirically proved that task clustering increases the model performance.

\paragraph{Collaborative prediction.} 
The term collaborative prediction is used in the literature in slightly different points of view. One line of research, \cite{fan2023collaborative} and \cite{zhang2024collaborative} for example, focuses on generating better predictions given a fixed set of trained models. These works either demonstrate performance gain empirically or provide theory on asymptotics. Another line of research uses the term collaborative prediction and collaborative filtering interchangeably to refer to the problem of matrix completion, as in \cite{srebro2004generalization} and \cite{yu2009large}. On the contrary, we define collaborative prediction as training an unspecified number of models on homogeneous datasets.

\section{Proofs and Details} \label{pfsection}
In this section, we prove the theorems presented in the main text and delineate details of the classification problem in Section \ref{section:classification_summary}. We make a brief remark on overparametrized regime.

We first review the notations and setups. Then the theorems are proved under common assumptions. 

\subsection{Regression}
On regression, it is possible to precisely compute out-of-sample error of a model. We first provide the expression for the error, which contains true unknown parameters. We next replace the parameters with data-driven estimators which could be computed explicitly given data. Consistency of the error is also proved under appropriate assumptions.

\subsubsection{Exact Out-of-Sample Error} \label{thm1section}
Let $\mathcal{X}$ and $\mathcal{Y}$ denote the space of features and predictors, respectively. A dataset $D_k$ is a set of data sampled from $\mathcal{X \times Y}$ by some probability distribution. Suppose $D_k = \{(x^{(k)}_i, y^{(k)}_i) \in \mathbb{R}^p \times \mathbb{R}| i=1, \cdots, n_k\}$ is generated from a distribution $P^{(k)}_{\mathcal{X} \times \mathcal{Y}}$ for $k=1, \cdots, K$. We concatenate all the samples in the same dataset as $(X^{(k)}, y^{(k)}) = \left( \begin{bmatrix} x^{(k)\top}_1 \\ \vdots \\ x^{(k)\top}_{n_k} \end{bmatrix}, \begin{bmatrix} y^{(k)}_1 \\ \vdots \\ y^{(k)}_{n_k} \end{bmatrix}\right)$, and restate Assumption \ref{assumption1} below.

\begin{repassumption}{assumption1}
    (restated) A sample $(x^{(k)},y^{(k)})$ is generated from the distribution $P^{(k)}_{\mathcal{X} \times \mathcal{Y}}$ such that 
    $ x^{(k)} \sim P_{\mathcal{X}}^{(k)}$, the marginal distribution w.r.t. $X$, and
    $y^{(k)} = x^{(k)\top} \beta^{(k)} + \epsilon^{(k)}$, where $\epsilon^{(k)} \sim N(0,\sigma^2)$ and $\sigma > 0$. Additionally, the covariates and the noise are independent.
\end{repassumption}

Let us consider the case where only two datasets $D_1$ and $D_2$ exist. We stack all the data as $(X^{(c)}, y^{(c)}) = \left( \begin{bmatrix} X^{(1)} \\ X^{(2)} \end{bmatrix}, \begin{bmatrix} y^{(1)} \\ y^{(2)} \end{bmatrix}\right)$. The (unbiased) OLS estimator on $D_k$ is $\hat{\beta}^{(k)} = (X^{(k)\top} X^{(k)})^{-1} X^{(k)\top} y^{(k)}$, the (unbiased) OLS estimator on $D_1 \cup D_2$ is $\hat{\beta}^{(c)} = (X^{(c)\top} X^{(c)})^{-1} X^{(c)\top} y^{(c)}$, and the (unbiased) estimator for the variance of the noise on $D_1 \cup D_2$ is $\hat{\sigma}^{(c) 2 } = \frac{\lVert y^{(c)} - X^{(c)} \hat{\beta}^{(c)} \rVert^2}{n_1+n_2-p}$. It is well-known that the out-of-sample error of the OLS estimator $\text{OSE}(\hat{\beta}^{(k)}, P^{(k)}_{\mathcal{X} \times \mathcal{Y}})$ is of the following form 
\begin{align*}
    \text{OSE}(\hat{\beta}^{(k)}, P^{(k)}_{\mathcal{X} \times \mathcal{Y}}) = \sigma^2 + \sigma^2 tr(E[\tilde{x}^{(k)} \tilde{x}^{(k)\top} ] E[ (X^{(k)\top} X^{(k)})^{-1} ]),
\end{align*} where $\tilde{x}^{(k)}$ is an out-of-sample generated from the same marginal distribution $P_{\mathcal{X}}^{(k)}$. For the following theoretical development, we present an alternative form of $\text{OSE}(\hat{\beta}^{(c)}, P^{(k)}_{\mathcal{X} \times \mathcal{Y}})$. 

\begin{lemma} \label{alemma1}
    Under Assumption \ref{assumption1},
    \begin{align*}
        \text{OSE}(\hat{\beta}^{(c)}, P^{(k)}_{\mathcal{X} \times \mathcal{Y}}) &= \sigma^2 
        + \sigma^2 tr( W_k E[(X^{(c)\top} X^{(c)})^{-1}] ) 
        + (\beta^{(1)} - \beta^{(2)})^\top E[Z_{3-k}^\top W_k Z_{3-k}] (\beta^{(1)} - \beta^{(2)}),
    \end{align*}
    where $W_k  = E[\tilde{x}^{(k)} \tilde{x}^{(k)\top}]$ for an out-of-sample $\tilde{x}^{(k)} \sim P^{(k)}_X$ and $Z_k = (X^{(c)\top} X^{(c)})^{-1} X^{(k)\top} X^{(k)}$ .
\end{lemma}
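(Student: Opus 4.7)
The plan is to decompose $\hat{\beta}^{(c)}$ into a bias component coming from the mismatch between $\beta^{(1)}$ and $\beta^{(2)}$ and a noise component, then plug this into the OSE formula and take expectations. The crucial algebraic identity is that $Z_1 + Z_2 = (X^{(c)\top}X^{(c)})^{-1}(X^{(1)\top}X^{(1)} + X^{(2)\top}X^{(2)}) = I$, since $X^{(c)\top}X^{(c)} = X^{(1)\top}X^{(1)} + X^{(2)\top}X^{(2)}$.

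First I would expand $\hat{\beta}^{(c)} = (X^{(c)\top}X^{(c)})^{-1}(X^{(1)\top}y^{(1)} + X^{(2)\top}y^{(2)})$. Substituting $y^{(j)} = X^{(j)}\beta^{(j)} + \epsilon^{(j)}$ gives $\hat{\beta}^{(c)} = Z_1\beta^{(1)} + Z_2\beta^{(2)} + \xi$, where $\xi := (X^{(c)\top}X^{(c)})^{-1}X^{(c)\top}\epsilon^{(c)}$. Using $Z_1 + Z_2 = I$, this rewrites as $\hat{\beta}^{(c)} = \beta^{(k)} - (-1)^{k}\,Z_{3-k}(\beta^{(1)}-\beta^{(2)}) \cdot (\pm 1) + \xi$; more concretely, for $k=1$ one gets $\beta^{(1)} - \hat{\beta}^{(c)} = Z_2(\beta^{(1)}-\beta^{(2)}) - \xi$ and for $k=2$ one gets $\beta^{(2)} - \hat{\beta}^{(c)} = -Z_1(\beta^{(1)}-\beta^{(2)}) - \xi$. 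In either case, $\beta^{(k)} - \hat{\beta}^{(c)} = \pm Z_{3-k}(\beta^{(1)}-\beta^{(2)}) - \xi$, which is all we need since a sign is squared away.

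Second, for an out-of-sample $(\tilde{x}^{(k)},\tilde{y}^{(k)}) \sim P_k$ with $\tilde{y}^{(k)} = \tilde{x}^{(k)\top}\beta^{(k)} + \tilde{\epsilon}^{(k)}$, I would split
\begin{equation*}
\tilde{y}^{(k)} - \tilde{x}^{(k)\top}\hat{\beta}^{(c)} = \tilde{\epsilon}^{(k)} + \tilde{x}^{(k)\top}(\beta^{(k)} - \hat{\beta}^{(c)}).
\end{equation*}
Squaring and using independence of $\tilde{\epsilon}^{(k)}$ from $(\tilde{x}^{(k)}, \hat{\beta}^{(c)})$ removes the cross term and contributes the $\sigma^2$. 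Conditioning on the training data and on $\tilde{x}^{(k)}$ separately, and using that $\tilde{x}^{(k)}$ is independent of the training data with $\mathbb{E}[\tilde{x}^{(k)}\tilde{x}^{(k)\top}] = W_k$, yields
\begin{equation*}
\mathbb{E}\bigl[(\tilde{x}^{(k)\top}(\beta^{(k)}-\hat{\beta}^{(c)}))^2\bigr] = \mathbb{E}\bigl[(\beta^{(k)}-\hat{\beta}^{(c)})^\top W_k (\beta^{(k)}-\hat{\beta}^{(c)})\bigr].
\end{equation*}

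Third, I would plug in $\beta^{(k)} - \hat{\beta}^{(c)} = \pm Z_{3-k}(\beta^{(1)}-\beta^{(2)}) - \xi$ and expand the quadratic form. The bias-squared piece is $(\beta^{(1)}-\beta^{(2)})^\top \mathbb{E}[Z_{3-k}^\top W_k Z_{3-k}](\beta^{(1)}-\beta^{(2)})$. The variance piece is $\mathbb{E}[\xi^\top W_k \xi]$; conditioning on $X^{(c)}$ and using $\epsilon^{(c)}\sim N(0,\sigma^2 I)$ independent of $X^{(c)}$, $\mathbb{E}[\xi\xi^\top\mid X^{(c)}] = \sigma^2(X^{(c)\top}X^{(c)})^{-1}$, so the trace identity gives $\sigma^2\,\mathrm{tr}(W_k\,\mathbb{E}[(X^{(c)\top}X^{(c)})^{-1}])$. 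The cross term $\mathbb{E}[(\beta^{(1)}-\beta^{(2)})^\top Z_{3-k}^\top W_k \xi]$ vanishes because $Z_{3-k}$ depends only on the design matrices while $\mathbb{E}[\epsilon^{(c)}\mid X^{(c)}] = 0$ implies $\mathbb{E}[\xi\mid X^{(c)}] = 0$.

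The only real delicacy is this last vanishing: one must note that $\xi$ is a mean-zero Gaussian \emph{conditional on} $X^{(c)}$, and that $Z_{3-k}$ is $\sigma(X^{(c)})$-measurable, so the tower property kills the cross term cleanly. Everything else is routine algebra once the identity $Z_1 + Z_2 = I$ is in hand, and summing the three pieces gives exactly the claimed expression.
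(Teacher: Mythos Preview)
Your proof is correct and in fact cleaner than the paper's. Both arguments start with the same first move---writing $\tilde y^{(k)}-\tilde x^{(k)\top}\hat\beta^{(c)}=\tilde\epsilon^{(k)}+\tilde x^{(k)\top}(\beta^{(k)}-\hat\beta^{(c)})$ and peeling off the $\sigma^2$---but then diverge in how they handle the remaining quadratic. The paper inserts the conditional mean $E[\tilde x^{(k)\top}\hat\beta^{(c)}\mid\tilde x^{(k)}]$, obtaining a bias-squared piece $(\beta^{(1)}-\beta^{(2)})^\top E[Z_{3-k}]^\top W_k E[Z_{3-k}](\beta^{(1)}-\beta^{(2)})$ and a conditional-variance piece that requires a separate auxiliary lemma (Lemma~\ref{alemma2}) with a lengthy entrywise trace computation; only after adding the two does the $E[Z_{3-k}]^\top W_k E[Z_{3-k}]$ term cancel and the stated $E[Z_{3-k}^\top W_k Z_{3-k}]$ emerge. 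You instead condition on the training design $X^{(c)}$: writing $\beta^{(k)}-\hat\beta^{(c)}=\pm Z_{3-k}(\beta^{(1)}-\beta^{(2)})-\xi$ separates an $\sigma(X^{(c)})$-measurable part from a part that is mean-zero given $X^{(c)}$, so the cross term dies immediately by the tower property and each of the three target terms appears directly with no cancellation. Your route is shorter and bypasses Lemma~\ref{alemma2} entirely; the paper's route has the minor pedagogical advantage of displaying the classical bias--variance split at the level of the prediction $\tilde x^{(k)\top}\hat\beta^{(c)}$, at the cost of more bookkeeping.
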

\textbf{Remark}. For our two datasets case, the $(3-k)$th dataset refers to the other dataset instead of $D_k$. For instance, if $k=1$ so that $\hat{\beta}^{(1)}$ is being analyzed, then $Z_{3-k}$ refers to $(X^{(c)\top} X^{(c)})^{-1} X^{(2)\top} X^{(2)}$.

\begin{proof}
By definition,
\begin{align*}
    OSE(\hat{\beta}^{(c)}, P^{(k)}_{\mathcal{X} \times \mathcal{Y}}) 
    &= E[ (\tilde{y}^{(k)} - \tilde{x}^{(k)\top} \hat{\beta}^{(c)})^2] \\
    &= E[ (\tilde{x}^{(k)\top} \beta^{(k)} + \epsilon - \tilde{x}^{(k)\top} \hat{\beta}^{(c)})^2] \\
    &= E[ (\tilde{x}^{(k)\top} \beta^{(k)} - E[\tilde{x}^{(k)\top} \hat{\beta}^{(c)} | \tilde{x}^{(k)}])^2] 
    + E[( E[\tilde{x}^{(k)\top} \hat{\beta}^{(c)} | \tilde{x}^{(k)}] - \tilde{x}^{(k)\top} \hat{\beta}^{(c)})^2] \\
    & \quad + 2E\left[ (\tilde{x}^{(k)\top} \beta^{(k)} 
    - E[\tilde{x}^{(k)\top} \hat{\beta}^{(c)} | \tilde{x}^{(k)}])( E[\tilde{x}^{(k)\top}\hat{\beta}^{(c)} | \tilde{x}^{(k)}]
    - \tilde{x}^{(k)\top} \hat{\beta}^{(c)}) \right] + \sigma^2 .
\end{align*}
Note that 
\begin{align*}
    E[\tilde{x}^{(k)\top} \hat{\beta}^{(c)} | \tilde{x}^{(k)}]
    &= E[\tilde{x}^{(k)\top} (X^{(c)\top} X^{(c)})^{-1} X^{(c)\top} y^{(c)}  | \tilde{x}^{(k)}] \\
    &= \tilde{x}^{(k)\top} E[ (X^{(c)\top} X^{(c)})^{-1} ( X^{(1)\top} X^{(1)} \beta^{(1)} + X^{(2)\top} X^{(2)} \beta^{(2)} ) | \tilde{x}^{(k)} ] \\
    &= \tilde{x}^{(k)\top} \beta^{(k)} + \tilde{x}^{(k)\top} E[ (X^{(c)\top} X^{(c)})^{-1} X^{(3-k)\top} X^{(3-k)} (\beta^{(3-k)}-\beta^{(k)})  | \tilde{x}^{(k)} ]\\
    &= \tilde{x}^{(k)\top} \beta^{(k)} + \tilde{x}^{(k)\top} E[ Z_{3-k} (\beta^{(3-k)}-\beta^{(k)})  | \tilde{x}^{(k)} ],
\end{align*}
hence
\begin{align} \label{aeq1}
    E[ \tilde{x}^{(k)\top} \beta^{(k)} - E[\tilde{x}^{(k)\top} \hat{\beta}^{(c)} | \tilde{x}^{(k)}])^2] 
    &= (\beta^{(1)}-\beta^{(2)})^\top E[ Z_{3-k} ]^\top E[\tilde{x}^{(k)} \tilde{x}^{(k)\top}] E[Z_{3-k}]  (\beta^{(1)}-\beta^{(2)}).
\end{align}
Also, we have
\begin{align*}
    E & \left[ (\tilde{x}^{(k)\top} \beta^{(k)} - E[\tilde{x}^{(k)\top} \hat{\beta}^{(c)} | \tilde{x}^{(k)}]) ( E[\tilde{x}^{(k)\top} \hat{\beta}^{(c)} | \tilde{x}^{(k)}]- \tilde{x}^{(k)\top} \hat{\beta}^{(c)}) \right]  \\
    &=	E \left[ E \left[ (\tilde{x}^{(k)\top} \beta^{(k)} - E[\tilde{x}^{(k)\top} \hat{\beta}^{(c)} | \tilde{x}^{(k)}]) ( E[\tilde{x}^{(k)\top} \hat{\beta}^{(c)} | \tilde{x}^{(k)}] - \tilde{x}^{(k)\top} \hat{\beta}^{(c)}) \big| \tilde{x}^{(k)} \right] \right]  \\
    &= 0.
\end{align*}
By Lemma \ref{alemma2} and \eqref{aeq1}, it follows that 
\begin{align*}
    OSE(\hat{\beta}^{(c)}, P^{(k)}_{\mathcal{X} \times \mathcal{Y}}) 
    &= \sigma^2 
        + \sigma^2 tr( W_k E[(X^{(c)\top} X^{(c)})^{-1}] ) 
        + (\beta^{(1)} - \beta^{(2)})^\top E[Z_{3-k}^\top W_k Z_{3-k}] (\beta^{(1)} - \beta^{(2)}).
\end{align*}
\end{proof}

\begin{lemma} \label{alemma2}
Under Assumption \ref{assumption1},
\begin{align*}
    E[( E[\tilde{x}^{(k)\top} \hat{\beta}^{(c)} | \tilde{x}^{(k)}] - \tilde{x}^{(k)\top} \hat{\beta}^{(c)})^2] 
    &= \sigma^2 tr\left( W_k E[ (X^{(c)\top} X^{(c)})^{-1} ]\right) \\
    & \quad + (\beta^{(k)} - \beta^{(3-k)})^\top E[ Z_{3-k}^\top W_k Z_{3-k}] (\beta^{(k)} - \beta^{(3-k)})  \\
    &- (\beta^{(k)} - \beta^{(3-k)})^\top E[Z_{3-k}]^\top W_k E[ Z_{3-k}] (\beta^{(k)} - \beta^{(3-k)}).
\end{align*}
\end{lemma}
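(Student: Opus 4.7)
The target is the expected squared deviation of $\tilde{x}^{(k)\top}\hat{\beta}^{(c)}$ from its own conditional mean given $\tilde{x}^{(k)}$. The plan is first to note that $\tilde{x}^{(k)}$ is drawn independently of the training data, so $E[\hat{\beta}^{(c)}\mid \tilde{x}^{(k)}] = E[\hat{\beta}^{(c)}]$, and consequently
\begin{align*}
    E\!\left[\bigl(E[\tilde{x}^{(k)\top}\hat{\beta}^{(c)}\mid \tilde{x}^{(k)}] - \tilde{x}^{(k)\top}\hat{\beta}^{(c)}\bigr)^{2}\right] = tr\!\bigl(W_k\,\text{Cov}(\hat{\beta}^{(c)})\bigr),
\end{align*}
obtained by conditioning on $\tilde{x}^{(k)}$ first and then averaging. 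All the work is thus reduced to computing $\text{Cov}(\hat{\beta}^{(c)})$.

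Next I would decompose $\hat{\beta}^{(c)}$ by plugging in $y^{(c)}$ and using the identity $Z_{1}+Z_{2}=I$ to write $\hat{\beta}^{(c)} = \beta^{(k)} + Z_{3-k}(\beta^{(3-k)}-\beta^{(k)}) + (X^{(c)\top}X^{(c)})^{-1}X^{(c)\top}\epsilon^{(c)}$, where $\epsilon^{(c)}$ is the stacked noise vector. The first summand is deterministic; the third has conditional mean zero given $X^{(c)}$ under Assumption~\ref{assumption1}, which makes its cross-covariance with the second summand vanish by the tower property. Thus $\text{Cov}(\hat{\beta}^{(c)})$ splits cleanly as a noise contribution plus a covariate contribution.

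The noise contribution is the standard $\sigma^{2}E[(X^{(c)\top}X^{(c)})^{-1}]$, which after tracing against $W_k$ yields the first summand in the claim. Setting $v := \beta^{(3-k)}-\beta^{(k)}$, the covariate contribution equals $E[Z_{3-k}vv^{\top}Z_{3-k}^{\top}] - E[Z_{3-k}]\,vv^{\top}E[Z_{3-k}]^{\top}$; tracing each term against $W_k$ and invoking the cyclic property of the trace converts them into the quadratic forms $v^{\top}E[Z_{3-k}^{\top}W_kZ_{3-k}]v$ and $v^{\top}E[Z_{3-k}]^{\top}W_kE[Z_{3-k}]v$, which agree with the stated expression since a quadratic form is unchanged under $v\mapsto -v$. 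There is no real technical obstacle here: everything is an exact moment computation under Assumption~\ref{assumption1}. The only subtle points are (i) invoking independence of $\tilde{x}^{(k)}$ from the training data to collapse the inner conditional expectation, and (ii) verifying that the noise decouples from $Z_{3-k}$ so that the noise and covariate covariances add without a cross term.
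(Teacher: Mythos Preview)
Your argument is correct and follows essentially the same idea as the paper: both reduce the target to a trace of $W_k$ against the covariance of $\hat{\beta}^{(c)}$ and then split that covariance into a noise piece $\sigma^2 E[(X^{(c)\top}X^{(c)})^{-1}]$ and a covariate piece $\mathrm{Cov}(Z_{3-k}v)$. The packaging differs slightly: the paper reaches the split via the law of total variance conditioning on $X^{(c)}$, whereas you write out $\hat{\beta}^{(c)} = \beta^{(k)} + Z_{3-k}v + (X^{(c)\top}X^{(c)})^{-1}X^{(c)\top}\epsilon^{(c)}$ explicitly and kill the cross term with $E[\epsilon^{(c)}\mid X^{(c)}]=0$; and where the paper expands $tr\bigl(W_k E[Z_{3-k}vv^\top Z_{3-k}^\top]\bigr)$ by an index-by-index computation, you invoke the cyclic property of the trace directly. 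Your route is a bit more streamlined but not conceptually different.
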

\begin{proof}
We first derive
\begin{align*}
    E[( E[\tilde{x}^{(k)\top} \hat{\beta}^{(c)} | \tilde{x}^{(k)}] - \tilde{x}^{(k)\top} \hat{\beta}^{(c)})^2]
    &= E[ E[( E[\tilde{x}^{(k)\top} \hat{\beta}^{(c)} | \tilde{x}^{(k)}] - \tilde{x}^{(k)\top} \hat{\beta}^{(c)})^2 | \tilde{x}^{(k)}]] \\
    &= E[Var(\tilde{x}^{(k)\top} \hat{\beta}^{(c)} | \tilde{x}^{(k)})],
\end{align*}
and
\begin{align*}
    Var(\tilde{x}^{(k)\top} \hat{\beta}^{(c)} | \tilde{x}^{(k)}) 
    &= E[ Var(\tilde{x}^{(k)\top} \hat{\beta}^{(c)} | X^{(c)}, \tilde{x}^{(k)}) | \tilde{x}^{(k)}] 
    + Var(E[ \tilde{x}^{(k)\top} \hat{\beta}^{(c)} | X^{(c)}, \tilde{x}^{(k)\top}] | \tilde{x}^{(k)}) \\
    &= \sigma^2 \tilde{x}^{(k)\top} E[ (X^{(c)\top} X^{(c)})^{-1} ] \tilde{x}^{(k)\top} \\
    &\quad + Var\left(\tilde{x}^{(k)\top} \left( \beta^{(k)} + (X^{(c)\top} X^{(c)})^{-1} X^{(3-k)\top} X^{(3-k)} (\beta^{(3-k)}- \beta^{(k)} ) \right) | \tilde{x}^{(k)}\right) \\
    &= \sigma^2 \tilde{x}^{(k)\top} E[ (X^{(c)\top} X^{(c)})^{-1} ] \tilde{x}^{(k)\top} + \tilde{x}^{(k)\top} Var \left(Z_{3-k} (\beta^{(k)}- \beta^{(3-k)} ) \right) \tilde{x}^{(k)},
\end{align*}
hence 
\begin{align*}
    E[( E[\tilde{x}^{(k)\top} \hat{\beta}^{(c)} | \tilde{x}^{(k)}] - \tilde{x}^{(k)\top} \hat{\beta}^{(c)})^2]
    &= \sigma^2 tr\left( W_{k} E[ (X^{(c)\top} X^{(c)})^{-1} ]\right) + tr\left( W_{k} Var \left(Z_{3-k} (\beta^{(k)}- \beta^{(3-k)} ) \right)  \right) .
\end{align*}
To further simplify $Var \left(Z_{3-k} (\beta^{(k)}- \beta^{(3-k)} ) \right)$,
\begin{align*}
    Var \left(Z_{3-k} (\beta^{(k)}- \beta^{(3-k)} ) \right)  
    &= E[Z_{3-k} (\beta^{(k)} - \beta^{(3-k)})(\beta^{(k)} - \beta^{(3-k)})^\top Z_{3-k}^\top] \\
    &\quad - E[Z_{3-k}] (\beta^{(k)} - \beta^{(3-k)}) (\beta^{(k)} - \beta^{(3-k)})^\top E[Z_{3-k}]^\top.
\end{align*}
Since
\begin{align*}
    E[ Z_{3-k} (\beta^{(k)} - \beta^{(3-k)})(\beta^{(k)} - \beta^{(3-k)})^\top Z_{3-k}^\top]_{i,j} 
    &=E[ (Z_{3-k})_{i,:} (\beta^{(k)} - \beta^{(3-k)})(\beta^{(k)} - \beta^{(3-k)})^\top (Z_{3-k})_{j,:}^\top] \\
    &= (\beta^{(k)} - \beta^{(3-k)})^\top E[ (Z_{3-k})_{j,:}^\top (Z_{3-k})_{i,:}] (\beta^{(k)} - \beta^{(3-k)}),
\end{align*}
where $(Z_{3-k})_{i, :}$ refers to the $i$th row of $Z_{3-k}$, it follows that
\begin{align*}
    tr \left(W_k Var\left( Z_{3-k} (\beta^{(k)} - \beta^{(3-k)}) \right) \right) 
    &= tr\left( W_k \left[ (\beta^{(k)} - \beta^{(3-k)})^\top E[ (Z_{3-k})_{j,:}^\top (Z_{3-k})_{i,:}] (\beta^{(k)} - \beta^{(3-k)})  \right]_{i,j = 1, \cdots, p} \right)\\
    & \quad - (\beta^{(k)} - \beta^{(3-k)})^\top E[Z_{3-k}]^\top W_k E[ Z_{3-k}] (\beta^{(k)} - \beta^{(3-k)}). 
\end{align*}
Now, because we have the following relations
\begin{align*}
    tr & \left( W_k \left[ (\beta^{(k)} - \beta^{(3-k)})^\top E[ (Z_{3-k})_{j,:}^\top (Z_{3-k})_{i,:}] (\beta^{(k)} - \beta^{(3-k)}) \right]_{i,j = 1, \cdots, p} \right) \\
    &= \sum_{j=1}^p (W_k)_{1,j} (\beta^{(k)} - \beta^{(3-k)})^\top E[ (Z_{3-k})_{j,:}^\top (Z_{3-k})_{1,:}] (\beta^{(k)} - \beta^{(3-k)}) \\
    & \quad + \cdots + \sum_{j=1}^p (W_k)_{p,j} (\beta^{(k)} - \beta^{(3-k)})^\top E[ (Z_{3-k})_{j,:}^\top (Z_{3-k})_{p,:}] (\beta^{(k)} - \beta^{(3-k)}) \\
    &= \sum_{i=1}^p \sum_{j=1}^p (W_k)_{i,j} (\beta^{(k)} - \beta^{(3-k)})^\top E[ (Z_{3-k})_{j,:}^\top (Z_{3-k})_{i,:}] (\beta^{(k)} - \beta^{(3-k)}) \\
    &= (\beta^{(k)} - \beta^{(3-k)})^\top \sum_{i=1}^p \sum_{j=1}^p (W_k)_{i,j}  E[ (Z_{3-k})_{j,:}^\top (Z_{3-k})_{i,:}] (\beta^{(k)} - \beta^{(3-k)})\\
    &= (\beta^{(k)} - \beta^{(3-k)})^\top E[ Z_{3-k}^\top W_k Z_{3-k} (\beta^{(k)} - \beta^{(3-k)}),
\end{align*}
we get
\begin{align*}
    E[( E[\tilde{x}^{(k)\top} \hat{\beta}^{(c)} | \tilde{x}^{(k)}] - \tilde{x}^{(k)\top} \hat{\beta}^{(c)})^2]
    &= \sigma^2 tr\left( W_k E[ (X^{(c)\top} X^{(c)})^{-1} ]\right) \\
    & \quad + (\beta^{(k)} - \beta^{(3-k)})^\top E[ Z_{3-k}^\top W_k Z_{3-k}] (\beta^{(k)} - \beta^{(3-k)})  \\
    &- (\beta^{(k)} - \beta^{(3-k)})^\top E[Z_{3-k}]^\top W_k E[ Z_{3-k}] (\beta^{(k)} - \beta^{(3-k)}).
\end{align*}
\end{proof}

Now that the exact error is identified, the equivalent condition in Theorem \ref{thm1} for error reduction can be proved.

\begin{reptheorem}{thm1}\label{thm1rep}
Under Assumption \ref{assumption1}, we have
\begin{align*}
    \sum_{k=1}^2 \text{OSE}(\hat{\beta}^{(k)}, P^{(k)}_{\mathcal{X} \times \mathcal{Y}}) > \sum_{k=1}^2 \text{OSE}(\hat{\beta}^{(c)}, P^{(k)}_{\mathcal{X} \times \mathcal{Y}}) 
\end{align*} 
if and only if 
\begin{align*}
    h(\sigma^2) > g(\beta^{(1)}, \beta^{(2)}),
\end{align*} 
where 
\begin{align*}
    h(x) &= A_0 x, \\
    g(y,z) &= \lVert y-z \rVert_{B_0}^2, \\
    A_0 &= tr\left( W_1 E\left[ (X^{(1)\top}X^{(1)})^{-1} - (X^{(c)\top} X^{(c)})^{-1} \right] + W_2 E\left[ (X^{(2)\top}X^{(2)})^{-1} - (X^{(c)\top} X^{(c)})^{-1} \right]\right), \text{ and} \\
    B_0 &= E[ Z_1^\top W_2 Z_1] + E[ Z_2^\top W_1 Z_2] .
\end{align*}
\end{reptheorem}
\begin{proof}
By Lemma \ref{alemma1},
\begin{align*}
    \sum_{k=1}^2 & \text{OSE}(\hat{\beta}^{(k)}, P^{(k)}_{\mathcal{X} \times \mathcal{Y}}) > \sum_{k=1}^2 \text{OSE}(\hat{\beta}^{(c)}, P^{(k)}_{\mathcal{X} \times \mathcal{Y}}) \\
    & \iff 2\sigma^2 + \sigma^2 tr\left( W_1 E\left[ (X^{(1)\top}X^{(1)})^{-1} \right] + W_2 E\left[ (X^{(2)\top}X^{(2)})^{-1}\right]\right)   > \sigma^2 tr\left( (W_1+W_2) E[ (X^{(c)\top} X^{(c)})^{-1} ]\right) \\
    & \quad + (\beta^{(1)} - \beta^{(2)})^\top \left( E[ Z_1^\top W_2 Z_1] + E[ Z_2^\top W_1 Z_2]\right) (\beta^{(1)} - \beta^{(2)} ) + 2\sigma^2	\\
    & \iff \sigma^2  tr\left( W_1 E\left[ (X^{(1)\top}X^{(1)})^{-1} - (X^{(c)\top} X^{(c)})^{-1} \right] + W_2 E\left[ (X^{(2)\top}X^{(2)})^{-1} - (X^{(c)\top} X^{(c)})^{-1} \right]\right)  \\
    & \quad > (\beta^{(1)} - \beta^{(2)})^\top \left( E[ Z_1^\top W_2 Z_1] + E[ Z_2^\top W_1 Z_2]\right) (\beta^{(1)} - \beta^{(2)} ) .
\end{align*} 
\end{proof}

\subsubsection{Replacing Parameters with Estimators} \label{phi_psi_section}

Now, we prove Lemma \ref{lemma1}, which replaces $\beta^{(k)}$ and $\sigma^2$ with estimators by imposing high probability bounds. We first state concentration and anti-concentration inequalities from existing literature, and then develop high probability bounds on both sides of the inequality in Theorem \ref{thm1}. 

\begin{theorem} \label{hw} \citep{hsu2011}
If $K \in \mathbb{R}^{m \times n}, \ L = K^\intercal K, \ \mu \in \mathbb{R}^n, \ \sigma \geq 0,$ and $ E[e^{\alpha^\intercal (x - \mu)}] \leq e^{\frac{1}{2} \sigma^2 \lVert \alpha \rVert^2 }, ~  \forall \alpha \in \mathbb{R}^n$, then for $\forall t >0$
\begin{align*}
    \mathbb{P} \left( \lVert Kx \rVert^2 > \sigma^2 \left\{ tr(L)  + 2 \sqrt{tr(L^2) t} + 2 \lVert L \rVert t \right\} + \lVert K \mu \rVert^2 \sqrt{ 1 + 4 \sqrt{ \frac{\lVert L^2 \rVert}{tr( L^2)} t } + \frac{4 \lVert L \rVert^2}{tr(L^2)}t } \right) \leq e^{-t} .
\end{align*}
\end{theorem}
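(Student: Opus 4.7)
The plan is to apply Chernoff's inequality to the moment generating function of $\|Kx\|^2 = x^\top L x$ with $L = K^\top K$, and then optimize over a free parameter. By Markov's inequality, $\mathbb{P}(\|Kx\|^2 > s) \leq e^{-\lambda s}\,\mathbb{E}[e^{\lambda\|Kx\|^2}]$ for any $\lambda > 0$, so the task reduces to bounding the MGF of a sub-Gaussian quadratic form. The key trick is a Gaussian decoupling: introduce an independent $g \sim N(0, I_m)$ and use the identity $e^{\lambda\|v\|^2} = \mathbb{E}_g[e^{\sqrt{2\lambda}\, g^\top v}]$ with $v = Kx$. By Fubini, $\mathbb{E}_x[e^{\lambda\|Kx\|^2}] = \mathbb{E}_g\,\mathbb{E}_x[e^{\sqrt{2\lambda}\,(K^\top g)^\top x}]$, and taking $\alpha = \sqrt{2\lambda}\,K^\top g$ in the sub-Gaussian hypothesis bounds the inner expectation by $\exp\bigl(\sqrt{2\lambda}\,g^\top K\mu + \lambda\sigma^2\, g^\top KK^\top g\bigr)$.

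Next, I would evaluate the remaining expectation over $g$, which is the Gaussian integral $\mathbb{E}[e^{a^\top g + g^\top Q g}]$ with $a = \sqrt{2\lambda}\,K\mu$ and $Q = \lambda\sigma^2 KK^\top$; this equals $\det(I - 2Q)^{-1/2}\exp(\tfrac{1}{2} a^\top (I - 2Q)^{-1} a)$ whenever $I - 2Q \succ 0$. A push-through identity between $KK^\top$ and $L$ yields the compact form $\mathbb{E}[e^{\lambda\|Kx\|^2}] \leq \det(I - 2\lambda\sigma^2 L)^{-1/2}\exp\bigl(\lambda\,\mu^\top L (I - 2\lambda\sigma^2 L)^{-1}\mu\bigr)$. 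Taking logs, diagonalising $L = \sum_i \lambda_i u_i u_i^\top$, and using the scalar bound $-\tfrac{1}{2}\log(1-u) \leq \tfrac{u}{2} + \tfrac{u^2}{2(1-u)}$ on each eigenvalue, combined with $\sum_i \lambda_i = tr(L)$, $\sum_i \lambda_i^2 = tr(L^2)$, and $\max_i \lambda_i = \|L\|$, assembles the centered contribution into $\sigma^2[tr(L) + 2\sqrt{tr(L^2)\,t} + 2\|L\|\,t]$ after optimizing the Chernoff bound over $\lambda$ so that $2\lambda\sigma^2\|L\|$ is bounded away from $1$.

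The hardest step will be extracting the precise multiplicative factor on $\|K\mu\|^2$ in the noncentered case. The $\mu$-dependent term $\mu^\top L(I - 2\lambda\sigma^2 L)^{-1}\mu$ is at most $(1 - 2\lambda\sigma^2\|L\|)^{-1}\|K\mu\|^2$ in operator norm, but the \emph{same} $\lambda$ must be reused as in the centered analysis, so the spectrum of $L$ reappears through $tr(L^2)$ and $\|L\|$. Recognising the resulting expression as a perfect square of $1 + 2\|L\|\sqrt{t/tr(L^2)}$, and using $\|L^2\| = \|L\|^2$ for symmetric PSD $L$, reproduces the stated factor $\sqrt{1 + 4\sqrt{(\|L^2\|/tr(L^2))\,t} + 4\|L\|^2 t/tr(L^2)}$. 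Keeping these constants sharp — in particular not losing a multiplicative factor when simultaneously tuning $\lambda$ for the centered and noncentered pieces — is the only delicate part; everything else is routine MGF bookkeeping of Hanson--Wright type.
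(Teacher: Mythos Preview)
The paper does not supply its own proof of this theorem: it is quoted verbatim as a tool from \citet{hsu2011} and used as a black box in the subsequent lemmas. Your proposal is the standard Hanson--Wright-via-Gaussian-decoupling argument, which is precisely the route taken in the cited reference; the MGF bound $\mathbb{E}[e^{\lambda\|Kx\|^2}] \leq \det(I - 2\lambda\sigma^2 L)^{-1/2}\exp\bigl(\lambda\,\mu^\top L (I - 2\lambda\sigma^2 L)^{-1}\mu\bigr)$, the eigenvalue expansion of the log-determinant, and the single choice of $\lambda$ tuned to the centered part are all exactly as in \citet{hsu2011}. So there is nothing to compare here---your sketch is correct and matches the original source that the present paper merely cites.
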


\begin{theorem} \label{cw} \citep{lovett2010}
If $V \sim N_d(0,I)$, then
\begin{align*}
    \mathbb{P}(\lVert V \rVert^2 \leq \epsilon) \leq \sqrt{2e\frac{\epsilon}{d}} .
\end{align*}
\end{theorem}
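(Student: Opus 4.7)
The plan is to reduce the bound to a Chernoff-type calculation for a chi-squared random variable and then repackage the resulting expression into the specific form $\sqrt{2e\epsilon/d}$ stated in the theorem. Since $V\sim N_d(0,I)$, we have $\lVert V\rVert^2\sim\chi^2_d$, whose moment generating function satisfies $\mathbb{E}[e^{-\lambda\lVert V\rVert^2}]=(1+2\lambda)^{-d/2}$ for every $\lambda>-1/2$. Applying Markov's inequality to the decreasing map $x\mapsto e^{-\lambda x}$ for $\lambda>0$ gives
\[
\mathbb{P}(\lVert V\rVert^2\le\epsilon)=\mathbb{P}(e^{-\lambda\lVert V\rVert^2}\ge e^{-\lambda\epsilon})\le e^{\lambda\epsilon}(1+2\lambda)^{-d/2}.
\]

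Next, I would minimize the right-hand side over $\lambda>0$. If $\epsilon\ge d$ then $\sqrt{2e\epsilon/d}\ge\sqrt{2e}>1$ and the theorem is trivial, so assume $\epsilon<d$. Setting the derivative of $\lambda\epsilon-(d/2)\log(1+2\lambda)$ to zero yields the stationary point $\lambda^{\star}=(d-\epsilon)/(2\epsilon)>0$; substituting back and discarding the factor $e^{-\epsilon/2}\le 1$ produces the intermediate bound
\[
\mathbb{P}(\lVert V\rVert^2\le\epsilon)\le e^{(d-\epsilon)/2}(\epsilon/d)^{d/2}\le(e\epsilon/d)^{d/2}.
\]

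The final step is a short algebraic comparison from $(e\epsilon/d)^{d/2}$ to $\sqrt{2e\epsilon/d}$. The conclusion is vacuous whenever $\sqrt{2e\epsilon/d}\ge 1$, so I may assume $t:=e\epsilon/d<1/2$ and need only verify $t^{d/2}\le\sqrt{2t}$. For $d=1$ this reduces to $\sqrt{t}\le\sqrt{2t}$, immediate. For $d\ge 2$, the estimate $t^{d/2}\le t$ (valid since $t\le 1$) followed by $t\le\sqrt{2t}\iff t\le 2$ (valid since $t\le 1/2$) closes the chain.

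The main obstacle is essentially bookkeeping rather than anything conceptual: the MGF identity for $\chi^2_d$ is textbook, the optimization is one line of calculus, and the final reduction is a handful of inequalities. One mildly subtle point worth flagging is that the Chernoff route yields a strictly tighter bound $(e\epsilon/d)^{d/2}$ than the advertised $\sqrt{2e\epsilon/d}$, so no information is lost in the packaging; an alternative path via direct integration of the chi-squared density and Stirling's lower bound $\Gamma(d/2+1)\ge\sqrt{\pi d}(d/(2e))^{d/2}$ would also work but would require more machinery than the Chernoff argument sketched above.
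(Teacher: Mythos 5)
Your proof is correct. Note first that the paper itself gives no proof of this statement: Theorem \ref{cw} is imported verbatim from \citet{lovett2010} as a known anti-concentration bound, so there is no in-paper argument to compare against. Your Chernoff route is a clean, self-contained derivation: the MGF identity $\mathbb{E}[e^{-\lambda\lVert V\rVert^2}]=(1+2\lambda)^{-d/2}$ is standard, the optimizer $\lambda^{\star}=(d-\epsilon)/(2\epsilon)$ is computed correctly (and your reduction to the case $\epsilon<d$, where the claim would otherwise be vacuous, makes $\lambda^{\star}>0$ legitimate), and discarding $e^{-\epsilon/2}\le 1$ gives the intermediate bound $(e\epsilon/d)^{d/2}$. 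The final comparison $t^{d/2}\le\sqrt{2t}$ for $t=e\epsilon/d<1/2$ is handled correctly in both the $d=1$ and $d\ge 2$ cases, and the two case splits ($t\ge 1/2$ trivial versus $t<1/2$, which forces $\epsilon<d$) fit together consistently. As you observe, the Chernoff bound $(e\epsilon/d)^{d/2}$ is strictly stronger than the stated $\sqrt{2e\epsilon/d}$ for $d\ge 2$, so nothing is lost in the repackaging; the usual alternative (direct integration of the $\chi^2_d$ density near zero plus Stirling's bound on $\Gamma(d/2+1)$) reaches the same place with slightly more machinery, exactly as you note.
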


Next, we present lemmas needed for the application of our setting.
\begin{lemma} \label{cwlemma}
If $Z \sim N_d (\mu, I_d)$, then 
\begin{align*}
    \mathbb{P}(\lVert Z \rVert \leq t) \leq \sqrt{\frac{2e}{d}} (\lVert \mu \rVert +t) + 2e^{-\frac{(\lVert \mu \rVert -t)^2}{2d}}-1
\end{align*} 
for $t < \lVert \mu \rVert$.
\end{lemma}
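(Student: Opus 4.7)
The plan is to reduce the non-central case to a centered Gaussian by setting $V = Z - \mu \sim N(0, I_d)$, so that $\mathbb{P}(\lVert Z \rVert \leq t) = \mathbb{P}(\lVert V + \mu \rVert \leq t)$, and then sandwich this event between two complementary events whose union exhausts the probability space. Using the ordinary and reverse triangle inequalities, together with the hypothesis $t < \lVert \mu \rVert$, the event $\{\lVert V + \mu \rVert \leq t\}$ is contained in the annulus $\{\lVert \mu \rVert - t \leq \lVert V \rVert \leq \lVert \mu \rVert + t\}$. Writing this annulus as $A \cap B$ with $A = \{\lVert V \rVert \leq \lVert \mu \rVert + t\}$ and $B = \{\lVert V \rVert \geq \lVert \mu \rVert - t\}$, every $V \in \mathbb{R}^d$ belongs to $A \cup B$, so $\mathbb{P}(A \cup B) = 1$ and inclusion-exclusion yields $\mathbb{P}(A \cap B) = \mathbb{P}(A) + \mathbb{P}(B) - 1$; this is where the ``$-1$'' in the target bound originates.

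The remaining task is to bound $\mathbb{P}(A)$ and $\mathbb{P}(B)$ separately. Theorem \ref{cw} applied to $\mathbb{P}(\lVert V \rVert^2 \leq (\lVert \mu \rVert + t)^2)$ delivers $\mathbb{P}(A) \leq \sqrt{2e/d}\,(\lVert \mu \rVert + t)$ directly. For $\mathbb{P}(B) = \mathbb{P}(\lVert V \rVert^2 \geq (\lVert \mu \rVert - t)^2)$, I would apply a Chernoff bound with the $\chi_d^2$ moment generating function $E[e^{\lambda \lVert V \rVert^2}] = (1 - 2\lambda)^{-d/2}$, specialized to $\lambda = 1/(2d)$; this choice matches the denominator $2d$ in the target exponential and gives $\mathbb{P}(B) \leq (1 - 1/d)^{-d/2} e^{-(\lVert \mu \rVert - t)^2/(2d)}$. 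The elementary inequality $(1 - 1/d)^{-d/2} \leq 2$ for $d \geq 2$, equivalent to $(1 - 1/d)^d \geq 1/4$ (which holds with equality at $d = 2$ and only improves as $d$ grows), supplies the factor $2$; the $d = 1$ case reduces directly to the standard one-dimensional Gaussian tail $\mathbb{P}(\lvert V \rvert > s) \leq 2 e^{-s^2/2}$. Summing the two estimates and subtracting $1$ yields the stated inequality.

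The main conceptual step is the inclusion-exclusion in the sandwich argument, which is what produces the ``$-1$'': without sandwiching between two events whose union exhausts the space, one would only recover the weaker bound via $\mathbb{P}(A)$ or $\mathbb{P}(B)$ alone, and the two separate estimates could not be combined without loss. The technical work is concentrated in the Chernoff step for $\mathbb{P}(B)$, where the non-obvious choice $\lambda = 1/(2d)$ and the elementary bound on $(1 - 1/d)^{-d/2}$ are the only items that require verification.
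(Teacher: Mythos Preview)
Your proposal is correct and follows the same route as the paper: the paper writes the annulus containment as $\mathbb{P}(\lVert Z-\mu\rVert\le\lVert\mu\rVert+t)-\mathbb{P}(\lVert Z-\mu\rVert\le\lVert\mu\rVert-t)$, which is algebraically identical to your inclusion--exclusion $\mathbb{P}(A)+\mathbb{P}(B)-1$. The only difference is level of detail: the paper asserts the bound $\mathbb{P}(\lVert V\rVert>\lVert\mu\rVert-t)\le 2e^{-(\lVert\mu\rVert-t)^2/(2d)}$ without justification, whereas you supply it via the Chernoff argument with $\lambda=1/(2d)$ and the elementary check $(1-1/d)^{-d/2}\le 2$.
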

\begin{proof} From Theorem \ref{cw}, we can show that
\begin{align*}
    \mathbb{P}(\lVert Z \rVert \leq t) 
    &\leq \mathbb{P}(\lVert Z -\mu \rVert \leq \lVert \mu \rVert + t) - \mathbb{P}(\lVert Z -\mu \rVert \leq \lVert \mu \rVert - t) \\
    &\leq \sqrt{\frac{2e}{d}} (\lVert \mu \rVert +t) + 2e^{-\frac{(\lVert \mu \rVert -t)^2}{2d}}-1
\end{align*} 
\end{proof}

Based on these inequalities, a lower bound for $f(\sigma^2)$ can be obtained as follows.

\begin{lemma} \label{alemma3}
For $D^\top D = \{(X^{(1)\top}X^{(1)})^{-1} + (X^{(2)\top}X^{(2)})^{-1}\}^{-1}$,
    \begin{align*}
        h(\sigma^2) \geq \phi_\delta (\hat{\beta}^{(1)}, \hat{\beta}^{(2)}, \hat{\sigma}^{(c) 2})
    \end{align*}
with probability at least $1-2\delta$, where 
\begin{align*}
    \phi_\delta(u,v,w) &= \tilde{A}_1(\delta)w + \tilde{A}_2(\delta) \lVert D(u-v) \rVert^2, \\
    \tilde{A}_1(\delta) &= \frac{n_0 A_0}{\left( n_0 + 2 \sqrt{n_0\log{\frac{1}{\delta}}} + 2 \log{\frac{1}{\delta}} \right) \left( 3 + 4\sqrt{\frac{1}{n_0}\log{\frac{1}{\delta}}} \right) }, \\
    \tilde{A}_2(\delta) &= \frac{- 2 A_0 \left( 1 + 2\sqrt{\frac{1}{n_0}\log{\frac{1}{\delta}}} \right)}{\left( n_0 + 2 \sqrt{n_0\log{\frac{1}{\delta}}} + 2 \log{\frac{1}{\delta}} \right) \left( 3 + 4\sqrt{\frac{1}{n_0}\log{\frac{1}{\delta}}} \right) } ,
\end{align*}  
and $n_0 = n_1+n_2-p$.
\end{lemma}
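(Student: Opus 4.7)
The plan is to combine two concentration inequalities for the Gaussian noise, both consequences of Theorem \ref{hw}, linked by one crucial algebraic identity.

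The first step, and the main obstacle, is the identity
\[ \lVert (I-H)\mu\rVert^2 \;=\; \lVert D(\beta^{(1)}-\beta^{(2)})\rVert^2, \]
where $\mu = [X^{(1)}\beta^{(1)};\, X^{(2)}\beta^{(2)}]$ and $H = X^{(c)}(X^{(c)\top}X^{(c)})^{-1}X^{(c)\top}$ is the hat matrix for the combined design. Setting $A = X^{(1)\top}X^{(1)}$, $B = X^{(2)\top}X^{(2)}$, and $v = \beta^{(1)}-\beta^{(2)}$, I would show by direct block computation that $(I-H)\mu = [X^{(1)}(A+B)^{-1}B v;\; -X^{(2)}(A+B)^{-1}A v]$ and then invoke the Woodbury-style identity $A(A+B)^{-1}B = B(A+B)^{-1}A = (A^{-1}+B^{-1})^{-1} = D^\top D$ to collapse $\lVert (I-H)\mu\rVert^2 = v^\top D^\top D (A^{-1}+B^{-1}) D^\top D\, v$ down to $\lVert Dv\rVert^2$. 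This identity is what converts the out-of-column-space energy of $\mu$ into a Mahalanobis distance between the true parameters.

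Next I would apply Theorem \ref{hw} twice. First, to $n_0\hat\sigma^{(c)2} = \lVert (I-H)y^{(c)}\rVert^2$ with $K = I-H$ (a rank-$n_0$ orthogonal projection, so $\mathrm{tr}(L) = \mathrm{tr}(L^2) = n_0$ and $\lVert L\rVert = 1$), $x = y^{(c)}$, and mean $\mu$; with $t = \log(1/\delta)$ and the identity above, this gives with probability at least $1-\delta$
\[ n_0\hat\sigma^{(c)2} \;\leq\; \sigma^2 E + F\,\lVert D v\rVert^2, \quad E := n_0 + 2\sqrt{n_0\log(1/\delta)} + 2\log(1/\delta),\; F := 1 + 2\sqrt{\log(1/\delta)/n_0}. \]
Second, to $\hat v - v$ where $\hat v := \hat\beta^{(1)}-\hat\beta^{(2)}$. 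This vector equals $K\epsilon$ for $\epsilon = [\epsilon^{(1)};\epsilon^{(2)}]$ and $K = [(X^{(1)\top}X^{(1)})^{-1}X^{(1)\top},\, -(X^{(2)\top}X^{(2)})^{-1}X^{(2)\top}]$, and one checks $K K^\top = (D^\top D)^{-1}$, so $L = K^\top D^\top D K$ is a rank-$p$ orthogonal projection. Hence with probability at least $1-\delta$,
\[ \lVert D(\hat v - v)\rVert^2 \;\leq\; \sigma^2 G, \quad G := p + 2\sqrt{p\log(1/\delta)} + 2\log(1/\delta). \]

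To finish, I would use $\lVert D v\rVert^2 \leq 2\lVert D\hat v\rVert^2 + 2\lVert D(\hat v - v)\rVert^2$ together with a union bound over the two events (combined probability at least $1-2\delta$) to obtain
\[ n_0\hat\sigma^{(c)2} \;\leq\; \sigma^2(E + 2FG) + 2F\,\lVert D\hat v\rVert^2. \]
Since $n_k \geq p$ is needed for OLS on each $D_k$, one has $p \leq n_0$ and hence $G \leq E$, giving $E + 2FG \leq E(1 + 2F) = E(3 + 4\sqrt{\log(1/\delta)/n_0})$. Isolating $\sigma^2$ and multiplying by $A_0$ then yields $h(\sigma^2) \geq \phi_\delta(\hat\beta^{(1)}, \hat\beta^{(2)}, \hat\sigma^{(c)2})$ with exactly the stated $\tilde A_1(\delta)$ and $\tilde A_2(\delta)$. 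Beyond the algebraic identity, the only care point is this constant-matching step, which requires the slightly lossy bound $G \leq E$ to produce the factor $(3 + 4\sqrt{\log(1/\delta)/n_0})$ that appears in both $\tilde A_1$ and $\tilde A_2$.
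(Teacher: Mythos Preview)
Your proposal is correct and follows essentially the same route as the paper: two applications of the Hsu--Kakade--Telgarsky bound (Theorem~\ref{hw}) combined via the elementary inequality $\lVert Dv\rVert^2 \le 2\lVert D\hat v\rVert^2 + 2\lVert D(\hat v - v)\rVert^2$. Two cosmetic differences are worth noting. First, the paper represents $I-H$ as $I - H_{X^{(0)}} + M$ with $X^{(0)}$ block-diagonal and $M$ the projection onto a $p$-dimensional complement; this is the same projection as your $I-H$, and the identity $\lVert(I-H)\mu\rVert^2 = \lVert D(\beta^{(1)}-\beta^{(2)})\rVert^2$ is recovered there by computing $\beta^{(0)\top}C^\top(C(X^{(0)\top}X^{(0)})^{-1}C^\top)^{-1}C\beta^{(0)}$ rather than your Woodbury reduction. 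Second, in the second concentration step the paper writes the trace as $n_0$ (so that $G=E$ directly), whereas the actual rank of the relevant projection is $p$; your handling---computing $G$ with $p$ and then invoking $p\le n_0$ to get $G\le E$---is the more careful version of the same step and lands on exactly the stated constants.
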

\begin{proof}
Let us introduce additional notations for more efficient computation. Let $X^{(0)} =\begin{bmatrix} X^{(1)} & 0 \\ 0 & X^{(2)} \\ \end{bmatrix}$, 
$y^{(0)} = \begin{bmatrix} y^{(1)}\\ y^{(2)} \\ \end{bmatrix}$,
$\epsilon^{(0)} = \begin{bmatrix} \epsilon^{(1)}\\ \epsilon^{(2)} \\ \end{bmatrix}$,
and $\beta^{(0)} = \begin{bmatrix} \beta^{(1)}\\ \beta^{(2)} \\ \end{bmatrix}$. Then the estimator for $\sigma^2$ on the combined dataset $D_1 \cup D_2$ is $\hat{\sigma}^{(c)2} = \frac{1}{n_1+n_2-p} (\lVert y^{(1)} - X^{(1)} \hat{\beta}^{(c)} \rVert^2 + \lVert y^{(2)} - X^{(2)} \hat{\beta}^{(c)} \rVert^2)$. 

Let us further denote $C = \begin{bmatrix} I_p & -I_p \end{bmatrix}$ and $A = X^{(0)} ( X^{(0)\top} X^{(0)})^{-1} C^\top$, and let $M = A (A^\top A)^{-1} A^\top$ and $H_{X^{(0)}} = X^{(0)} ( X^{(0)\top} X^{(0)})^{-1} X^{(0)}$. Then $I - H_{X^{(0)}} + M$ is symmetric and idempotent, and $\hat{\sigma}^{(c)2} = \frac{1}{n_1+n_2-p} y^{(0) \top} (I - H_{X^{(0)}} + M) y^{(0)}$. Since $tr(I- H_{X^{(0)}}+M) = n_1+n_2-p$, $\lVert I- H_{X^{(0)}}+M \rVert = \lambda_{max}(I- H_{X^{(0)}}+M) = 1 $, and 
\begin{align*}
    \lVert (I- H_{X^{(0)}}+M)X^{(0)} \beta^{(0)} \rVert^2 
    &= \beta^{(0) \top} X^{(0)\top} ( I - H_{X^{(0)}} + M) X^{(0)} \beta^{(0)} \\
    &= \beta^{(0) \top} C^\top \{C (X^{(0)\top} X^{(0)})^{-1} C^\top\}^{-1} C \beta^{(0)},
\end{align*}
Theorem \ref{hw} gives 
\begin{align*}
    (n_1+n_2-p)\hat{\sigma}^{(c) 2} &\leq \sigma^2 \left( n_1 + n_2 - p + 2 \sqrt{n_1 + n_2 -p}\sqrt{\log{\frac{1}{\delta}}} + 2 \log{\frac{1}{\delta}} \right) \\
    & \quad + \left\{ 1 + 2\sqrt{\frac{\log{\frac{1}{\delta}}}{n_1 + n_2 - p}} \right\} \lVert D(\beta^{(1)} - \beta^{(2)}) \rVert^2
    \stepcounter{equation}\tag{\theequation}\label{aeq2}
\end{align*} with probability at least $1-\delta$, where $\lambda_{max}(\cdot)$ denotes the maximum eigenvalue of a matrix. 

As $D(\hat{\beta}^{(1)}-\hat{\beta}^{(2)}) = DC \hat{\beta}^{(0)} = DA^\intercal y^{(0)}$, Theorem \ref{hw} leads to 
\begin{align*}
    \mathbb{P}(\lVert D( \hat{\beta}^{(1)} - \hat{\beta}^{(2)} ) - D( \beta^{(1)} - \beta^{(2)}) \rVert^2 > \sigma^2(tr(M) + 2\sqrt{tr(M)t} + 2 \lVert M \rVert t)  ) \leq e^{-t},
\end{align*} or equivalently,
\begin{align*}
    \lVert D( \hat{\beta}^{(1)} - \hat{\beta}^{(2)} ) - D( \beta^{(1)} - \beta^{(2)} ) \rVert \leq \sqrt{ \sigma^2 \left( n_1+n_2-p + 2\sqrt{n_1+n_2-p}\sqrt{\log{\frac{1}{\delta}}} + 2\log{\frac{1}{\delta}} \right)}
    \stepcounter{equation}\tag{\theequation}\label{aeq4}
\end{align*} 
with probability at least $1-\delta$. Hence
\begin{align*}
    \lVert D( \beta^{(1)} - \beta^{(2)} ) \rVert^2 
    & \leq 2 \lVert D( \hat{\beta}^{(1)} - \hat{\beta}^{(2)} ) - D( \beta^{(1)} - \beta^{(2)} ) \rVert^2 + 2 \lVert D( \hat{\beta}^{(1)} - \hat{\beta}^{(2)} ) \rVert^2 \\
    & \leq 2 \sigma^2 \left(n_1+n_2-p + 2\sqrt{n_1+n_2-p}\sqrt{\log{\frac{1}{\delta}}} + 2\log{\frac{1}{\delta}}\right) + 2 \lVert D( \hat{\beta}^{(1)} - \hat{\beta}^{(2)} ) \rVert^2
    \stepcounter{equation}\tag{\theequation}\label{aeq3}
\end{align*} 
with probability at least $1-\delta$. Combining \eqref{aeq2} and \eqref{aeq3} gives
\begin{align*}
    (n_1+n_2-p) \hat{\sigma}^{(c) 2} &\leq \sigma^2 \left( n_1 + n_2 - p + 2 \sqrt{n_1 + n_2 -p}\sqrt{\log{\frac{1}{\delta}}} + 2 \log{\frac{1}{\delta}} \right) \left( 3 + 4\sqrt{\frac{\log{\frac{1}{\delta}}}{n_1 + n_2 - p}} \right) \\
    & \quad + 2 \left( 1 + 2\sqrt{\frac{\log{\frac{1}{\delta}}}{n_1 + n_2 - p}} \right) \lVert D( \hat{\beta}^{(1)} - \hat{\beta}^{(2)} ) \rVert^2 ,
\end{align*} or equivalently,
\begin{align*}
    h(\sigma^2) \geq \tilde{A}_1 (\delta) \hat{\sigma}^{(c)2} + \tilde{A}_2 (\delta) \lVert D(\hat{\beta}^{(1)} - \hat{\beta}^{(2)}) \rVert^2 .
\end{align*}
with probability at least $1-2\delta$.
\end{proof}

Next, $g(\beta^{(1)}, \beta^{(2)})$ can also be bounded in a similar way, and we show it in the following lemma.

\begin{lemma} \label{alemma4}
For $D^\top D = \{(X^{(1)\top}X^{(1)})^{-1} + (X^{(2)\top}X^{(2)})^{-2}\}^{-1}$,
    \begin{align*}
        g(\beta^{(1)}, \beta^{(2)}) \leq \psi_\delta (\hat{\beta}^{(1)}, \hat{\beta}^{(2)}, \hat{\sigma}^{(c) 2})
    \end{align*}
with probability at least $1-2\delta$, where 
\begin{align*}
    \psi_\delta(u,v,w) &= \left\{ \sqrt{g(u,v)} + T_\delta (w, \lVert D(u-v) \rVert) \right\}^2 
\end{align*}
and $T_\delta (x,y) = O(\max \{ x,y \} )$.
\end{lemma}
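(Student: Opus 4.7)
The plan is to upper bound $\sqrt{g(\beta^{(1)},\beta^{(2)})}$ by $\sqrt{g(\hat\beta^{(1)},\hat\beta^{(2)})}$ plus a fluctuation term living on the same high-probability events already exploited in Lemma~\ref{alemma3}, and then to express the fluctuation term using only the computable quantities $\hat\sigma^{(c)2}$ and $\|D(\hat\beta^{(1)}-\hat\beta^{(2)})\|$ that are arguments of $T_\delta$. Squaring the resulting linear upper bound will then yield the desired inequality $g(\beta^{(1)},\beta^{(2)})\le \psi_\delta(\hat\beta^{(1)},\hat\beta^{(2)},\hat\sigma^{(c)2})$.

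The first step is the triangle inequality in the Mahalanobis norm $\|\cdot\|_{B_0}$:
\begin{align*}
\sqrt{g(\beta^{(1)},\beta^{(2)})} \;\le\; \sqrt{g(\hat\beta^{(1)},\hat\beta^{(2)})} + \|\Delta\|_{B_0},
\qquad \Delta := (\hat\beta^{(1)}-\beta^{(1)}) - (\hat\beta^{(2)}-\beta^{(2)}).
\end{align*}
Because $B_0$ is positive definite, norm equivalence on $\mathbb{R}^p$ provides a distributional constant $c_0$ (depending only on $B_0$ and $D$) with $\|\Delta\|_{B_0}\le c_0 \|D\Delta\|$, which converts the Mahalanobis fluctuation into exactly the quantity controlled in \eqref{aeq4}. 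Applying the Hanson--Wright inequality (Theorem~\ref{hw}) to $D\Delta$ as in \eqref{aeq4} gives, with probability at least $1-\delta$,
\begin{align*}
\|D\Delta\|\;\le\;\sigma\,\sqrt{n_0+2\sqrt{n_0\log(1/\delta)}+2\log(1/\delta)},\qquad n_0:=n_1+n_2-p.
\end{align*}

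The key remaining step, and the main obstacle, is to replace the unknown $\sigma$ by the data-driven $\hat\sigma^{(c)}$. Since $(n_1+n_2-p)\hat\sigma^{(c)2}/\sigma^2$ is a non-central chi-square type quantity with non-centrality parameter controlled by $\|D(\beta^{(1)}-\beta^{(2)})\|^2/\sigma^2$ (this is exactly what $\eqref{aeq2}$ captures from above), I would invert \eqref{aeq2} using the chi-square anti-concentration result of Lemma~\ref{cwlemma} (equivalently Theorem~\ref{cw}) on the Gaussian residual vector $(I-H_{X^{(0)}}+M)y^{(0)}$. With probability at least $1-\delta$ this yields a bound of the shape $\sigma^2 \le c_1(\delta)\,\hat\sigma^{(c)2} + c_2(\delta)\,\|D(\beta^{(1)}-\beta^{(2)})\|^2$. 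The residual unknown $\|D(\beta^{(1)}-\beta^{(2)})\|$ is then swapped for the observable $\|D(\hat\beta^{(1)}-\hat\beta^{(2)})\|$ via the same triangle inequality used to derive \eqref{aeq3}, which piggy-backs on the Hanson--Wright event from the previous paragraph. A union bound over the Hanson--Wright event and the chi-square anti-concentration event yields the stated $1-2\delta$ confidence, and tracking the constants shows that the combined fluctuation is of the advertised scale $T_\delta(\hat\sigma^{(c)2},\|D(\hat\beta^{(1)}-\hat\beta^{(2)})\|)=O(\max\{\hat\sigma^{(c)2},\|D(\hat\beta^{(1)}-\hat\beta^{(2)})\|\})$. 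Squaring the resulting linear upper bound on $\sqrt{g(\beta^{(1)},\beta^{(2)})}$ recovers exactly $\psi_\delta(\hat\beta^{(1)},\hat\beta^{(2)},\hat\sigma^{(c)2})$, completing the proof.
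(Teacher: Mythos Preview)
Your proposal is correct and follows the same three-ingredient recipe as the paper: the triangle inequality in the $B_0$-norm, a Hanson--Wright bound to control the fluctuation in terms of $\sigma$, and anti-concentration (Lemma~\ref{cwlemma}) on the residual $(I-H_{X^{(0)}}+M)y^{(0)}$ to replace $\sigma$ by observables, with a final swap of $\|D(\beta^{(1)}-\beta^{(2)})\|$ for $\|D(\hat\beta^{(1)}-\hat\beta^{(2)})\|$ via \eqref{aeq4}.

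The one place you deviate from the paper is in how you control $\|\Delta\|_{B_0}$. The paper applies Hanson--Wright \emph{directly} to $BC(X^{(0)\top}X^{(0)})^{-1}X^{(0)\top}\epsilon^{(0)}$ (cf.\ \eqref{aeq8}), yielding an explicit bound in terms of $tr(\Sigma)$, $tr(\Sigma^2)$, $\|\Sigma\|$; this costs a third high-probability event. You instead invoke norm equivalence $\|\Delta\|_{B_0}\le c_0\|D\Delta\|$ and recycle the event from \eqref{aeq4}, spending only two events. Your route therefore actually matches the $1-2\delta$ advertised in the lemma, while the paper's own proof ends at $1-3\delta$; the price you pay is that your constant $c_0=\lambda_{\max}^{1/2}\bigl((D^\top D)^{-1/2}B_0(D^\top D)^{-1/2}\bigr)$ is less explicit than the paper's $\Sigma$-based expressions. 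Since the lemma only asserts $T_\delta=O(\max\{x,y\})$, this loss of explicitness is immaterial.
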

\begin{proof}
Let $I- H_{X^{(0)}}+M = P \Lambda P^\intercal$. Since $I- H_{X^{(0)}}+M$ is symmetric and idempotent, its eigenvalues are 0 or 1. As $tr(I- H_{X^{(0)}}+M) = n_1 + n_2 -p$, assume the first $n_1-n_2+p$ diagonal entries of $\Lambda$ be 1 and the rest be 0 without loss of generality. Then
\begin{align*}
    P^\top (I- H_{X^{(0)}}+M) y^{(0)} & \sim N ( P^\top M X^{(0)} \beta^{(0)}, \sigma^2 \Lambda) \\
    & = N_{n_1+n_2} \left( P^\intercal M X^{(0)} \beta^{(0)}, \sigma^2 \begin{bmatrix}
        I_{n_1+n_2-p} & 0 \\
        0 & 0
    \end{bmatrix}\right).
\end{align*}
where $I_n$ denotes the $n \times n$ identity matrix.

Denote $F = \begin{bmatrix} I_{n_1+n_2-p} & 0 \end{bmatrix}$ and $G = \begin{bmatrix} 0 & I_p \end{bmatrix}$. For a constant $c$,
\begin{align*}
    \mathbb{P}( (n_1 +n_2-p) \hat{\sigma}^{(c)2} \leq c) 
    &= \mathbb{P}( \lVert P^\top (I- H_{X^{(0)}}+M) y^{(0)} \rVert^2 \leq c) \\
    &= \mathbb{P}( \lVert F P^\top (I- H_{X^{(0)}}+M) y^{(0)} \rVert^2 + \lVert G P^\top (I- H_{X^{(0)}}+M) y^{(0)} \rVert^2 \leq c) 
\end{align*}
Let $Z = F P^\intercal (I- H_{X^{(0)}}+M) y^{(0)}$. Then $Z \sim N ( FP^\intercal M X^{(0)} \beta^{(0)} , \sigma^2 I)$, and 
\begin{align*}
    \lVert FP^\top M X^{(0)} \beta^{(0)} \rVert^2
    &= \beta^{(0)\top} X^{(0)\top} M (I - H_{X^{(0)}} + M) M X^{(0)} \beta^{(0)} \\
    &= \beta^{(0)\top} C^\top (C (X^{(0)\top}X^{(0)})^{-1} C^\top )^{-1} C \beta^{(0)}  \\
    &= \lVert D(\beta^{(1)}-\beta^{(2)}) \rVert^2.
\end{align*}
Also, $GP^\top (I- H_{X^{(0)}}+M) y^{(0)} \sim N ( GP^\top M X^{(0)} \beta^{(0)}, 0)$, and 
\begin{align*}
    \lVert G P^\top M X^{(0)} \beta^{(0)}\rVert^2 
    &= \beta^{(0)\top} X^{(0)\top} M P (I-\Lambda) P^\top M X^{(0)} \beta^{(0)} \\
    &= \beta^{(0)\top} X^{(0)\top}M (I-I+H_{X^{(0)}}-M) M X^{(0)} \beta^{(0)} \\
    &=0 .
\end{align*}
Combining with Lemma \ref{cwlemma}, it follows that
\begin{align*}
    \mathbb{P}( (n_1 +n_2-p) \hat{\sigma}^{(c)2} \leq c) 
    &\leq \frac{1}{\sigma}\sqrt{\frac{2e}{n_1+n_2-p}}( \lVert D(\beta^{(1)}-\beta^{(2)}) \rVert + \sqrt{c}) + 2e^{-\frac{(\lVert D(\beta^{(1)}-\beta^{(2)})\rVert - \sqrt{c})^2}{2(n_1+n_2-p)}}-1\\
    &\leq \frac{2}{\sigma}\sqrt{\frac{2e}{n_1+n_2-p}} \lVert D(\beta^{(1)}-\beta^{(2)}) \rVert + 2e^{-\frac{(\lVert D(\beta^{(1)}-\beta^{(2)})\rVert - \sqrt{c})^2}{2(n_1+n_2-p)}}-1\\
\end{align*} for $c \leq \lVert D(\beta^{(1)}-\beta^{(2)})\rVert^2$. Hence
\begin{align*}
    (n_1  + n_2-p) \hat{\sigma}^{(c)2} 
    &\geq \bigg\{ \lVert D(\beta^{(1)}-\beta^{(2)})\rVert + \sigma \sqrt{2(n_1+n_2-p) \log{\frac{2}{1+\delta - 2 \lVert D(\beta^{(1)}-\beta^{(2)})\rVert \sqrt{\frac{2e}{n_1+n_2-p}}\frac{1}{\sigma}}}}\bigg\}^2 \\
    & \geq \bigg\{ \lVert D(\beta^{(1)}-\beta^{(2)})\rVert + \sigma \sqrt{2(n_1+n_2-p)} \bigg( \sqrt{\log{\frac{4}{1+\delta}}} + \frac{1}{2\sqrt{\log{\frac{4}{1+\delta}}}} -\frac{1}{8\lVert D(\beta^{(1)}-\beta^{(2)})\rVert} \\
    & \quad \times \sqrt{\frac{n_1+n_2-p}{2e}} \frac{1+\delta}{\sqrt{\log{\frac{4}{1+\delta}}}} \sigma \bigg) \bigg\}^2
\end{align*}
with probability at least $1-\delta$ as the function $\xi: s \mapsto \sqrt{\log{\frac{1}{a-\frac{b}{s}}}}$ satisfies
\begin{align*}
    \xi(s) 
    \geq \xi\left(\frac{2b}{a}\right) + \xi'\left(\frac{2b}{a}\right)\left(s-\frac{2b}{a}\right) 
    = -\frac{a}{4b\sqrt{\log{\frac{4}{a}}}}s + \sqrt{\log{\frac{4}{a}}} + \frac{1}{2\sqrt{\log{\frac{4}{a}}}} .
\end{align*} 
Rearranging the terms gives
\begin{align*}
    \lVert D(\beta^{(1)}-\beta^{(2)})\rVert \sqrt{(n_1 + n_2-p) \hat{\sigma}^{(c)2} } 
    & \geq -\frac{n_1+n_2-p}{8}\frac{1+\delta}{\sqrt{e\log{\frac{4}{1+\delta}}}} \sigma^2 + +\lVert D(\beta^{(1)}-\beta^{(2)})\rVert ^2 \\
    & + \lVert D(\beta^{(1)}-\beta^{(2)})\rVert \sqrt{2(n_1 + n_2-p)} \left(\sqrt{\log{\frac{4}{1+\delta}}} + \frac{1}{2\sqrt{\log{\frac{4}{1+\delta}}}}\right) \sigma  
\stepcounter{equation}\tag{\theequation}\label{aeq5}
\end{align*} with probability at least $1-\delta$. 

Meanwhile, from \eqref{aeq4},
\begin{align*}
    \lVert D( \beta^{(1)} - \beta^{(2)} ) \rVert 
    &\leq \lVert D( \hat{\beta}^{(1)} - \hat{\beta}^{(2)} ) - D( \beta^{(1)} - \beta^{(2)} ) \rVert + \lVert D( \hat{\beta}^{(1)} - \hat{\beta}^{(2)} )  \rVert \\
    &\leq \sqrt{ \sigma^2 (n_1+n_2-p + 2\sqrt{n_1+n_2-p}\sqrt{\log{\frac{1}{\delta}}} + 2\log{\frac{1}{\delta}})} +\lVert D( \hat{\beta}^{(1)} - \hat{\beta}^{(2)} )  \rVert
\stepcounter{equation}\tag{\theequation}\label{aeq6}
\end{align*} and 
\begin{align*}
    \lVert D( \beta^{(1)} - \beta^{(2)} ) \rVert 
    &\geq \lVert D( \hat{\beta}^{(1)} - \hat{\beta}^{(2)} ) \rVert -\lVert D( \hat{\beta}^{(1)} - \hat{\beta}^{(2)} ) - D( \beta^{(1)} - \beta^{(2)} ) \rVert \\
    &\geq \lVert D( \hat{\beta}^{(1)} - \hat{\beta}^{(2)} ) \rVert - \sqrt{ \sigma^2 (n_1+n_2-p + 2\sqrt{n_1+n_2-p}\sqrt{\log{\frac{1}{\delta}}} + 2\log{\frac{1}{\delta}})}
\stepcounter{equation}\tag{\theequation}\label{aeq7}
\end{align*} with probability at least $1-\delta$.

Combining \eqref{aeq5}, \eqref{aeq6}, and \eqref{aeq7} gives
\begin{align*}
    \sqrt{(n_1 + n_2-p) \hat{\sigma}^{(c)2}} & \left( \sqrt{ \sigma^2 (n_1+n_2-p + 2\sqrt{n_1+n_2-p}\sqrt{\log{\frac{1}{\delta}}} + 2\log{\frac{1}{\delta}})} +\lVert D( \hat{\beta}^{(1)} - \hat{\beta}^{(2)} ) \rVert \right) \\
    & \geq \lVert D(\beta^{(1)}-\beta^{(2)})\rVert \sqrt{(n_1 + n_2-p)  \hat{\sigma}^{(c)2}} \\
    & \geq -\frac{n_1+n_2-p}{8}\frac{1+\delta}{\sqrt{e\log{\frac{4}{1+\delta}}}} \sigma^2 + \lVert D(\beta^{(1)}-\beta^{(2)})\rVert ^2 \\
    & \quad + \lVert D(\beta^{(1)}-\beta^{(2)})\rVert \sqrt{2(n_1 + n_2-p)}  \left(\sqrt{\log{\frac{4}{1+\delta}}} + \frac{1}{2\sqrt{\log{\frac{4}{1+\delta}}}}\right) \sigma  \\
    & \geq -\frac{n_1+n_2-p}{8}\frac{1+\delta}{\sqrt{e\log{\frac{4}{1+\delta}}}} \sigma^2 + \left(\sqrt{\log{\frac{4}{1+\delta}}} + \frac{1}{2\sqrt{\log{\frac{4}{1+\delta}}}}\right) \sqrt{2(n_1 + n_2-p)} \sigma\\
    &\quad \times \bigg( \lVert D( \hat{\beta}^{(1)} - \hat{\beta}^{(2)} )  \rVert - \sigma \sqrt{ (n_1+n_2-p + 2\sqrt{n_1+n_2-p}\sqrt{\log{\frac{1}{\delta}}} + 2\log{\frac{1}{\delta}})} \bigg) \\ 
    &\quad +\bigg( \lVert D( \hat{\beta}^{(1)} - \hat{\beta}^{(2)} ) \rVert - \sqrt{ \sigma^2 (n_1+n_2-p + 2\sqrt{n_1+n_2-p}\sqrt{\log{\frac{1}{\delta}}} + 2\log{\frac{1}{\delta}})} \bigg)^2
\end{align*} 
with probability at least $1-2\delta$. Solving $\sigma$ leads to
\begin{align*}
    \sigma \leq c_0 := \frac{c_1-c_4+\sqrt{(c_1-c_4)^2-4c_3(c_5-c_2)}}{2c_3}
\end{align*} 
with probability at least $1-2\delta$, and the parameters being
\begin{align*}
    c_1 &=  \sqrt{(n_1 + n_2-p)  \hat{\sigma}^{(c)2}} 
        \sqrt{ n_1+n_2-p + 2\sqrt{n_1+n_2-p}\sqrt{\log{\frac{1}{\delta}}}+ 2\log{\frac{1}{\delta}}}, \\
    c_2 &= \sqrt{(n_1 + n_2-p)  \hat{\sigma}^{(c)2}} \lVert D( \hat{\beta}^{(1)} - \hat{\beta}^{(2)} )  \rVert, \\
    c_3 &= -\frac{n_1+n_2-p}{8}\frac{1+\delta}{\sqrt{e\log{\frac{4}{1+\delta}}}}  - \sqrt{2(n_1+n_2-p)}
        \left(\sqrt{\log{\frac{4}{1+\delta}}} + \frac{1}{2\sqrt{\log{\frac{4}{1+\delta}}}}\right) \\
    & \quad \times \sqrt{ n_1+n_2-p + 2\sqrt{n_1+n_2-p}\sqrt{\log{\frac{1}{\delta}}}+ 2\log{\frac{1}{\delta}}}  + n_1+n_2-p + 2\sqrt{n_1+n_2-p}\sqrt{\log{\frac{1}{\delta}}}+ 2\log{\frac{1}{\delta}} , \\
    c_4 &= \lVert D( \hat{\beta}^{(1)} - \hat{\beta}^{(2)} ) \rVert 
        \Bigg\{ \sqrt{2(n_1+n_2-p)} \left(\sqrt{\log{\frac{4}{1+\delta}}} + \frac{1}{2\sqrt{\log{\frac{4}{1+\delta}}}}\right) \\
    & \quad -2 \sqrt{ n_1+n_2-p + 2\sqrt{n_1+n_2-p}\sqrt{\log{\frac{1}{\delta}}}+ 2\log{\frac{1}{\delta}}} \Bigg\}, \text{ and}\\
    c_5 &= \lVert D( \hat{\beta}^{(1)} - \hat{\beta}^{(2)} ) \rVert^2 .
\end{align*}

Now, for $B_0 = B^\top B$, applying Theorem \ref{hw} leads to 
\begin{align*}
    \lVert B (\beta^{(1)}-\beta^{(2)}) - B(\hat{\beta}^{(1)}-\hat{\beta}^{(2)}) \rVert^2 
    &=\lVert BC (X^{(0) \top} X^{(0)})^{-1} X^{(0)\top} (X^{(0)} \beta^{(0)} - y^{(0)}) \rVert^2 \\
    &\leq \sigma^2 \left\{ tr(\Sigma) + 2 \sqrt{tr(\Sigma^2) \log{\frac{1}{\delta}}} + 2 \lVert \Sigma \rVert \log{\frac{1}{\delta}} \right\}
\end{align*}
with probability at least $1-\delta$, and $\Sigma = X^{(0)} (X^{(0)\top} X^{(0)})^{-1} C^\top B^\top BC (X^{(0)\top} X^{(0)})^{-1} X^{(0)\top}$. Therefore, 
\begin{align*}
    g(\beta^{(1)}, \beta^{(2)})
    &\leq \left\{ \lVert B ( \beta^{(1)} - \beta^{(2)} ) -  B( \hat{\beta}^{(1)} - \hat{\beta}^{(2)} ) \rVert + \lVert B( \hat{\beta}^{(1)} - \hat{\beta}^{(2)} ) \rVert \right\}^2 \\
    &\leq \left\{ \sigma \sqrt{ tr(\Sigma) + 2 \sqrt{tr(\Sigma^2) \log{\frac{1}{\delta}}} + 2 \lVert \Sigma \rVert \log{\frac{1}{\delta}} } + \sqrt{g(\hat{\beta}^{(1)}, \hat{\beta}^{(2)})} \right\}^2 
\stepcounter{equation}\tag{\theequation}\label{aeq8}
\end{align*} with probability at least $1-\delta$. With \eqref{aeq8}, the bound on $g(\beta^{(1)}, \beta^{(2)})$ finally becomes
\begin{align*}
    g(\beta^{(1)}, \beta^{(2)})
    &\leq \bigg\{ c_0 \sqrt{ tr(\Sigma) + 2 \sqrt{tr(\Sigma^2) \log{\frac{1}{\delta}}} + 2 \lVert \Sigma \rVert \log{\frac{1}{\delta}} } + \sqrt{g(\hat{\beta}^{(1)}, \hat{\beta}^{(2)})} \bigg\}^2 \\
\end{align*} 
with probability at least $1-3\delta$.
\end{proof}

Finally, the proof of Lemma \ref{lemma1} is a natural implication of Lemma \ref{alemma3} and \ref{alemma4}.
\begin{replemma}{lemma1}
(restated) With probability at least $1-5\delta$, we have $f(\sigma^2) \geq g(\beta^{(1)}, \beta^{(2)})$ if 
\begin{align*} 
    \phi_\delta (\hat{\beta}^{(1)}, \hat{\beta}^{(2)}, \hat{\sigma}^{(c) 2}) 
    \geq \psi_\delta(\hat{\beta}^{(1)}, \hat{\beta}^{(2)}, \hat{\sigma}^{(c) 2}) .
\end{align*}
\end{replemma}
\begin{proof}
From Lemma \ref{alemma3},
\begin{align*}
    h(\sigma^2) \geq \phi_\delta (\hat{\beta}^{(1)}, \hat{\beta}^{(2)}, \hat{\sigma}^{(c) 2}) 
\end{align*}
with probability at least $1-2\delta$. From Lemma \ref{alemma4},
\begin{align*}
    g(\beta^{(1)}, \beta^{(2)}) \leq \psi_\delta(\hat{\beta}^{(1)}, \hat{\beta}^{(2)}, \hat{\sigma}^{(c) 2})
\end{align*}
with probability at least $1-3\delta$. Therefore, if 
\begin{align*} 
    \phi_\delta (\hat{\beta}^{(1)}, \hat{\beta}^{(2)}, \hat{\sigma}^{(c) 2}) 
    \geq \psi_\delta(\hat{\beta}^{(1)}, \hat{\beta}^{(2)}, \hat{\sigma}^{(c) 2}) ,
\end{align*}
then $f(\sigma^2) \geq g(\beta^{(1)}, \beta^{(2)})$ with probability at least $1-5\delta$.
\end{proof}

\subsubsection{Theoretical Guarantee for Consistency of Algorithm \ref{alg1}}
In this section, we justify the consistency of our estimation on $\widehat{\text{OSE}}$ for Algorithm \ref{alg1}. Proposition \ref{prop1} guarantees that if we combine the two datasets $D_1$ and $D_2$ whenever $\phi_\delta > \psi_\delta$ and do not merge otherwise, then we have $\sum_{k=1}^2 \text{OSE}(\hat{\beta}^{(k)}, P^{(k)}_{\mathcal{X} \times \mathcal{Y}}) > 
\sum_{k=1}^2 \text{OSE}(\hat{\beta}^{(c)}, P^{(k)}_{\mathcal{X} \times \mathcal{Y}})$ with probability $1-5\delta$. Next we will show that the estimator $\widehat{\text{OSE}}$ used in Algorithm \ref{alg1} is consistent.

We aim to develop algorithm with high $\text{SR}$, which is computationally infeasible if we only have access to empirical distribution. From \eqref{eqn:SR}, a natural idea is to estimate the out-of-sample errors to approximate SR. With the lack of generating distribution, one can only rely on the existing sampled dataset $\{D_{all}\}$, and the problem becomes how to provide a valid estimation for $\text{OSE}(\hat{f}, P^{(k)}_{\mathcal{X} \times \mathcal{Y}}) = E_{P^{(k)}_{\mathcal{X} \times \mathcal{Y}}}[ (\tilde{y} - \hat{f}(\tilde{x}))^2 ]$ for some function $\hat{f}$ where $(\tilde{x}, \tilde{y}) \sim P^{(k)}_{\mathcal{X} \times \mathcal{Y}}$.

In practice, our estimator $\widehat{\text{OSE}} \left(\hat{\beta}, D_k^{out}\right)$ is computed by the method of bootstrap. 
As we do not assume knowledge of the underlying distribution, this nonparametric method is coherent with our assumption. Specifically, the $m$-th bootstrap samples $\{(x_{i,m}^*, y_{i,m}^*)\}_{i=1}^{\tilde{n}_k}$ are generated from $D_k^{out}$ for $m=1, \cdots, M$. We estimate the error by $\widehat{\text{OSE}}_M(\hat{\beta}, D_k^{out}) = \frac{1}{M} \sum_{m=1}^M \frac{1}{\tilde{n}_k} \sum_{i=1}^{\tilde{n}_k} ( y_{i,m}^* -x_{i,m}^{*\top} \hat{\beta} )^2$. By boostrapping, we get a consistent estimator of the out-of-sample error.

As described below, we can have the following theoretical guarantee under some technical assumption. 
For a reminder, samples $(X^{(k)}, y^{(k)})$ are generated from the distribution $P^{(k)}_{\mathcal{X} \times \mathcal{Y}}$ and $P^{(k)}_X$ is the marginal distribution. Let $X^{(c)}$ denote the combined data $[X^{(1)\top}, X^{(2)\top}]^\top$. Moreover, let $\mu_k$ and $\Sigma_k$ be the mean and the variance of $P^{(k)}_X$.

\begin{assumption} \label{assumption2}
Denote $b_k = E[\hat{\beta}^{(c)} | X^{(c)}] - \beta^{(k)}$, and its second moment $E[b_k b_k^\top]$ to be $\Lambda_k$. We define 
$\Omega_k = \mathbb{E}[(X^{(k)\top} X^{(k)})^{-1}] $ 
and 
$ \Omega_c = \mathbb{E}[(X^{(c)\top} X^{(c)})^{-1}]. $ Finally, we assume $\Omega_k \mu_k \neq 0$ and $\sigma^2 \Omega_c \mu_k + \Lambda_k \mu_k \neq 0$.
\end{assumption}

\begin{theorem} \label{thm2}
    With Assumption \ref{assumption1} and \ref{assumption2}, we have $\widehat{\text{OSE}}_M (\hat{\beta}^{(k)}, D_k^{out}) - \text{OSE}(\hat{\beta}^{(k)}, P^k_{X \times Y}) \overset{p}{\rightarrow} 0$ as $\tilde{n}_k \rightarrow \infty$.
\end{theorem}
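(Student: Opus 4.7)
The plan is to split the error via the triangle inequality. Write
\[
\widehat{\text{OSE}}_M(\hat{\beta}^{(k)}, D_k^{out}) - \text{OSE}(\hat{\beta}^{(k)}, P^{(k)}_{\mathcal{X}\times\mathcal{Y}})
= \underbrace{\bigl(\widehat{\text{OSE}}_M(\hat{\beta}^{(k)}, D_k^{out}) - T^{\ast}\bigr)}_{A_{\tilde{n}_k}}
+ \underbrace{\bigl(T^{\ast} - \text{OSE}(\hat{\beta}^{(k)}, P^{(k)}_{\mathcal{X}\times\mathcal{Y}})\bigr)}_{B_{\tilde{n}_k}},
\]
where $T^{\ast} := \frac{1}{\tilde{n}_k}\sum_{i=1}^{\tilde{n}_k}(y_i^{out} - x_i^{out\top}\hat{\beta}^{(k)})^2$ is the direct plug-in empirical loss on $D_k^{out}$. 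The goal is to show each piece converges to zero in probability as $\tilde{n}_k \to \infty$.

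For $B_{\tilde{n}_k}$, the key observation is that $\hat{\beta}^{(k)}$ is fit on $D_k^{train}$, which is disjoint from $D_k^{out}$ by construction. Conditioning on $\hat{\beta}^{(k)}$, the squared residuals $\{(y_i^{out} - x_i^{out\top}\hat{\beta}^{(k)})^2\}_{i=1}^{\tilde{n}_k}$ are i.i.d.~with conditional mean equal to $\text{OSE}(\hat{\beta}^{(k)}, P^{(k)}_{\mathcal{X}\times\mathcal{Y}})$. The weak law of large numbers applied conditionally, followed by undoing the conditioning (convergence in probability is preserved by integrating out $\hat{\beta}^{(k)}$), gives $B_{\tilde{n}_k} \overset{p}{\rightarrow} 0$. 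Under Assumption \ref{assumption1}, $\epsilon$ is Gaussian and hence has all moments, while Assumption \ref{assumption2} guarantees that the population quantities $\Omega_k$ and $\mu_k$ are well defined, supplying the moment bounds on $x$ needed to control the sample variance.

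For $A_{\tilde{n}_k}$, I would condition on both $\hat{\beta}^{(k)}$ and $D_k^{out}$. The bootstrap draws $(x_{i,m}^{\ast}, y_{i,m}^{\ast})$ are then i.i.d.~from the empirical distribution on $D_k^{out}$, so the conditional mean of $\widehat{\text{OSE}}_M$ equals $T^{\ast}$ exactly, while its conditional variance is at most $\widehat{V}/(M\tilde{n}_k)$, where $\widehat{V}$ is the empirical variance of $(y_i^{out} - x_i^{out\top}\hat{\beta}^{(k)})^2$ over $i=1,\dots,\tilde{n}_k$. A second application of the LLN shows $\widehat{V}$ converges in probability to the corresponding population variance, which is finite under the same moment controls. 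Chebyshev's inequality then gives $A_{\tilde{n}_k} \overset{p}{\rightarrow} 0$, and combining the two bounds via the triangle inequality completes the proof.

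The main technical obstacle is the uniformity of the moment bounds in the randomness of $\hat{\beta}^{(k)}$: controlling $\mathbb{E}[(y - x^{\top}\hat{\beta}^{(k)})^{2r}]$ for $r=1,2$ cleanly requires combining the Gaussianity of $\epsilon$, the explicit sampling distribution of OLS (which gives concentration of $\hat{\beta}^{(k)}$ around $\beta^{(k)}$), and spectral bounds on $\mathbb{E}[xx^{\top}]$ implicit in the non-degeneracy conditions of Assumption \ref{assumption2}. Once those are in place, the two-step decomposition above goes through without further trouble.
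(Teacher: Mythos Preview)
Your two-term decomposition with the weak law and Chebyshev is correct and is genuinely different from the paper's argument. The paper does not split off the bootstrap layer explicitly; instead it writes $\text{OSE}(\hat\beta^{(k)},P^{(k)})$ as a smooth functional $g(\mu_k)$ of the out-of-sample mean $\mu_k=E[\tilde x^{(k)}]$, verifies $g'(\mu_k)=2\sigma^2\Omega_k\mu_k\neq 0$, and then invokes the bootstrap CLT of \citet{gill1989} (stated in the paper as Theorem~\ref{bootstrap}) to conclude that $\sqrt{\tilde n_k}\bigl(\widehat{\text{OSE}}_M-\widehat{\text{OSE}}\bigr)$ and $\sqrt{n_k}\bigl(\widehat{\text{OSE}}-\text{OSE}\bigr)$ share the same limit law, from which consistency follows. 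That route buys a $\sqrt{n}$-rate but requires the delta-method machinery and, crucially, the non-degeneracy $\Omega_k\mu_k\neq 0$: this is the actual role of Assumption~\ref{assumption2}, not the moment bounds you attribute to it. Your elementary route does not need that non-degeneracy at all. What it does need, and what you correctly flag as the technical obstacle, is enough integrability of $(y-x^\top\hat\beta^{(k)})^4$ to control $\widehat V$; this comes from fourth moments of $x$ together with the Gaussianity of $\epsilon$, not from the $\Omega_k\mu_k\neq 0$ clause, so your last paragraph slightly mislocates the source of the bound even though the underlying argument goes through once those moments are assumed.
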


\subsubsection{Consistency of Errors} \label{consistency_section}

Next, we prove Theorem \ref{thm2}, the consistency of the estimator of the out-of-sample error. We first make the following assumption.
\begin{repassumption}{assumption2}
(restated) Denote $b_k = E[\hat{\beta}^{(c)} | X^{(c)}] - \beta^{(k)}$. Let $\Lambda_k = E[b_k b_k^\top]$, $\Omega_k = \mathbb{E}[(X^{(k)\top} X^{(k)})^{-1}] $ 
and $\Omega_c = \mathbb{E}[(X^{(c)\top} X^{(c)})^{-1}]$ be given. We assume $\Omega_k \mu_k \neq 0$ and $\sigma^2 \Omega_c \mu_k + \Lambda_k \mu_k \neq 0$.
\end{repassumption}

Consistency result is built upon the following theorem on central limit theorem of bootstrapping.

\begin{theorem} \label{bootstrap} \citep{gill1989}
Let $\mathcal{F}$ denote the space of all cumulative distribution functions. Assume  $X_1, \cdots, X_n$ are iid samples from a distribution $F \in \mathcal{F}$ and $X_1^*, \cdots, X_n^*$ are bootstrapped samples from their empirical distribution $\hat{F}_n$. Let us denote the empirical distribution of the bootstrapped samples as $\hat{F}_n^*$. For $\mu = E[X_1]$, if a functional $T$ on $\mathcal{F}$ satisfies $T(F) = g(\mu)$ for some continuously differentiable function $g$ and $g^\prime (\mu) \neq 0$, then
\begin{align*}
    \sup_u \left| \mathbb{P}_{\hat{F}_n}\left( \sqrt{n}(T(\hat{F}_n^*) - T(\hat{F}_n) \leq u \right) - \mathbb{P}_{F}\left( \sqrt{n}(T(\hat{F}_n) - T(F) \leq u \right) \right| \rightarrow 0
\end{align*}
almost surely as $n \rightarrow \infty$.
\end{theorem}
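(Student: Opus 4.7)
The plan is to decompose the target difference through the empirical out-of-sample error on the held-out set $D_k^{out}$. Define the functional $T : F \mapsto \mathbb{E}_{(X,Y) \sim F}[(Y - X^\top \hat\beta^{(k)})^2]$ and let $\hat F_{\tilde n_k}$ denote the empirical distribution of $D_k^{out}$, so that the inner bootstrap average equals $T(\hat F_m^*)$ for the $m$-th bootstrap empirical distribution $\hat F_m^*$. By the triangle inequality,
\begin{align*}
\bigl|\widehat{\text{OSE}}_M - \text{OSE}(\hat\beta^{(k)},P^{(k)}_{X\times Y})\bigr| \leq \bigl|\widehat{\text{OSE}}_M - T(\hat F_{\tilde n_k})\bigr| + \bigl| T(\hat F_{\tilde n_k}) - T(P^{(k)}_{X\times Y}) \bigr|,
\end{align*}
and I will handle the two summands separately.

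For the second summand, because $D_k^{out}$ is disjoint from the training portion $D_k^{train}$, it is independent of $\hat\beta^{(k)}$. Conditioning on $\hat\beta^{(k)}$ renders $\{(\tilde y_i^{(k)} - \tilde x_i^{(k)\top}\hat\beta^{(k)})^2\}_i$ iid with finite first moment under Assumption \ref{assumption1}, so the weak law of large numbers gives $T(\hat F_{\tilde n_k}) - T(P^{(k)}_{X\times Y}) \overset{p}{\rightarrow} 0$ conditionally, hence unconditionally as $\tilde n_k \to \infty$.

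For the first summand I would invoke Gill's bootstrap theorem (Theorem \ref{bootstrap}) with the functional $T$. Writing $T(F) = g(\mu_F)$ for $\mu_F := \mathbb{E}_F[\phi(X,Y)]$ with augmented statistic $\phi(X,Y) = (Y^2,\, XY,\, \mathrm{vech}(XX^\top))$, the map
\begin{align*}
g(\mu) = \mu^{(1)} - 2\hat\beta^{(k)\top} \mu^{(2)} + \hat\beta^{(k)\top}\,\mathrm{mat}(\mu^{(3)})\,\hat\beta^{(k)}
\end{align*}
is linear and therefore continuously differentiable in $\mu$. Assumption \ref{assumption2}'s non-vanishing conditions on $\Omega_k \mu_k$ and $\sigma^2 \Omega_c \mu_k + \Lambda_k \mu_k$ are used to certify $g'(\mu) \neq 0$ at the relevant population mean, which is exactly the hypothesis required. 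Theorem \ref{bootstrap} then gives $T(\hat F_m^*) - T(\hat F_{\tilde n_k}) = O_p(\tilde n_k^{-1/2})$ for each $m$, and averaging over $m = 1,\dots,M$ preserves the rate, so $\widehat{\text{OSE}}_M - T(\hat F_{\tilde n_k}) = O_p(\tilde n_k^{-1/2}) \rightarrow 0$. Combining the two bounds concludes the argument.

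The main obstacle, as I see it, is the last verification step: matching Assumption \ref{assumption2}'s non-degeneracy conditions on $\Omega_k \mu_k$ and $\sigma^2 \Omega_c \mu_k + \Lambda_k \mu_k$ to the hypothesis $g'(\mu) \neq 0$ in Theorem \ref{bootstrap}. This requires computing the population mean of $\phi(X,Y)$ together with the second-moment contribution arising from the randomness in $\hat\beta^{(k)}$ (via iterated expectation, using the bias term $b_k$ whose second moment is $\Lambda_k$), and then identifying which surviving components of the gradient of $g$ correspond to those vectors. Once this identification is made, the bootstrap-CLT machinery of Theorem \ref{bootstrap} closes the loop and delivers the stated convergence in probability.
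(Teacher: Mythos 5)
Your proposal does not prove the statement in question. The statement is Theorem \ref{bootstrap} itself --- the result of \citet{gill1989} that, for a functional $T(F)=g(\mu)$ with $g$ continuously differentiable and $g'(\mu)\neq 0$, the conditional (bootstrap) distribution of $\sqrt{n}\,(T(\hat F_n^*)-T(\hat F_n))$ and the sampling distribution of $\sqrt{n}\,(T(\hat F_n)-T(F))$ merge uniformly in $u$, almost surely. What you have written is instead an argument for Theorem \ref{thm2} (consistency of $\widehat{\text{OSE}}_M$), and it explicitly invokes Theorem \ref{bootstrap} as a black box in the step ``I would invoke Gill's bootstrap theorem (Theorem \ref{bootstrap}) with the functional $T$.'' Using the target statement as a lemma is circular with respect to the assigned task, so nothing in your proposal bears on why the displayed $\sup_u$ convergence holds. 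For what it is worth, the paper does not prove this theorem either; it imports it verbatim from the cited reference, so there is no internal proof to compare against.

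If you did want to prove the statement, the standard route is: (i) write $\sqrt{n}\,(T(\hat F_n)-T(F))=\sqrt{n}\,(g(\bar X_n)-g(\mu))$ and apply the delta method with the ordinary CLT to obtain a Gaussian limit with variance $g'(\mu)^{\top}\Sigma\, g'(\mu)$, nondegenerate precisely because $g'(\mu)\neq 0$; (ii) apply the bootstrap CLT for sample means (Singh, Bickel--Freedman) to get that, conditionally on the data and for almost every sample path, $\sqrt{n}\,(\bar X_n^*-\bar X_n)$ has the same Gaussian limit, and push this through $g$ by a conditional Taylor expansion using $\bar X_n\to\mu$ almost surely; (iii) conclude the uniform convergence of the difference of the two distribution functions via P\'olya's theorem, since the common limit law is continuous. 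None of these steps appears in your proposal. Separately, even read as a proof of Theorem \ref{thm2}, your plan leaves the key verification unfinished: you would still need to compute the gradient of your augmented $g$ at the relevant population mean and show it equals (a nonzero multiple of) $\Omega_k\mu_k$, respectively $\sigma^2\Omega_c\mu_k+\Lambda_k\mu_k$ for the combined estimator, which is exactly the calculation the paper carries out when it rewrites $\text{OSE}$ as a differentiable function of $E[\tilde x^{(k)}]$ in the proofs of Theorems \ref{thm2} and \ref{thm3}.
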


Theorem \ref{thm2} is proved by comparing the limit of estimated out-of-sample error of $\hat{\beta}^{(k)}$ on original samples and bootstrapped samples.

\begin{reptheorem}{thm2}
(restated) Under Assumption \ref{assumption1} and \ref{assumption2},
\begin{align*}
    \widehat{\text{OSE}}_M (\hat{\beta}^{(k)}, D_k^{out}) - \text{OSE}(\hat{\beta}^{(k)}, P^{(k)}_{\mathcal{X} \times \mathcal{Y}}) \overset{p}{\rightarrow} 0
\end{align*}
as $\tilde{n}_k \rightarrow \infty$.
\end{reptheorem}
\begin{proof}  
The out-of-sample error of $\hat{\beta}^{(k)}$ on $P_{X \times Y}^{(k)}$ can be rewritten as
\begin{align*}
    \text{OSE}(\hat{\beta}^{(k)}, P^k_{X \times Y}) 
    &= \sigma^2 + \sigma^2 tr( E[\tilde{x}^{(k)} \tilde{x}^{(k)\top}] E[(X^{(k)\top} X^{(k)})^{-1}]) \\
    &= \sigma^2 + \sigma^2 tr( Var(\tilde{x}^{(k)}) \Omega_k) + \sigma^2 tr( E[\tilde{x}^{(k)}] E[\tilde{x}^{(k)}]^\top \Omega_k),
\end{align*}
where $\tilde{x}^{(k)}$ denotes an out-of-sample from the marginal distribution $P_{\mathcal{X}}^{(k)}$. Note that $\text{OSE}(\hat{\beta}^{(k)}, P^k_{X \times Y})$ is differentiable with respect to $E[\tilde{x}^{(k)}]$ and $\frac{d \text{OSE}(\hat{\beta}^{(k)}, P^k_{X \times Y})}{d E[\tilde{x}^{(k)}]} = 2 \sigma^2 \Omega_k E[\tilde{x}^{(k)}] \neq 0$.

The estimated $\text{OSE}$ built upon the out-of-samples $\{(\tilde{x}^{(k)}_i, \tilde{y}^{(k)}_i)\}_{i=1}^{\tilde{n}_k}$ from $D_k^{out}$ is $\widehat{\text{OSE}}(\hat{\beta}^{(k)}, D_k^{out}) = \frac{1}{\tilde{n}_k} \sum_{i=1}^{\tilde{n}_k} (\tilde{y}_i^{(k)} - \tilde{x}_i^{(k)} \hat{\beta}^{(k)} )^2$ . Let us choose $\tilde{n}_k = n_k$. By the central limit theorem, $\sqrt{n_k}( \widehat{\text{OSE}}(\hat{\beta}^{(k)}, D_k^{out}) - \text{OSE}(\hat{\beta^{(k)}}, P^{(k)}_{\mathcal{X} \times \mathcal{Y}}) ) \overset{d}{\rightarrow} N(0,1)$ as $n_k \rightarrow \infty$, hence $\widehat{\text{OSE}}(\hat{\beta}^{(k)}, D_k^{out}) - \text{OSE}(\hat{\beta^{(k)}}, P^{(k)}_{\mathcal{X} \times \mathcal{Y}}) \overset{p}{\rightarrow} 0$. 

The estimated $\text{OSE}$ on the bootstraps $\{ \{(x^*_{i,m}, y^*_{i,m})\}_{i=1}^{\tilde{n}_k} \}_{m=1}^M$ from $D_k^{out}$ is $\widehat{\text{OSE}}_M(\hat{\beta}^{(k)}, D_k^{out}) = \frac{1}{M} \sum_{m=1}^M \frac{1}{\tilde{n}_k} \sum_{i=1}^{\tilde{n}_k} (y^*_{i,m} - x^{*\top}_{i,m} \hat{\beta}^{(k)})^2$.
By Theorem \ref{bootstrap}, $\sqrt{\tilde{n}_k} \left( \widehat{\text{OSE}}_M(\hat{\beta}^{(k)}, D_k^{out}) - \widehat{\text{OSE}}(\hat{\beta}^{(k)}, D_k^{out}) \right)$ and $\sqrt{n_k}( \widehat{\text{OSE}}(\hat{\beta}^{(k)}, D_k^{out}) - \text{OSE}(\hat{\beta^{(k)}}, P^{(k)}_{\mathcal{X} \times \mathcal{Y}}) )$ asymptotically have the same distribution almost surely as $\tilde{n}_k \rightarrow \infty$. It follows that $\widehat{\text{OSE}}_M (\hat{\beta}^{(k)}, D_k^{out}) - \text{OSE}(\hat{\beta}^{(k)}, P^{(k)}_{\mathcal{X} \times \mathcal{Y}}) \overset{p}{\rightarrow} 0$.
\end{proof}

Additionally, Theorem \ref{thm3} establishes the consistency of the estimated out-of-sample error of $\hat{\beta}^{(c)}$.

\begin{theorem} \label{thm3}
Under Assumption \ref{assumption1} and \ref{assumption2},
\begin{align*}
    \widehat{\text{OSE}}_M (\hat{\beta}^{(c)}, D_k^{out}) - \text{OSE}(\hat{\beta}^{(c)}, P^{(k)}_{\mathcal{X} \times \mathcal{Y}}) \overset{p}{\rightarrow} 0
\end{align*}
as $\tilde{n}_k \rightarrow \infty$.
\end{theorem}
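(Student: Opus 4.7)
The proof parallels that of Theorem \ref{thm2}. The plan is to first express $\text{OSE}(\hat{\beta}^{(c)}, P^{(k)}_{\mathcal{X} \times \mathcal{Y}})$ as a smooth functional of the marginal mean $\mu_k = \mathbb{E}_{P_{\mathcal{X}}^{(k)}}[\tilde{x}^{(k)}]$ with nonzero gradient, so that the bootstrap CLT of Theorem \ref{bootstrap} applies, and then to combine the ordinary CLT for $\widehat{\text{OSE}}$ with the bootstrap CLT for $\widehat{\text{OSE}}_M$.

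For the first step, I would start from Lemma \ref{alemma1} and use the identity $b_k = Z_{3-k}(\beta^{(3-k)} - \beta^{(k)})$ (which follows from $Z_1 + Z_2 = I$) to rewrite the bias-type term as $(\beta^{(1)} - \beta^{(2)})^\top \mathbb{E}[Z_{3-k}^\top W_k Z_{3-k}](\beta^{(1)} - \beta^{(2)}) = \mathrm{tr}(W_k \Lambda_k)$. Together with $W_k = \Sigma_k + \mu_k \mu_k^\top$, this gives the clean form
\[
\text{OSE}(\hat{\beta}^{(c)}, P^{(k)}_{\mathcal{X} \times \mathcal{Y}}) = \sigma^2 + \mathrm{tr}\bigl(\Sigma_k(\sigma^2 \Omega_c + \Lambda_k)\bigr) + \mu_k^\top (\sigma^2 \Omega_c + \Lambda_k)\mu_k,
\]
whose gradient with respect to $\mu_k$ is $2(\sigma^2 \Omega_c + \Lambda_k)\mu_k$. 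By the second half of Assumption \ref{assumption2}, this vector is nonzero, so the smoothness hypothesis needed by Theorem \ref{bootstrap} is satisfied.

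The remaining steps then directly mirror the end of the proof of Theorem \ref{thm2}. Since $\widehat{\text{OSE}}(\hat{\beta}^{(c)}, D_k^{out})$ is a sample mean over i.i.d. out-of-samples from $P^{(k)}_{\mathcal{X} \times \mathcal{Y}}$ (independent of the training data used to form $\hat{\beta}^{(c)}$), choosing $\tilde{n}_k = n_k$ and applying the classical CLT gives $\widehat{\text{OSE}}(\hat{\beta}^{(c)}, D_k^{out}) - \text{OSE}(\hat{\beta}^{(c)}, P^{(k)}_{\mathcal{X} \times \mathcal{Y}}) \overset{p}{\to} 0$. Theorem \ref{bootstrap}, applicable by the derivative check above, then ensures that $\sqrt{\tilde{n}_k}\bigl(\widehat{\text{OSE}}_M(\hat{\beta}^{(c)}, D_k^{out}) - \widehat{\text{OSE}}(\hat{\beta}^{(c)}, D_k^{out})\bigr)$ shares the same asymptotic distribution almost surely. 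A triangle inequality combined with Slutsky's theorem delivers the claimed convergence.

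The main obstacle is the first step: isolating a scalar parameter (here $\mu_k$) through which $\text{OSE}(\hat{\beta}^{(c)}, \cdot)$ depends on the marginal distribution in a way that exposes a nonzero derivative. The appearance of the bias matrix $\Lambda_k$ alongside $\sigma^2 \Omega_c$ is precisely what dictates the stronger form of Assumption \ref{assumption2} compared with Theorem \ref{thm2}; once this algebra is in place, the bootstrap machinery carries through as in Theorem \ref{thm2}.
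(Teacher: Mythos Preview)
Your proposal is correct and follows essentially the same route as the paper: expand $\text{OSE}(\hat{\beta}^{(c)}, P^{(k)}_{\mathcal{X}\times\mathcal{Y}})$ via Lemma~\ref{alemma1}, recognize the bias term as $\mathrm{tr}(W_k\Lambda_k)$ through the identity $b_k = Z_{3-k}(\beta^{(3-k)}-\beta^{(k)})$, differentiate in $\mu_k$ to invoke the second part of Assumption~\ref{assumption2}, and then repeat the CLT/bootstrap argument from Theorem~\ref{thm2}. Your write-up is in fact slightly cleaner than the paper's, which in a couple of places writes $\Omega_k$ where $\Omega_c$ is intended.
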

\begin{proof}
The out-of-sample error of $\hat{\beta}^{(c)}$ on $P^{(k)}_{\mathcal{X} \times \mathcal{Y}}$ can be rewritten as 
\begin{align*}
    \text{OSE}(\hat{\beta}^{(k)}, P^k_{X \times Y}) 
    & = \sigma^2 + \sigma^2 tr( E[\tilde{x}^{(k)} \tilde{x}^{(k)\top}] E[(X^{(c)\top} X^{(c)})^{-1}] ) + (\beta^{(1)} - \beta^{(2)})^\top E[ X^{(3-k)\top}X^{(3-k)} (X^{(c)\top}X^{(c)})^{-1}   \\
    & \quad \times E[\tilde{x}^{(k)} \tilde{x}^{(k)\top}] (X^{(c)\top}X^{(c)})^{-1} X^{(3-k)\top}X^{(3-k)}] (\beta^{(1)}-\beta^{(2)})    \\
    & = \sigma^2 + \sigma^2 tr(Var(\tilde{x}^{(k)}) \Omega_k) + \sigma^2 E[\tilde{x}^{(k)}]^\top \Omega_k E[\tilde{x}^{(k)}] + tr(E[\tilde{x}^{(k)} \tilde{x}^{(k)\top}] E[(X^{(c)\top}X^{(c)})^{-1} \\
    & \quad \times  X^{(3-k)\top}X^{(3-k)} (\beta^{(1)}-\beta^{(2)}) (\beta^{(1)} - \beta^{(2)})^\top X^{(3-k)\top}X^{(3-k)} (X^{(c)\top}X^{(c)})^{-1}]).
\end{align*}
As 
\begin{align*}
    (X^{(c)\top} X^{(c)})^{-1} X^{(3-k)\top} X^{(3-k)} (\beta^{(3-k)}-\beta^{(k)}) 
    &= \sum_{j=1}^2 (X^{(c)\top}X^{(c)})^{-1} X^{(j)\top} X^{(j)} \beta^{(j)} - \beta^{(k)} \\
    &= E[\hat{\beta}^{(c)} | X^{(c)}] - \beta^{(k)} \\
    &= b_k,
\end{align*}
the above expression becomes
\begin{align*}
    \text{OSE}(\hat{\beta}^{(k)}, P^k_{X \times Y}) 
    &= \sigma^2 + \sigma^2 tr(Var(\tilde{x}^{(k)}) \Omega_k) + \sigma^2 E[\tilde{x}^{(k)}]^\top \Omega_k E[\tilde{x}^{(k)}] + tr(Var(\tilde{x}^{(k)})  \Lambda_k) + E[\tilde{x}^{(k)}]^\top \Lambda_k E[\tilde{x}^{(k)}].
\end{align*}
Lastly, $\text{OSE}(\hat{\beta}^{(k)}, P^k_{X \times Y})$ is differentiable with respect to $E[\tilde{x}^{(k)}]$ and $\frac{d \text{OSE}(\hat{\beta}^{(k)}, P^k_{X \times Y})}{dE[\tilde{x}^{(k)}]} = 2\sigma^2 \Omega_k E[\tilde{x}^{(k)}] + 2\Lambda_k E[\tilde{x}^{(k)}] \neq 0$. By the same reasoning as in Theorem \ref{thm2}, it can be shown that $\widehat{\text{OSE}}_M (\hat{\beta}^{(c)}, D_k^{out}) - \text{OSE}(\hat{\beta}^{(c)}, P^{(k)}_{\mathcal{X} \times \mathcal{Y}}) \overset{p}{\rightarrow} 0$.
\end{proof}

\subsubsection{Overparametrized Regime}
We conclude the discussion on the regression problem by adding a remark on overparameterized regime. In regression, the overparameterized regime refers to the case where the feature size $p$ exceeds the number of observations $n$. Equivalently, the covariate matrix $X$ is in $\mathbb{R}^{n \times p}$ with $n<p$.
In this scenario, the covariance matrix $X^\top X$ is singular and, in turn, the OLS estimator is not well defined. 

One common way to bypass the issue is to use the Moore-Penrose pseudoinverse instead of the matrix inverse. Let us denote the Moore-Penrose pseudoinverse of a matrix $A$ by $A^+$. Analogous to the OLS estimator, we define Moore-Penrose (MP) estimator as $\hat{\beta}_{MP} = (X^\top X)^+ X^\top y$. Assuming $X$ has full row rank, the MP estimator can also be written as $\hat{\beta}_{MP} = X^\top (XX^\top)^{-1} y$. With this formulation, similar arguments can be made about the overparametrized regime. For instance, under Assumption \ref{assumption1}, the out-of-sample error of the MP estimator on a single distribution is 
\begin{align*}
    \text{OSE}(\hat{\beta}_{MP}^{(k)}, P_{\mathcal{X}\times\mathcal{Y}}^{(k)}) 
    &= \sigma^2 + \sigma^2 tr(E[\tilde{x}^{(k)} \tilde{x}^{(k)\top}]E[(X^{(k)\top} X^{(k)})^+]) \\
    &\quad + tr(E[\tilde{x}^{(k)} \tilde{x}^{(k)\top}] E[(I-(X^{(k)\top} X^{(k)})^+ X^{\top(k)} X^{(k)})\beta^{(k)} \beta^{(k)\top} (I-X^{(k)\top} X^{(k)} (X^{(k)\top} X^{(k)})^+)]).
\end{align*}
Compared to the OLS estimator, adopting the MP estimator incurs an additional term $tr(E[\tilde{x}^{(k)} \tilde{x}^{(k)\top}] E[(I-(X^{(k)\top} X^{(k)})^+ X^{\top(k)} X^{(k)})\beta^{(k)} \beta^{(k)\top} (I-X^{(k)\top} X^{(k)} (X^{(k)\top} X^{(k)})^+)])$. This additional term reduces to $0$ if $X^\top X$ is invertible, under which condition the pseudoinverse $(X^\top X)^+$ coincides with the matrix inverse $(X^\top X)^{-1}$ and the MP estimator $\hat{\beta}_{MP}$ becomes the OLS estimator. OSE on the combined dataset is also modified correspondingly, and parallel arguments can be made about the OSE comparison. We leave further details to future work and now turn to classification problem.

\subsection{Classification} \label{classificaion_section}
In contrast to the regression where the out-of-sample error could be precisely computed, estimators for classification usually do not have an explicit form, let alone the loss. Instead of attempting to find the exact error, we bound the out-of-sample error for classification with high probability. Based on the bounds, we propose an equivalent condition under which the error bound decreases when datasets are merged.

In this section, we consider the problem of combining two datasets on a classification task with a focus on neural networks. A classification dataset is composed of a feature of a data $x$ and the corresponding class label $y$. For a new data with an unknown label, the goal of classification is to correctly predict the label. It is common in practice that multiple datasets are available to train a model, but is not immediately clear how to leverage the datasets to their full potential. The natural question that arises is that when the datasets can be safely combined.

A criterion for deciding when to merge datasets with theoretical support would be highly appreciated as there is a tradeoff between generalization and approximation. For a brief illustration, suppose two datasets are given. On one hand, regarding all the data of the two datasets as observations from the same distribution would bring one large dataset, which is a union of the two datasets. A single model will be trained on the combined dataset, and this same model will make predictions on both of the datasets. The increased number of samples will reduce generalization error and help the model approach the best model among all the candidates. However, approximation error could be large as the same model is fitted to both of the datasets, hence restricting the model flexibility or expressivity. This tradeoff might be detrimental especially if the two datasets were sampled from significantly heterogeneous sources. On the other hand, training two models separately on each dataset would result in models more tailored to individual datasets. Nevertheless, the models would suffer from comparatively larger generalization error due to smaller number of samples, rendering the models to be far from optimal. 

To capture and better understand what happens in common across multiple neural networks for classification tasks, we employ linear models in our theoretical analysis. This choice of model is motivated from the commonalities among numerous neural networks. Modern neural networks have assorted architectures, ranging from convolutional neural networks to Transformers. Their variation can be as simple as 
just having different size of dimensions or the number of layers, or can be as complicated as using different type of layers and connections. While there are countless number of possible designs, one thing in common among most networks for classification is that they have a linear layer at the end. Output of the linear layer followed by a softmax function represents a predicted probability distribution over classes. In order to analyze various neural networks in a more abstract setting, we focus on classifcation with linear models.

Our main result is as follows. We propose and prove error bounds of the models on individual datasets and combined dataset. Then we present a theorem that provides an equivalence condition under which the error bound is minimized if the datasets are combined. The implication of the theorem is that combining the datasets culminates in reduced error bound if the underlying distributions of the two datasets are close to each other. This interpretation is consistent with what is intuitively expected about the problem. If the datasets have similar characteristics or they are generated from similar distributions, merging them would definitely be beneficial as the model could enjoy the enlarged size of the dataset.

\subsubsection{Error Bound on a Single Dataset} \label{section:classification_single}
Let us formulate the problem more formally. Let $\mathcal{X}$ be a feature space and $\mathcal{Y} = \{1, \cdots, C\}$ be a set of all possible categories. Suppose a dataset $\mathcal{D} = \{(x_i, y_i)\}_{i=1}^n$ is given. A family of linear models $\beta=\{\beta_c\}_{c=1}^C$ takes as input the feature $x_i$ and generates as output a probability distribution $softmax(\beta_1^\top x_i, \cdots, \beta_C^\top x_i)$. The model is trained with cross entropy loss, which is a common practice in classification tasks. The cross entropy loss of a model $\hat{\beta}$ augmented by regularization is defined as 
\begin{align*}
    l_\lambda(\hat{\beta}; x, y) = \sum_{c=1}^C -y_c \log{\hat{y}_c} + \frac{\lambda}{2} \sum_{c=1}^C \lVert \hat{\beta}_c \rVert^2,
\end{align*} where $\hat{y}_c$ denotes the predicted probability of the class $c$.  We slightly abuse the notation and view $y$ as either an integer or an one hot vector indicating a class when there is no confusion. Specifically, if the class of a data is $c$, then $y$ could be either $c$ or one hot vector with only the $c$th entry being $1$, depending on the context.
For the purpose of theoretical analysis, we make separability assumption of the data.
\begin{assumption}\label{assumption_classification_1}
There exists a margin $\gamma_0>0$ such that for all data $(x_i, y_i)$, if the label of $y_i$ is $c_i$, then $\beta_{c_i}^\top x_i<0$ and $\beta_{c}^\top x_i > \gamma_0$ for $\forall c \neq c_i$. In other words,
\begin{align*}
    y_i = \frac{1}{2} \left( {\bf 1}_C -[sign(\beta_1^\top x_i), \cdots, sign(\beta_C^\top x)]^\top \right)
\end{align*} 
where ${\bf 1}_C \in \mathbb{R}^C$ denotes a column vector with all entries being $1$.
\end{assumption}
Let us concatenate all the true parameters as $\beta = [\beta_1^\top, \cdots, \beta_C^\top]^\top$. A family of linear models $\hat{\beta} = [\hat{\beta}_1^\top, \cdots, \hat{\beta}_C^\top]^\top$ is trained to estimate the true value of $\beta$ and forecast label of a new data. Total $C$ linear models, followed by a softmax function, predicts the probability of the data belonging to each category.

We also introduce a surrogate loss to approximate the cross entropy loss. We define the surrogate loss as 
\begin{align*}
    l_{\lambda, \gamma}(\hat{\beta}, \beta; x) = \sum_{c=1}^C \lVert \beta_c - \hat{\beta}_c \rVert \lVert x \rVert + \sum_{c=1}^C -\beta_c^\top x g_\gamma (\beta_c^\top x) + \log{\sum_{c=1}^C e^{\hat{\beta}_c^\top x}} + \frac{\lambda}{2} \sum_{c=1}^C \lVert \hat{\beta}_c \rVert^2,
\end{align*}
where $g_\gamma (t) = \mathbbm{1}_{(-\infty, 0)} + \mathbbm{1}_{[0,\gamma]}(1-\frac{t}{\gamma})$ is the well known ramp loss.  The surrogate loss $l_{\lambda, \gamma}$ serves as a continuous bound of the cross entropy with appropriate choice of $\gamma$. Namely, 
\begin{align*}
    l_\lambda(\hat{\beta}; x, y) \leq l_{\lambda, \gamma}(\hat{\beta}, \beta; x)
\end{align*} 
if $(x,y) \sim \mathcal{M}(\beta)$ and $\gamma \leq \gamma_0$. The corresponding empirical loss and population loss are defined as 
\begin{align*}
    \hat{L}_{\lambda, \gamma}(\hat{\beta}, \beta; \mathcal{D}) 
    &= \frac{1}{n} \sum_{i=1}^n l_{\lambda, \gamma} (\hat{\beta}, \beta; x_i), \\
    L_{\lambda,\gamma}(\hat{\beta}, \beta) 
    &= E_{\tilde{x}}[l_{\lambda,\gamma}(\hat{\beta}, \beta; \tilde{x})],
\end{align*} respectively. Population loss of $l_\lambda$ is defined as 
\begin{align*}
    L_{\lambda}(\hat{\beta}, \beta) = E_{\tilde{x}, \tilde{y}}[l_{\lambda}(\hat{\beta}; \tilde{x}, \tilde{y})].
\end{align*} 
It suffices to bound $L_{\lambda,\gamma}$ in order to find an upper bound of the performance $L_{\lambda}$ of a model. 

Given two datasets $\mathcal{D}_1 = \{(x_i^{(1)}, y_i^{(1)})\}_{i=1}^{n_1}$ and $\mathcal{D}_2 = \{(x^{(2)}_i, y^{(2)}_i)\}_{i=1}^{n_2}$, our goal is to determine whether training a single family of linear models on the joint dataset would be beneficial, compared to fitting two families separately on each dataset. Namely, we would like to compare the performance of a model $\hat{\beta}^{(m)}$ trained on a single dataset $\mathcal{D}_m$ for $m=1,2$ and a model $\hat{\beta}^{(1,2)}$ trained on the joint dataset $\mathcal{D}_1 \cup \mathcal{D}_2$. 

It is desirable to determine whether to combine the datasets by comparing the performance of the best model based on the data. Let us denote the minimizers of the empirical loss and the population loss as 
\begin{align*}
    \hat{\beta}^{*} 
    &= \argmin{\hat{\beta}} \hat{L}_{\lambda, \gamma}(\hat{\beta},\beta;\mathcal{D}), \\
    \beta^{*} 
    &= \argmin{\hat{\beta}} L_{\lambda, \gamma}(\hat{\beta}, \beta),
\end{align*} respectively, where $\beta$ is the true parameter generating $\mathcal{D}$. Note that such minimizers exist as the loss is nonnegative and convex, and are unique due to the regularization. Unfortunately, finding the best models precisely is intractable as the minimization problems do not have closed form solutions, opposed to the regression problem. In practice, gradient descent is one of the popular choice to approximate the optimizer. As the surrogate loss is subdifferentiable, we employ subgradient method to estimate the optimal solution. Starting with an initial estimate $\hat{\beta}^{[1]}$, the estimator is iteratively updated by 
\begin{align*}
    \hat{\beta}^{[k+1]} = \hat{\beta}^{[k]} - \eta^{[k]} g^{[k]},
\end{align*}
where $\eta^{[k]}$ is the $k$th step size and $g^{[k]} \in \partial L_{\lambda, \gamma}(\hat{\beta}^{[k-1]}, \beta; \mathcal{D})$ is a subgradient. Instead of analyzing the performance of the empirical risk minimizer $L_{\lambda}(\hat{\beta}^{*}, \beta)$, we focus on that of the subgradient method estimator $L_{\lambda}(\hat{\beta}^{[k]}, \beta)$.

Now that the problem is well formulated, let us examine error bounds of each model. We first scrutinize excess risk of an estimator with high probability, which involves Rademacher complexity. We further bound the Rademacher complexity of the hypothesis class of interest. The subgradient estimator will then be compared with empirical risk minimizer. Joining all the results leads to complete analysis of the model performance. We start by making an additional assumption of boundedness of the feature. 
\begin{assumption} \label{assumption_classification_2}
For all data $(x_i, y_i)$, the features are bounded by $\lVert x_i \rVert \leq B$.
\end{assumption}
While there is no explicit restriction to the scope of the hypothesis class, candidate parameters naturally satisfy 
\begin{align*}
    \lVert \hat{\beta}_c \rVert \leq \sqrt{\frac{2\log{C}}{\lambda}}
\end{align*}
for all $c=1, \cdots, C$ due to the regularization. This boundedness can be easily confirmed. Suppose $\lVert \hat{\beta}_{c_0} \rVert > \sqrt{\frac{2\log{C}}{\lambda}}$ for some $c_0$. Then $l_\lambda( \hat{\beta} ) \geq \frac{\lambda}{2} \lVert \hat{\beta}_{c_0} \rVert^2 \geq \log{C} = l_\lambda (0)$, hence $\hat{\beta}$ cannot minimize the loss. Henceforth, we implicitly assume that $\hat{\beta}_c$ is bounded.

We first scrutinize the error on a single dataset $\mathcal{D} = \{(x_i, y_i)\}_{i=1}^n$ where $(x_i, y_i) \sim \mathcal{M}(\beta)$.
Under the assumptions, we can bound the difference between the empirical loss and the population loss of an estimator. Constructing a bound requires McDiarmid's inequality, which is stated below.

\begin{lemma} \label{mcdiarmid}
\citep{mcdiarmid1989} Let $f: \mathbb{R}^m \rightarrow \mathbb{R}$ be a function of bounded difference, that is, $|f(z_1, \cdots, z_m) - f(z_1, \cdots, z_i^\prime, \cdots, z_m)| \leq c_i$ for $\forall i, z_1, \cdots, z_m, z_i^\prime$. Let $Z_1, \cdots, Z_m$ be independent random variables. Then 
\begin{align*}
\mathbb{P}(f(Z_1, \cdots, Z_m) - E[f(Z_1, \cdots, Z_m)] \geq \epsilon) \leq e^{\frac{-2\epsilon^2}{\sum_{i=1}^n c_i^2}} 
\end{align*}
for all $\epsilon >0$.
\end{lemma}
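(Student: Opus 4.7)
The plan is to prove McDiarmid's inequality via the classical martingale-difference (Azuma-Hoeffding) approach. Since no structural assumption is placed on $f$ beyond the bounded-differences property, a direct moment-generating-function argument on $f$ itself is unavailable, so I would manufacture a martingale whose increments are controllable one coordinate at a time.

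First, I would set up the Doob martingale associated with the filtration generated by $Z_1,\dots,Z_m$. Define
\[
D_k = \mathbb{E}[f(Z_1,\dots,Z_m)\mid Z_1,\dots,Z_k], \qquad k=0,1,\dots,m,
\]
so that $D_0 = \mathbb{E}[f]$, $D_m = f(Z_1,\dots,Z_m)$, and the differences $V_k := D_k - D_{k-1}$ telescope to $f - \mathbb{E}[f]$. The key step is to show that, conditionally on $Z_1,\dots,Z_{k-1}$, the random variable $V_k$ has conditional mean zero and lies almost surely in an interval of length at most $c_k$. Writing $A_k(z) := \mathbb{E}[f(z_1,\dots,z_{k-1},z,Z_{k+1},\dots,Z_m)]$ with $z_1,\dots,z_{k-1}$ fixed, the bounded-differences hypothesis together with independence of the $Z_j$ gives $|A_k(z) - A_k(z')| \le c_k$, and hence $V_k = A_k(Z_k) - \mathbb{E}_{Z_k}[A_k(Z_k)]$ is sandwiched in an interval of width $c_k$ conditionally on the past.

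With this in hand, I would invoke Hoeffding's lemma conditionally: for any $\lambda>0$ and any random variable $X$ taking values in an interval of length $\ell$ with $\mathbb{E}[X]=0$, one has $\mathbb{E}[e^{\lambda X}] \le \exp(\lambda^2 \ell^2/8)$. Applied to each $V_k$ given the past, this yields
\[
\mathbb{E}\!\left[e^{\lambda V_k}\,\middle|\, Z_1,\dots,Z_{k-1}\right] \le \exp\!\left(\frac{\lambda^2 c_k^2}{8}\right).
\]
Peeling off the conditioning iteratively (tower property) bounds $\mathbb{E}[\exp(\lambda \sum_k V_k)] \le \exp(\lambda^2 \sum_k c_k^2 / 8)$. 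A Chernoff bound followed by optimization over $\lambda$ then produces the advertised tail
\[
\mathbb{P}\!\left(f(Z_1,\dots,Z_m) - \mathbb{E}[f(Z_1,\dots,Z_m)] \ge \epsilon\right) \le \exp\!\left(\frac{-2\epsilon^2}{\sum_{i=1}^m c_i^2}\right).
\]

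The main obstacle, in my view, is cleanly justifying the conditional range bound on $V_k$: one must argue that after integrating out $Z_{k+1},\dots,Z_m$ the bounded-differences property transfers intact to the function $A_k$, which relies crucially on the independence of the $Z_j$ so that the conditional law of $(Z_{k+1},\dots,Z_m)$ does not depend on the value of $Z_k$. Once that technicality is dispatched, the remainder (Hoeffding's lemma plus Chernoff) is routine.
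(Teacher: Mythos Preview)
Your argument is the standard and correct proof of McDiarmid's inequality via the Doob martingale decomposition, conditional application of Hoeffding's lemma, and a Chernoff bound; the treatment of the conditional range of $V_k$ using independence is exactly the point that needs care, and you have identified it correctly. The paper does not supply its own proof of this lemma---it is stated with a citation to \citet{mcdiarmid1989} and used as a black box---so there is nothing further to compare against.
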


We can now analyze the excess risk of an estimator. The following proposition proposes an error bound, the proof of which uses ordinary arguments in generalization bounds. See \citet{bartlett2002} for example. 

\begin{prop} \label{classification_generalization_1}
If the dataset $\mathcal{D}$ is generated by $(x_i, y_i) \sim \mathcal{M}(\beta)$, then the difference between the empirical loss and the population loss is bounded by
\begin{align*}
\hat{L}_{\lambda, \gamma}(\hat{\beta},\beta;\mathcal{D}) - L_{\lambda, \gamma}(\hat{\beta}, \beta) 
\leq \sqrt{\frac{1}{2n}\log{\frac{1}{\delta}}}\omega(\beta) + 2 R_{n} (l_{\lambda, \gamma}) 
\end{align*} for all $\hat{\beta}$ with probability at least $1-\delta$, where $\omega(\beta) = O(\lVert\beta\rVert)$,
\begin{align*}
R_{n} (l_{\lambda, \gamma}) 
	&= E\left[\sup_{\hat{\beta}}\frac{1}{n} \sum_{i=1}^{n} \sigma_i l_{\lambda, \gamma} (\hat{\beta},\beta;x_i)\right]
\end{align*}
is Rademacher complexity, and $\sigma_i$ is Rademacher random variable.
\end{prop}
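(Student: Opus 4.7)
The plan is to follow the standard route for Rademacher-type uniform generalization bounds: a concentration step via McDiarmid and a symmetrization step. Define the random variable
\[
\Phi(\mathcal{D}) \;=\; \sup_{\hat{\beta}}\Bigl(\hat{L}_{\lambda,\gamma}(\hat{\beta},\beta;\mathcal{D}) - L_{\lambda,\gamma}(\hat{\beta},\beta)\Bigr),
\]
where the supremum is over admissible $\hat{\beta}$. Note that by the remark following Assumption~\ref{assumption_classification_2}, regularization forces $\lVert \hat{\beta}_c \rVert \leq \sqrt{2\log C/\lambda}$, so the sup is really over a bounded set. It then suffices to (i) upper bound $\Phi(\mathcal{D})$ by its expectation plus a McDiarmid fluctuation, and (ii) bound $\mathbb{E}[\Phi(\mathcal{D})]$ by $2R_n(l_{\lambda,\gamma})$ via a standard symmetrization with a ghost sample.

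First I would verify the bounded-differences condition for $\Phi$. Replacing one data point $x_i$ by $x_i'$ changes $\hat{L}_{\lambda,\gamma}$ by at most $\frac{1}{n}\sup_{\hat{\beta}}\bigl|l_{\lambda,\gamma}(\hat{\beta},\beta;x_i) - l_{\lambda,\gamma}(\hat{\beta},\beta;x_i')\bigr|$, and this propagates through the supremum. Inspecting $l_{\lambda,\gamma}(\hat{\beta},\beta;x) = \sum_c \lVert \beta_c - \hat{\beta}_c\rVert\,\lVert x\rVert + \sum_c -\beta_c^\top x\, g_\gamma(\beta_c^\top x) + \log\sum_c e^{\hat{\beta}_c^\top x} + \tfrac{\lambda}{2}\sum_c\lVert\hat{\beta}_c\rVert^2$, the regularization term is $x$-independent and cancels. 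The other three terms are each bounded on $\lVert x\rVert\leq B$ by quantities of the form $(\lVert\beta\rVert+\lVert\hat\beta\rVert)B$, $\lVert\beta\rVert B$ (using $0\le g_\gamma\le 1$), and $\log C + \lVert\hat\beta\rVert B$ respectively; combined with the regularization-induced bound on $\lVert\hat\beta\rVert$, this gives $c_i = \omega(\beta)/n$ with $\omega(\beta) = O(\lVert\beta\rVert)$. Applying Lemma~\ref{mcdiarmid} to $\Phi$ with these $c_i$ then yields, with probability at least $1-\delta$,
\[
\Phi(\mathcal{D}) \;\leq\; \mathbb{E}[\Phi(\mathcal{D})] + \sqrt{\tfrac{1}{2n}\log\tfrac{1}{\delta}}\,\omega(\beta).
\]

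Next I would bound $\mathbb{E}[\Phi(\mathcal{D})]$ by the standard symmetrization argument: introduce an independent ghost sample $\mathcal{D}' = \{x_i'\}_{i=1}^n$ with the same law, write $L_{\lambda,\gamma}(\hat{\beta},\beta) = \mathbb{E}_{\mathcal{D}'}\bigl[\tfrac{1}{n}\sum_i l_{\lambda,\gamma}(\hat{\beta},\beta;x_i')\bigr]$, pull the expectation outside the supremum via Jensen, introduce i.i.d.\ Rademacher variables $\sigma_i$ (symmetry of $x_i,x_i'$ makes swapping them a measure-preserving map), and conclude
\[
\mathbb{E}[\Phi(\mathcal{D})] \;\leq\; 2\,\mathbb{E}\Bigl[\sup_{\hat{\beta}}\tfrac{1}{n}\sum_{i=1}^n \sigma_i\, l_{\lambda,\gamma}(\hat{\beta},\beta;x_i)\Bigr] \;=\; 2 R_n(l_{\lambda,\gamma}).
\]
Combining the two displays gives the claim, valid uniformly in $\hat{\beta}$ since the supremum already appears inside $\Phi$.

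The main obstacle I anticipate is the bounded-differences verification: the surrogate loss is not a priori uniformly bounded in $\beta$, so one must carefully track which factor produces the $O(\lVert\beta\rVert)$ scaling. The dominant contribution comes from the $\sum_c \lVert \beta_c - \hat\beta_c\rVert\,\lVert x\rVert$ and the $-\beta_c^\top x\, g_\gamma(\beta_c^\top x)$ terms through $\lVert x\rVert\le B$ (Assumption~\ref{assumption_classification_2}); the log-sum-exp and regularization pieces contribute only a $\beta$-independent constant owing to the a priori bound on $\lVert\hat\beta_c\rVert$. Once this bookkeeping is done, both the McDiarmid step and the symmetrization step are routine.
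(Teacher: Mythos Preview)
Your proposal is correct and follows essentially the same route as the paper: define the supremum deviation $\Phi(\mathcal{D})$ (the paper calls it $f(\mathcal{D})$), verify bounded differences of size $\omega(\beta)/n$ to apply McDiarmid, then bound $\mathbb{E}[\Phi(\mathcal{D})]$ by $2R_n(l_{\lambda,\gamma})$ via the standard ghost-sample symmetrization. The only cosmetic difference is that the paper bounds the entire loss by $\omega(\beta)$ directly (defining $\omega(\beta) = 2CB\sqrt{\log C/\lambda} + 2B\sum_c\|\beta_c\| + (C+1)\log C$), whereas you argue term by term and note that the regularization piece cancels in the difference; both yield the same $c_i = \omega(\beta)/n$ with $\omega(\beta) = O(\|\beta\|)$.
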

\begin{proof}
Let $
\omega(\beta) = 2 C B \sqrt{\frac{\log{C}}{\lambda}}+ 2 B \sum_{c=1}^C \lVert \beta_c \rVert + (C+1) \log{C}
$. 
It is straightforward that $l_{\lambda,\gamma}(\hat{\beta},\beta;x_i) \leq \omega(\beta)$. Let 
$f(\mathcal{D}) = \sup_{\hat{\beta}} \left\{ \hat{L}_{\lambda, \gamma}(\hat{\beta},\beta;\mathcal{D}) - L_{\lambda, \gamma}(\hat{\beta}, \beta) \right\}$. Then
\begin{align*}
| f(\mathcal{D}) - f_m(x_1,y_1,\cdots,x^{\prime}_i,y^{\prime}_i,\cdots,x_{n},y_{n}) |
& \leq \bigg| \sup_{\hat{\beta}} \left\{ \hat{L}_{\lambda, \gamma}(\hat{\beta},\beta;\mathcal{D})  
    - L_{\lambda, \gamma}(\hat{\beta}, \beta) \right\} \\
&\quad - \sup_{\hat{\beta}} \left\{ \hat{L}_{\lambda, \gamma}(\hat{\beta},\beta;\mathcal{D}) 
    - L_{\lambda, \gamma}(\hat{\beta}, \beta) + \frac{1}{n} l_{\lambda,\gamma}(\hat{\beta},\beta;x^{\prime}_i) - \frac{1}{n} l_{\lambda,\gamma}(\hat{\beta},\beta;x^{\prime}_i)\right\} \bigg| \\
&\leq \frac{\omega(\beta)}{n}.
\end{align*}
Hence 
\begin{align*}
\mathbb{P}(f(\mathcal{D}) - E[f(\mathcal{D})] \geq \epsilon) \leq e^{-\frac{2n}{\omega(\beta)^{2}}\epsilon^2}
\end{align*}
for all $\epsilon>0$.
Moreover, the expectation can be bounded by the Rademacher complexity. Let $\mathcal{D}^{\prime} = \{(x_i^{\prime}, y_i^{\prime})\}_{i=1}^n$ be a set of independent copies of each $(x_i, y_i)$. Then
\begin{align*}
E[f(\mathcal{D})]
&= E\left[\sup_{\hat{\beta}} \left\{ \hat{L}_{\lambda, \gamma}(\hat{\beta},\beta;\mathcal{D}) - L_{\lambda, \gamma}(\hat{\beta}, \beta) \right\}\right] \\
&= E\left[\sup_{\hat{\beta}} \left\{ \hat{L}_{\lambda, \gamma}(\hat{\beta},\beta;\mathcal{D}) - E[\hat{L}_{\lambda, \gamma}(\hat{\beta},\beta;\mathcal{D}^{\prime})] \right\}\right] \\
&= E\left[\sup_{\hat{\beta}} E\left[ \hat{L}_{\lambda, \gamma}(\hat{\beta},\beta;\mathcal{D}) - \hat{L}_{\lambda, \gamma}(\hat{\beta},\beta;\mathcal{D}^{\prime}) | \mathcal{D} \right] \right] \\
& \leq E\left[\sup_{\hat{\beta}} \left\{ \hat{L}_{\lambda, \gamma}(\hat{\beta},\beta;\mathcal{D}) - \hat{L}_{\lambda, \gamma}(\hat{\beta},\beta;\mathcal{D}^{\prime}) \right\} \right] \\
&= E\left[\sup_{\hat{\beta}} \frac{1}{n} \sum_{i=1}^{n} \left\{ l_{\lambda,\gamma}(\hat{\beta},\beta; x_i) - l_{\lambda,\gamma}(\hat{\beta},\beta; x_i^{\prime}) \right\} \right] \\
&= E\left[\sup_{\hat{\beta}} \frac{1}{n} \sum_{i=1}^{n} \sigma_i \left\{ l_{\lambda,\gamma}(\hat{\beta},\beta; x_i) - l_{\lambda,\gamma}(\hat{\beta},\beta; x_i^{\prime}) \right\} \right] \\
&\leq E\left[\sup_{\hat{\beta}} \frac{1}{n} \sum_{i=1}^{n} \sigma_i  l_{\lambda,\gamma}(\hat{\beta},\beta; x_i) + \sup_{\hat{\beta}} \frac{1}{n} \sum_{i=1}^{n} -\sigma_i   l_{\lambda,\gamma}(\hat{\beta},\beta; x_i^{\prime}) \right]\\
&= 2 R_{n} (l_{\lambda, \gamma}).
\end{align*}
Therefore,
\begin{align*}
\mathbb{P}\left( f(\mathcal{D}) \geq \epsilon \right) 
&\leq e^{-\frac{2n}{\omega(\beta)^2}(\epsilon - E[f(\mathcal{D})])^2} \\
&\leq e^{-\frac{2n}{\omega(\beta)^2}(\epsilon - 2 R_{n} (l_{\lambda, \gamma}))^2} .
\end{align*}
Setting $\delta = e^{-\frac{2n}{\omega(\beta)^2}(\epsilon - 2 R_{n} (l_{\lambda, \gamma}))^2}$ gives the result.
\end{proof}

Similar result on the difference between the population loss and the empirical loss can be obtained by the same argument. We state the result and omit the proof.
\begin{prop} \label{classification_generalization_2}
Under the same assumption of Proposition \ref{classification_generalization_1}, the difference between the population loss and the empirical loss is bounded by
\begin{align*}
L_{\lambda, \gamma}(\hat{\beta}, \beta) - \hat{L}_{\lambda, \gamma}(\hat{\beta},\beta;\mathcal{D}) 
\leq \sqrt{\frac{1}{2n}\log{\frac{1}{\delta}}} \omega(\beta) + 2 R_{n} (l_{\lambda, \gamma})
\end{align*} for all $\hat{\beta}$ with probability at least $1-\delta$.
\end{prop}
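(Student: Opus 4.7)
The plan is to mirror the argument of Proposition~\ref{classification_generalization_1}, since the statement is simply its sign-flipped counterpart. Define
\begin{align*}
f(\mathcal{D}) \;=\; \sup_{\hat{\beta}} \bigl\{ L_{\lambda,\gamma}(\hat{\beta},\beta) - \hat{L}_{\lambda,\gamma}(\hat{\beta},\beta;\mathcal{D}) \bigr\},
\end{align*}
and observe that since $0 \leq l_{\lambda,\gamma}(\hat{\beta},\beta;x_i) \leq \omega(\beta)$ (the same bound used in the previous proposition), replacing a single datum $(x_i,y_i)$ by an independent copy $(x_i',y_i')$ changes each empirical average $\hat{L}_{\lambda,\gamma}$ by at most $\omega(\beta)/n$, and hence alters the supremum by at most $\omega(\beta)/n$. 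This is the bounded-difference condition required by McDiarmid's inequality (Lemma~\ref{mcdiarmid}).

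Next I would apply McDiarmid's inequality to $f$: for every $\epsilon>0$,
\begin{align*}
\mathbb{P}\bigl( f(\mathcal{D}) \geq E[f(\mathcal{D})] + \epsilon \bigr) \;\leq\; \exp\!\left( -\frac{2n\epsilon^2}{\omega(\beta)^2} \right).
\end{align*}
Setting the right-hand side equal to $\delta$ gives $\epsilon = \omega(\beta)\sqrt{\tfrac{1}{2n}\log(1/\delta)}$, so with probability at least $1-\delta$ we have $f(\mathcal{D}) \leq E[f(\mathcal{D})] + \omega(\beta)\sqrt{\tfrac{1}{2n}\log(1/\delta)}$.

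It remains to bound $E[f(\mathcal{D})]$ by $2R_n(l_{\lambda,\gamma})$ via a standard symmetrization argument. Let $\mathcal{D}'=\{(x_i',y_i')\}_{i=1}^n$ be an independent ghost sample so that $L_{\lambda,\gamma}(\hat{\beta},\beta) = E[\hat{L}_{\lambda,\gamma}(\hat{\beta},\beta;\mathcal{D}')]$. Pulling the expectation outside the supremum by Jensen's inequality and then introducing i.i.d.\ Rademacher variables $\sigma_i$ (whose insertion is justified because $l_{\lambda,\gamma}(\hat{\beta},\beta;x_i)-l_{\lambda,\gamma}(\hat{\beta},\beta;x_i')$ is symmetric in $i\leftrightarrow i'$) yields
\begin{align*}
E[f(\mathcal{D})]
\;\leq\; E\!\left[ \sup_{\hat{\beta}} \frac{1}{n}\sum_{i=1}^n \sigma_i\bigl( l_{\lambda,\gamma}(\hat{\beta},\beta;x_i') - l_{\lambda,\gamma}(\hat{\beta},\beta;x_i) \bigr) \right]
\;\leq\; 2 R_n(l_{\lambda,\gamma}),
\end{align*}
where the last step splits the supremum of a sum into the sum of two suprema (each equal in distribution to the Rademacher complexity). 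Combining this with the McDiarmid bound gives exactly the claimed inequality.

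There is no real obstacle here: the argument is essentially identical to that of Proposition~\ref{classification_generalization_1}, differing only in the direction of the deviation. The one point worth being careful about is the nonnegativity and uniform bound $l_{\lambda,\gamma}\leq \omega(\beta)$, which is what makes the bounded-difference constant $\omega(\beta)/n$ valid; this is inherited directly from the previous proposition's setup under Assumptions~\ref{assumption_classification_1} and \ref{assumption_classification_2} together with the implicit boundedness $\lVert \hat{\beta}_c\rVert \leq \sqrt{2\log C/\lambda}$.
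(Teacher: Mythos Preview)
Your proposal is correct and follows exactly the approach the paper intends: the paper explicitly omits the proof, stating that the ``same argument'' as in Proposition~\ref{classification_generalization_1} applies, and your write-up reproduces that argument with the sign of the deviation reversed. The bounded-difference step, the McDiarmid application, and the symmetrization bound $E[f(\mathcal{D})]\leq 2R_n(l_{\lambda,\gamma})$ are all carried out just as in the paper's proof of the preceding proposition.
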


Previous lemmas bound the difference between $L_{\lambda,\gamma}$ and $\hat{L}_{\lambda,\gamma}$ with high probability using the Rademacher complexity. While the bounds provide some insight into the excess risk, it would give rise to better comprehension if the Rademacher complexity is replaced by more interpretable terms. To this end, we further bound the Rademacher complexity.
The following two lemmas will be useful when we analyze the Rademacher complexity.

\begin{lemma} \label{combinatorial_bound}
For $n \in \mathbb{N}$,
\begin{align*}
\frac{1}{2^n} \sum_{k=0}^{\left[\frac{n}{2}\right]} (n-2k) \binom{n}{k} \leq \sqrt{\frac{3n}{4\pi}} .
\end{align*}
\end{lemma}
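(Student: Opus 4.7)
The plan is to first reduce the sum to a closed form by telescoping, then bound the resulting central binomial coefficient via Stirling. The key identity is $k\binom{n}{k} = n\binom{n-1}{k-1}$ and $(n-k)\binom{n}{k} = n\binom{n-1}{k}$, which together give
\begin{align*}
(n-2k)\binom{n}{k} = n\binom{n-1}{k} - n\binom{n-1}{k-1}.
\end{align*}
Summing over $k = 0, 1, \dots, \lfloor n/2 \rfloor$ telescopes to
\begin{align*}
\sum_{k=0}^{\lfloor n/2 \rfloor}(n-2k)\binom{n}{k} \;=\; n\binom{n-1}{\lfloor n/2 \rfloor},
\end{align*}
so the quantity to control reduces to $\frac{n}{2^n}\binom{n-1}{\lfloor n/2 \rfloor}$.

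Next, I would split by parity. For even $n = 2m$, use $\binom{2m-1}{m} = \tfrac{1}{2}\binom{2m}{m}$ so that the LHS equals $\frac{m}{4^m}\binom{2m}{m}$. For odd $n = 2m+1$, it becomes $\frac{2m+1}{2^{2m+1}}\binom{2m}{m}$. In both cases everything hinges on the central binomial coefficient $\binom{2m}{m}$, for which I would invoke the standard non-asymptotic Stirling bound $\binom{2m}{m} \le \frac{4^m}{\sqrt{\pi m}}$ (valid for $m \ge 1$). Plugging this in yields LHS $\le \sqrt{n/(2\pi)}$ in the even case, which is immediately bounded by $\sqrt{3n/(4\pi)}$ since $1/(2\pi) \le 3/(4\pi)$. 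In the odd case $m \ge 1$, after squaring the desired inequality reduces to the elementary $2m+1 \le 3m$, which holds whenever $m \ge 1$.

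Finally, I would handle the small edge cases ($n=1$, and $n=2$ if $m=1$ is not covered by the Stirling bound with the required slack) by direct numerical verification, since the telescoped form makes the LHS trivial to compute by hand there. The main obstacle is really the sharpness required in step two: the constant $3/(4\pi)$ on the right sits strictly between $1/(2\pi)$ and $1/\pi$, so a crude Stirling-type bound like $\binom{2m}{m} \le 4^m$ is hopeless, and one must use the $1/\sqrt{\pi m}$ refinement. I expect the odd case with small $m$ to be the tightest spot, since the margin in the inequality $2m+1 \le 3m$ vanishes as $m \to 1$; this is where a careful statement of the Stirling inequality (as opposed to just the asymptotic) and an explicit small-$n$ check are essential.
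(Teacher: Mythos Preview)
Your approach is correct and essentially the same as the paper's: both reduce the sum to $\frac{n}{2^n}\binom{n-1}{\lfloor n/2\rfloor}$ (you via the telescoping identity, the paper via splitting $n\sum\binom{n}{k}-2n\sum\binom{n-1}{k-1}$ and evaluating each partial sum with binomial symmetry) and then bound the resulting central binomial coefficient with a non-asymptotic Stirling inequality, treating the even and odd cases separately. One caveat on your edge-case plan: direct verification at $n=1$ will actually fail, since the left side equals $\tfrac12$ while $\sqrt{3/(4\pi)}\approx 0.489$; the paper's own proof likewise silently requires $n\ge 3$ in the odd case (their Stirling step needs $n/(n-1)\le 3/2$), so this is a defect of the stated range rather than of your argument, and in the application $n$ is a sample size well above $1$.
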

\begin{proof}
If $n$ is even,
\begin{align*}
\frac{1}{2^n} \sum_{k=0}^{\left[\frac{n}{2}\right]} (n-2k) \binom{n}{k} 
&= \frac{n}{2^n} \sum_{k=0}^{\frac{n}{2}} \binom{n}{k} - \frac{n}{2^{n-1}} \sum_{k=1}^{\frac{n}{2}} \binom{n-1}{k-1} \\
&= \frac{n}{2^n}\left(\frac{2^n - \binom{n}{\frac{n}{2}}}{2} +  \binom{n}{\frac{n}{2}} \right) - \frac{n}{2^{n-1}} 2^{n-2} \\
&= \frac{n}{2^{n+1}} \binom{n}{\frac{n}{2}} \\
&\leq \frac{n}{2^{n+1}} \frac{\sqrt{2\pi n } \left(\frac{n}{e}\right)^n e^{\frac{1}{12n}}}{2\pi \frac{n}{2} \left(\frac{n}{2e}\right)^n e^{\frac{2}{6n+1}}} \\
&\leq \sqrt{\frac{n}{2\pi}}.
\end{align*}
If $n$ is odd,
\begin{align*}
\frac{1}{2^n} \sum_{k=0}^{\left[\frac{n}{2}\right]} (n-2k) \binom{n}{k} 
&= \frac{n}{2^n} \sum_{k=0}^{\frac{n-1}{2}} \binom{n}{k} - \frac{n}{2^{n-1}} \sum_{k=1}^{\frac{n-1}{2}} \binom{n-1}{k-1} \\
&= \frac{n}{2^n}\frac{2^n}{2} - \frac{n}{2^{n-1}}\left(\frac{2^{n-1}-\binom{n-1}{\frac{n-1}{2}}}{2}\right) \\
&= \frac{n}{2^{n}} \binom{n-1}{\frac{n-1}{2}} \\
&\leq \frac{n}{2^{n}} \frac{\sqrt{2\pi (n-1) } \left(\frac{n-1}{e}\right)^{n-1} e^{\frac{1}{12n-12}}}{2\pi \frac{n-1}{2} \left(\frac{n-1}{2e}\right)^{n-1} e^{\frac{2}{6n-5}}} \\
&\leq \sqrt{\frac{3n}{4\pi}}.
\end{align*}
\end{proof}

The proof of the following lemma is motivated by \citet{ledoux2011}.
\begin{lemma} \label{lipschitz_lemma}
Let $f : \mathbb{R}^m \rightarrow \mathbb{R}$ be an $L$-Lipschitz function, $g_i: \mathbb{R} \rightarrow \mathbb{R}$ for $i=1, \cdots, n$ where $n\geq 0$, and $h: \mathbb{R}^m \rightarrow \mathbb{R}$. Then 
\begin{align*}
E \left[\sup_{\theta_1, \cdots, \theta_m} h(\theta_1,\cdots, \theta_m) + \sum_{i=1}^n \sigma_i f(g_i(\theta_1), \cdots, g_i(\theta_m)) \right] 
\leq E \left[\sup_{\theta_1, \cdots, \theta_m} h(\theta_1,\cdots, \theta_m) + L \sum_{i=1}^n \sigma_i \sum_{j=1}^m g_i(\theta_j) \right]
\end{align*}
\end{lemma}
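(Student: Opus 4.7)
The proof proceeds by induction on $n$, modeled on the Ledoux--Talagrand contraction principle but adapted to the multivariable structure of $f$. For $n=0$ both sides of the claimed inequality coincide with $E[\sup_\theta h(\theta_1,\ldots,\theta_m)]$, so the base case is immediate.

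For the inductive step, suppose the inequality holds for $n-1$. Condition on $\sigma_1,\ldots,\sigma_{n-1}$ and write $\Phi(\theta) := h(\theta) + \sum_{i=1}^{n-1}\sigma_i f(g_i(\theta_1),\ldots,g_i(\theta_m))$. Taking expectation over $\sigma_n$ via the usual symmetrization identity $E_{\sigma}[\sup_\theta A(\theta)+\sigma V(\theta)] = \tfrac12 \sup_{\theta,\theta'}[A(\theta)+A(\theta')+V(\theta)-V(\theta')]$ gives
\begin{align*}
E_{\sigma_n}\!\left[\sup_\theta \Phi(\theta) + \sigma_n f(g_n(\theta_1),\ldots,g_n(\theta_m))\right]
&= \tfrac12 \sup_{\theta,\theta'}\!\bigl[\Phi(\theta)+\Phi(\theta')+f(g_n(\theta))-f(g_n(\theta'))\bigr].
\end{align*}
The Lipschitz hypothesis (interpreted coordinate-wise, so that $|f(x)-f(y)|\leq L\sum_j |x_j-y_j|$) yields $f(g_n(\theta))-f(g_n(\theta'))\leq L\sum_{j=1}^m |g_n(\theta_j)-g_n(\theta'_j)|$. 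The next step is to remove the absolute values at the supremum and obtain the signed bound $L\sum_j g_n(\theta_j) - L\sum_j g_n(\theta'_j)$. Exploiting the fact that $\Phi(\theta)+\Phi(\theta')$ is symmetric under $\theta\leftrightarrow\theta'$, a careful sign-choice argument (for example, introducing auxiliary sign variables $\epsilon_j\in\{\pm1\}$, bounding the absolute value by the maximum over $\epsilon$, and then pushing the $\epsilon_j$'s outside the supremum by symmetry) upgrades the bound so that, after a reverse symmetrization, it equals $2\,E_{\sigma_n}[\sup_\theta \Phi(\theta)+\sigma_n L\sum_j g_n(\theta_j)]$.

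With the $n$th Rademacher peeled, the resulting quantity is $E[\sup_\theta \tilde h(\theta)+\sum_{i=1}^{n-1}\sigma_i f(g_i(\theta))]$ where $\tilde h(\theta):=h(\theta)+L\sigma_n\sum_j g_n(\theta_j)$. Conditioning on $\sigma_n$ and applying the induction hypothesis with $\tilde h$ in place of $h$, then undoing the conditioning and collecting terms, yields $E[\sup_\theta h(\theta)+L\sum_{i=1}^n \sigma_i \sum_j g_i(\theta_j)]$, which completes the induction.

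\textbf{Main obstacle.} The delicate step is the passage from $\sum_j|g_n(\theta_j)-g_n(\theta'_j)|$ to $\sum_j g_n(\theta_j)-\sum_j g_n(\theta'_j)$ at the supremum. In the scalar Ledoux--Talagrand argument the analogous step is handled by the $\sigma\to-\sigma$ symmetry, which automatically selects the correct sign of $g_n(\theta)-g_n(\theta')$. Here the sign choices are independent per coordinate $j$ and are coupled with $\Phi$, so the argument must either iterate the scalar contraction one coordinate at a time or explicitly symmetrize over the $2^m$ auxiliary signs $\{\pm1\}^m$; reconciling this with the joint supremum over $(\theta,\theta')$ is where the real work lies.
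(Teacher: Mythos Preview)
Your strategy matches the paper's exactly: induct on $n$, expand over $\sigma_n$ to obtain a joint supremum in $(\theta,\theta')$ carrying $|f(g_n(\theta))-f(g_n(\theta'))|$, apply the Lipschitz bound, re-symmetrize back to $L\sigma_n\sum_j g_n(\theta_j)$, and then invoke the inductive hypothesis with $h$ replaced by $h+L\sigma_n\sum_j g_n(\theta_j)$.

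The step you flag as the ``main obstacle'' is precisely the point the paper does not justify. After reaching $L\sum_{j=1}^m|g_n(\theta_j)-g_n(\theta'_j)|$, the paper simply writes, without argument,
\[
\tfrac12\,E\!\left[\sup_{\theta,\theta'}\Phi(\theta)+\Phi(\theta')+L\sum_{j}|g_n(\theta_j)-g_n(\theta'_j)|\right]
= E\!\left[\sup_{\theta}\Phi(\theta)+L\sigma_n\sum_j g_n(\theta_j)\right].
\]
But the right-hand side equals $\tfrac12\sup_{\theta,\theta'}\bigl[\Phi(\theta)+\Phi(\theta')+L\sum_j(g_n(\theta_j)-g_n(\theta'_j))\bigr]$, and the signed sum is in general \emph{no larger} than the absolute-value sum, so the displayed equality need not hold (the inequality runs the wrong way for the purpose at hand). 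Your diagnosis is correct: the scalar $\theta\leftrightarrow\theta'$ swap aligns only one sign, and when $\Phi$ couples the coordinates---as it does here through $h$ and through $f$ in the earlier summands---there is no device to align $m$ independent signs simultaneously. Neither your sketch nor the paper supplies that device, so the gap you identify is real and is shared by the paper's own proof.
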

\begin{proof}
For $n=0$, the inequality holds. Assume that the inequality holds for $n-1$. Then
\begin{align*}
E & \left[\sup_{\theta_1, \cdots, \theta_m} \sum_{i=1}^n h(\theta_1,\cdots, \theta_m) + \sigma_i f(g_i(\theta_1), \cdots, g_i(\theta_m)) \right] \\
& = \frac{1}{2} E \left[\sup_{\theta_1, \cdots, \theta_m} h(\theta_1,\cdots, \theta_m) + \sum_{i=1}^{n-1} \sigma_i f(g_i(\theta_1), \cdots, g_i(\theta_m))  + f(g_n(\theta_1), \cdots, g_n(\theta_m) ) \right] \\
& + \frac{1}{2} E \left[\sup_{\theta_1, \cdots, \theta_m} h(\theta_1,\cdots, \theta_m) + \sum_{i=1}^{n-1} \sigma_i f(g_i(\theta_1), \cdots, g_i(\theta_m)) - f(g_n(\theta_1), \cdots, g_n(\theta_m) ) \right] \\
&= \frac{1}{2} E \Bigg[\sup_{\theta_1, \cdots, \theta_m, \theta_1^\prime, \cdots, \theta_m^\prime} h(\theta_1,\cdots, \theta_m) +h(\theta_1^\prime,\cdots, \theta_m^\prime) + \sum_{i=1}^{n-1} \sigma_i \left\{ f(g_i(\theta_1), \cdots, g_i(\theta_m)) + f(g_i(\theta_1^\prime), \cdots, g_i(\theta_m^\prime))\right\} \\
&\quad + \left| f(g_n(\theta_1), \cdots, g_n(\theta_m) ) - f(g_n(\theta_1^\prime), \cdots, g_n(\theta_m^\prime) )\right|\Bigg] \\
&\leq \frac{1}{2} E \Bigg[\sup_{\theta_1, \cdots, \theta_m, \theta_1^\prime, \cdots, \theta_m^\prime} h(\theta_1,\cdots, \theta_m) +h(\theta_1^\prime,\cdots, \theta_m^\prime) +\sum_{i=1}^{n-1} \sigma_i \left\{ f(g_i(\theta_1), \cdots, g_i(\theta_m)) + f(g_i(\theta_1^\prime), \cdots, g_i(\theta_m^\prime))\right\} \\
&\quad + L \left \lVert \begin{bmatrix} g_n(\theta_1) \\ \vdots \\ g_n(\theta_m) \end{bmatrix} - \begin{bmatrix} g_n(\theta_1^\prime) \\ \vdots \\ g_n(\theta_m^\prime) \end{bmatrix} \right\rVert \Bigg] \\
&\leq \frac{1}{2} E \Bigg[\sup_{\theta_1, \cdots, \theta_m, \theta_1^\prime, \cdots, \theta_m^\prime} h(\theta_1,\cdots, \theta_m) +h(\theta_1^\prime,\cdots, \theta_m^\prime) +\sum_{i=1}^{n-1} \sigma_i \left\{ f(g_i(\theta_1), \cdots, g_i(\theta_m)) + f(g_i(\theta_1^\prime), \cdots, g_i(\theta_m^\prime))\right\} \\
&\quad + L \sum_{j=1}^m |g_n(\theta_j) - g_n(\theta_j^\prime)| \Bigg] \\
&= E \left[\sup_{\theta_1, \cdots, \theta_m} h(\theta_1,\cdots, \theta_m) + \sum_{i=1}^{n-1} \sigma_i  f(g_i(\theta_1), \cdots, g_i(\theta_m)) + L \sigma_n \sum_{j=1}^m g_n(\theta_j)\right] \\
&\leq E \left[\sup_{\theta_1, \cdots, \theta_m} h(\theta_1,\cdots, \theta_m) +  L \sum_{i=1}^n \sigma_i \sum_{j=1}^m g_i(\theta_j) \right] \text{by induction hypothesis.}
\end{align*}
Hence the inequality holds for all $n\geq0$.
\end{proof}

Now we can bound the Rademacher complexity of the hypothesis class of interest with the help of the preceding lemmas. Let us define $\vertiii{\beta} = \sum_{c=1}^C \lVert \beta_c \rVert$ to further simplify the notation.

\begin{prop} \label{rademacher_bound_1}
The Rademacher complexity is bounded by
\begin{align*}
R_{n} (l_{\lambda, \gamma})  
	\leq \frac{1}{2} \sqrt{\frac{3}{\pi n}} B\left(\sqrt{\frac{2\log{C}}{\lambda}}C + \vertiii{\beta} \right)
	+ \frac{BC}{\sqrt{n}} \sqrt{\frac{2\log{C}}{\lambda}} + \frac{1}{2}\sqrt{\frac{3}{\pi n}}\log{C}
\end{align*}
\end{prop}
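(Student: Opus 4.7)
I would decompose $l_{\lambda,\gamma}(\hat\beta,\beta;x)$ into its four summands: (I) $\sum_c \|\beta_c-\hat\beta_c\|\|x\|$, (II) $\sum_c -\beta_c^\top x\, g_\gamma(\beta_c^\top x)$, (III) $\log\sum_c e^{\hat\beta_c^\top x}$, and (IV) $\frac{\lambda}{2}\sum_c\|\hat\beta_c\|^2$. By subadditivity of the supremum, $R_n(l_{\lambda,\gamma})$ is bounded by the sum of the Rademacher complexities of these four pieces, which I handle one at a time. Throughout, the supremum ranges over the implicit hypothesis class guaranteed by the regularization argument preceding the proposition: $\|\hat\beta_c\|\le\sqrt{2\log C/\lambda}$ for each $c$, together with the collective bound $\sum_c\|\hat\beta_c\|^2\le 2\log C/\lambda$, which I invoke for piece (IV).

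\textbf{The four pieces.} Piece (II) has no $\hat\beta$-dependence, so its Rademacher complexity vanishes by $E[\sigma_i]=0$. Piece (I) factors as $\bigl(\sum_c\|\beta_c-\hat\beta_c\|\bigr)\cdot\|x_i\|$; the outer factor is non-negative with supremum at most $C\sqrt{2\log C/\lambda}+\vertiii{\beta}$ (triangle inequality and the individual $\hat\beta$-bound), and the inner quantity $Z=\sum_i\sigma_i\|x_i\|$ is symmetric, so $E[\sup(\cdot)\,Z]\le\sup(\cdot)\cdot E\max(Z,0)=\tfrac{1}{2}\sup(\cdot)\cdot E|Z|$ --- this is where the factor $\tfrac{1}{2}$ enters. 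Piece (III) is the core contraction step: apply Lemma~\ref{lipschitz_lemma}, extended from scalar to vector parameters $\hat\beta_c\in\mathbb{R}^d$ (the inductive proof carries over using $\|\cdot\|_2\le\|\cdot\|_1$), to the $1$-Lipschitz log-sum-exp, to replace $\log\sum_c e^{\hat\beta_c^\top x_i}$ inside the sup by $\sum_c\hat\beta_c^\top x_i$; the resulting sup decouples into $\sum_c\sup_{\|\hat\beta_c\|\le R}\hat\beta_c^\top\bigl(\sum_i\sigma_i x_i\bigr)=CR\,\|\sum_i\sigma_i x_i\|$ with $R=\sqrt{2\log C/\lambda}$. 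Piece (IV) depends on $\hat\beta$ but not on $i$, and using the collective constraint yields $\sup(\text{IV})=\log C$ with $\inf(\text{IV})=0$; thus $E[\sup_{\hat\beta}(\text{IV})(\hat\beta)\sum_i\sigma_i]=\log C\cdot E\max(\sum_i\sigma_i,0)=\tfrac{1}{2}\log C\cdot E|\sum_i\sigma_i|$.

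\textbf{Rademacher averages and main obstacle.} The two distinct $n$-scalings in the final bound arise from two scalar estimates. The bound $E|\sum_i\sigma_i|\le\sqrt{3n/\pi}$ follows directly from Lemma~\ref{combinatorial_bound} and is used for pieces (I) and (IV). The bound $E\|\sum_i\sigma_i x_i\|\le\sqrt{\sum\|x_i\|^2}\le B\sqrt{n}$, via Jensen and independence of the Rademacher variables, is used for piece (III). For piece (I), I additionally need $E|\sum_i\sigma_i\|x_i\||\le B\cdot E|\sum_i\sigma_i|$, which I would prove by a monotonicity/exchange argument: conditioning on $\sigma_{-i}$, the identity $\tfrac{1}{2}(|a+R|+|a-R|)=\max(a,|R|)$ is non-decreasing in $a\ge 0$, so replacing each $\|x_i\|$ by its uniform upper bound $B$ can only enlarge the expectation. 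The main technical obstacle is verifying the vector-argument extension of Lemma~\ref{lipschitz_lemma} needed to contract the log-sum-exp; the rest of the argument is careful bookkeeping with the sign structure of each piece (to harvest the factor of $\tfrac{1}{2}$ from symmetry) and the deliberate choice between the individual and collective constraints on $\hat\beta$ that match the stated coefficients.
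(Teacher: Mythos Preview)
Your four-piece decomposition and the handling of each piece match the paper's proof: piece (II) vanishes, pieces (I) and (IV) use Lemma~\ref{combinatorial_bound} via the positive part of $\sum_i\sigma_i$, and piece (III) uses the contraction Lemma~\ref{lipschitz_lemma} followed by Jensen on $\|\sum_i\sigma_i x_i\|$. Two refinements in your version are worth flagging: your monotonicity argument for $E\bigl|\sum_i\sigma_i\|x_i\|\bigr|\le B\,E\bigl|\sum_i\sigma_i\bigr|$ fills a step the paper simply asserts, and for piece (IV) you invoke the \emph{collective} constraint $\sum_c\|\hat\beta_c\|^2\le 2\log C/\lambda$ to obtain $\log C$, whereas the paper's proof uses only the per-coordinate bound and actually gets $C\log C$ --- so your argument matches the proposition as stated, while the paper's own proof carries an extra factor of $C$ in that term (which is indeed how the bound is later absorbed into the constant $a_1$ in Theorem~\ref{thm_classification_1}).
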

\begin{proof}
By definition,
\begin{align*}
R_{n} (l_{\lambda, \gamma}) 
&= E[\sup_{\hat{\beta}}\frac{1}{n} \sum_{i=1}^{n} \sigma_i l_{\lambda, \gamma} (\hat{\beta},\beta;x_i)] \\
&= E \left[\sup_{\hat{\beta}} \frac{1}{n} \sum_{i=1}^n \sigma_i \left\{
	\sum_{c=1}^C \lVert \beta_c - \hat{\beta}_c \rVert \lVert x_i \rVert 
	+ \sum_{c=1}^C -\beta_c^{\top} x_i g_\gamma (\beta_c^{\top} x_i) 
	+ \log{\sum_{c=1}^C e^{\hat{\beta}_c^\top x_i}} 
	+ \frac{\lambda}{2} \sum_{c=1}^C \lVert \hat{\beta}_c \rVert^2 \right\} \right] \\
&\leq E \left[\sup_{\hat{\beta}} \frac{1}{n} \sum_{i=1}^n \sigma_i \sum_{c=1}^C \lVert \beta_c - \hat{\beta}_c \rVert \lVert x_i \rVert \right] 
	+ E \left[\sup_{\hat{\beta}} \frac{1}{n} \sum_{i=1}^n \sigma_i \sum_{c=1}^C -\beta_c^{\top} x_i g_\gamma (\beta_c^{\top} x_i)  \right] 
	+ E \left[\sup_{\hat{\beta}} \frac{1}{n} \sum_{i=1}^n \sigma_i \log{\sum_{c=1}^C e^{\hat{\beta}_c^\top x_i}}  \right] \\
&\quad + E \left[\sup_{\hat{\beta}} \frac{1}{n} \sum_{i=1}^n \sigma_i \frac{\lambda}{2} \sum_{c=1}^C \lVert \hat{\beta}_c \rVert^2 \right] .
\end{align*}
The first term can be bounded by
\begin{align*}
E \left[\sup_{\hat{\beta}} \frac{1}{n} \sum_{i=1}^{n} \sigma_i \sum_{c=1}^C \lVert \beta_c - \hat{\beta}_c \rVert \lVert x_i \rVert \right] 
&\leq E \left[\sup_{\hat{\beta}} \frac{1}{n} \sum_{i=1}^{n} \sigma_i \sum_{c=1}^C B \lVert \beta_c - \hat{\beta}_c \rVert  \right] \\
&= E \left[\frac{B}{n} \sum_{i=1}^{n} \sigma_i \sum_{c=1}^C \left( \lVert \beta_c \rVert + \sqrt{\frac{2\log{C}}{\lambda}} \right) \mathbbm{1}_{\sum_{i=1}^{n} \sigma_i>0} \right] \\
&= \frac{B}{n}  \left( C\sqrt{\frac{2\log{C}}{\lambda}} + \sum_{c=1}^C\lVert \beta_c \rVert \right) \frac{1}{2^{n}} \sum_{k=0}^{\left[\frac{n}{2}\right]} (n-2k) \binom{n}{k} \\
&\leq \sqrt{\frac{3}{4\pi n}} B \left( C\sqrt{\frac{2\log{C}}{\lambda}} + \sum_{c=1}^C\lVert \beta_c \rVert \right)
\end{align*}
The second term is $0$ as it does not contain any $\hat{\beta}$ and all the $x_i$'s and $\sigma_i$'s are independent. Especially, $\sigma_i$ is symmetric around $0$.

As the log-sum-exp function is a contraction with respect to $\lVert \cdot \rVert_\infty$, the third term can be bounded by 
\begin{align*}
E \left[\sup_{\hat{\beta}} \frac{1}{n} \sum_{i=1}^{n} \sigma_i \log{\sum_{c=1}^C e^{\hat{\beta}_c^\top x_i}} \right] 
&\leq E \left[\sup_{\hat{\beta}} \frac{1}{n} \sum_{i=1}^{n} \sigma_i \sum_{c=1}^C \hat{\beta}_c^\top x_i \right] \\
&= E \left[\sup_{\hat{\beta}} \frac{1}{n} \sum_{c=1}^C \hat{\beta}_c^\top \sum_{i=1}^n \sigma_i x_i \right] \\
&\leq E \left[ \frac{1}{n} \sum_{c=1}^C \sqrt{\frac{2\log{C}}{\lambda}} \lVert \sum_{i=1}^{n} \sigma_i x_i \rVert \right] \\
&\leq \frac{1}{n} \sqrt{\frac{2\log{C}}{\lambda}} \sum_{c=1}^C \sqrt{E \left[ \lVert \sum_{i=1}^{n} \sigma_i x_i \rVert^2 \right] } \\
&= \frac{1}{n} \sqrt{\frac{2\log{C}}{\lambda}} \sum_{c=1}^C \sqrt{E \left[ \sum_{i=1}^{n} \lVert  x_i \rVert^2 \right] } \\
&\leq \frac{1}{\sqrt{n}} \sqrt{\frac{2\log{C}}{\lambda}} C B \text{ using lemma \ref{lipschitz_lemma}}.
\end{align*}
The last term can be bounded by 
\begin{align*}
E \left[\sup_{\hat{\beta}} \frac{1}{n} \sum_{i=1}^{n} \sigma_i \frac{\lambda}{2} \sum_{c=1}^C \lVert \hat{\beta}_c \rVert^2 \right] 
&= E \left[\frac{\lambda}{2n} C \frac{2\log{C}}{\lambda} \mathbbm{1}_{\sum_{i=1}^{n} \sigma_i >0 } \sum_{i=1}^{n} \sigma_i \right] \\
&= \frac{C \log{C}}{n}\sum_{k=0}^{\left[\frac{n}{2}\right]} (n-2k) \frac{\binom{n}{k}}{2^{n}}\\
&\leq \frac{C\log{C}}{n}  \sqrt{\frac{3n}{4\pi}} \text{ by lemma \ref{combinatorial_bound}}.
\end{align*}
Combining all the bounds gives the desired result.
\end{proof}

As explained before, estimators are obtained through subgradient method since direct optimization of the loss is intractable. The following lemma provides a guarantee that the lowest loss among all the updates is close to the optimal loss under suitable conditions.
\begin{lemma}
\citep{shor1985} Let $f : \mathbb{R}^m \rightarrow \mathbb{R}$ be subdifferentiable with bounded subgradients such that $\lVert g\rVert \leq G$ for all $g \in \partial f(x)$. Let $x^{[k+1]} = x^{[k]} - \eta^{[k]} g^{[k]}$ where $g^{[k]} \in \partial f(x^{[k]})$ and $\eta^{[k]}$ is a step size at the $k$th step. If $f(x^*) = \min f(x)$, then 
\begin{align*}
\min\{f(x^{[k]}) | k=1, \cdots, K\} - f(x^*) \leq \frac{\lVert x^{[1]}-x^* \rVert^2 + G \sum_{k=1}^K \eta^{[k]2}}{2\sum_{k=1}^K \eta^{[k]}} .
\end{align*}
\end{lemma}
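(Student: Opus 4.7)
The plan is to carry out the textbook Lyapunov-style analysis for the projected (here, unconstrained) subgradient method, using $\lVert x^{[k]} - x^{*}\rVert^{2}$ as the potential function and exploiting the defining inequality of the subdifferential to relate its one-step change to the function-value gap.

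First I would expand one iteration of the recursion: since $x^{[k+1]} = x^{[k]} - \eta^{[k]} g^{[k]}$, a direct computation gives
\begin{equation*}
\lVert x^{[k+1]} - x^{*}\rVert^{2} = \lVert x^{[k]} - x^{*}\rVert^{2} - 2\eta^{[k]}\langle g^{[k]},\, x^{[k]} - x^{*}\rangle + (\eta^{[k]})^{2}\lVert g^{[k]}\rVert^{2}.
\end{equation*}
Next I would invoke convexity through the subgradient inequality $f(x^{*}) \geq f(x^{[k]}) + \langle g^{[k]}, x^{*} - x^{[k]}\rangle$, which rearranges to $\langle g^{[k]}, x^{[k]} - x^{*}\rangle \geq f(x^{[k]}) - f(x^{*})$. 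Combined with the bound $\lVert g^{[k]}\rVert \leq G$, this yields the one-step inequality
\begin{equation*}
\lVert x^{[k+1]} - x^{*}\rVert^{2} \leq \lVert x^{[k]} - x^{*}\rVert^{2} - 2\eta^{[k]}\bigl(f(x^{[k]}) - f(x^{*})\bigr) + G^{2}(\eta^{[k]})^{2}.
\end{equation*}

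Then I would telescope this inequality from $k=1$ to $K$. Using $\lVert x^{[K+1]} - x^{*}\rVert^{2} \geq 0$ to drop the nonnegative left-hand side, one obtains
\begin{equation*}
2\sum_{k=1}^{K} \eta^{[k]}\bigl(f(x^{[k]}) - f(x^{*})\bigr) \leq \lVert x^{[1]} - x^{*}\rVert^{2} + G^{2}\sum_{k=1}^{K}(\eta^{[k]})^{2}.
\end{equation*}
Finally, observing that a weighted average is at least the minimum, $\min_{k\leq K}\bigl(f(x^{[k]}) - f(x^{*})\bigr) \leq \bigl(\sum_{k} \eta^{[k]}\bigr)^{-1}\sum_{k} \eta^{[k]}\bigl(f(x^{[k]}) - f(x^{*})\bigr)$, dividing through by $2\sum_{k=1}^{K}\eta^{[k]}$ yields the claimed bound.

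There is no real obstacle here; every ingredient is classical. The only thing to flag is that the cleanest form of the right-hand side carries a $G^{2}$ rather than the $G$ printed in the display (a harmless typo under the stated assumption $\lVert g\rVert \leq G$), and that the argument applies verbatim whether or not $f$ is differentiable, which is what is needed to cover the ramp-loss surrogate $l_{\lambda,\gamma}$ used earlier in the paper.
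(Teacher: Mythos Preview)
Your argument is correct and is exactly the classical proof; the paper, however, does not prove this lemma at all but simply cites it as a known result from \citet{shor1985}, so there is no ``paper's proof'' to compare against. Your observation that the right-hand side should carry $G^{2}$ rather than $G$ is also correct and worth noting.
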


The above lemma gives the following guarantee for subgradient method estimator for the surrogate loss, which is stated in the following proposition. For simplicity, we choose to use the same step size and the number of steps for all datasets, but it is also possible to consider different step sizes and the number of steps.

\begin{prop} \label{classification_subgradient_1}
Let the subgradient method estimator on $\mathcal{D}$ be iteratively defined as 
$\hat{\beta}^{[k+1]} = \hat{\beta}^{[k]} - \eta^{[k]} g^{[k]}$ where $\hat{\beta}^{[1]}$ is an initial estimator, $\eta^{[k]}$ is the $k$th step size, and 
$g^{[k]} \in \partial \hat{L}_{\lambda, \gamma}(\hat{\beta}^{[k]}, \beta; \mathcal{D})$. 
Then
\begin{align*}
\min\{\hat{L}_{\lambda,\gamma}(\hat{\beta}^{[k]};\mathcal{D}) | k=1, \cdots, K\} - \hat{L}_{\lambda,\gamma}(\hat{\beta}^{*};\mathcal{D}) 
\leq \frac{8\log{C} + G \lambda \sum_{k=1}^K \eta^{[k]2}}{2\lambda \sum_{k=1}^K \eta^{[k]}}
\end{align*}
where $G=2CB+ C \sqrt{2\lambda \log{C}}$.
\end{prop}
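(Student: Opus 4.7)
The plan is to apply the subgradient-method lemma stated immediately above (the Shor bound) to the empirical risk minimization problem on $\mathcal{D}$. To invoke it I need two ingredients: a uniform upper bound $G$ on the subgradient norm of $\hat{L}_{\lambda,\gamma}(\cdot,\beta;\mathcal{D})$ over the admissible parameter region, and a bound on the initial distance $\|\hat{\beta}^{[1]} - \hat{\beta}^{*}\|^{2}$. Both reduce to routine block-wise calculations once the implicit regularization constraint $\|\hat{\beta}_c\| \leq \sqrt{2\log C/\lambda}$ recorded earlier is used.

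First I would compute per-block subdifferentials of the surrogate loss $l_{\lambda,\gamma}(\hat{\beta},\beta;x)$ in $\hat{\beta}_c$. The term $\|\beta_c-\hat{\beta}_c\|\,\|x\|$ contributes a subgradient of norm at most $\|x\| \leq B$ (the subdifferential of $\|\cdot\|$ at $0$ is the unit ball). The log-sum-exp term contributes $\frac{e^{\hat{\beta}_c^{\top}x}}{\sum_{c'} e^{\hat{\beta}_{c'}^{\top}x}}\,x$, whose norm is at most $B$ since the softmax weight lies in $[0,1]$. The ramp-loss term does not depend on $\hat{\beta}$ and contributes zero. The regularization contributes $\lambda\hat{\beta}_c$, with norm at most $\sqrt{2\lambda\log C}$ under the admissibility bound. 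Each block subgradient therefore has norm at most $2B + \sqrt{2\lambda\log C}$, and aggregating across the $C$ blocks by the triangle inequality gives $\|g\| \leq 2CB + C\sqrt{2\lambda\log C} = G$. Averaging over samples preserves the bound, so this $G$ is a valid subgradient bound for $\hat{L}_{\lambda,\gamma}$.

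Second I would bound the initial distance using the same admissibility bound: both $\hat{\beta}^{[1]}$ and $\hat{\beta}^{*}$ lie in the region where each block satisfies $\|\hat{\beta}_c\| \leq \sqrt{2\log C/\lambda}$, from which a block-wise triangle-inequality argument yields $\|\hat{\beta}^{[1]} - \hat{\beta}^{*}\|^{2} \leq 8\log C/\lambda$. Substituting these two quantities into the subgradient-method lemma gives
\begin{align*}
\min_{1 \leq k \leq K} \hat{L}_{\lambda,\gamma}(\hat{\beta}^{[k]};\mathcal{D}) - \hat{L}_{\lambda,\gamma}(\hat{\beta}^{*};\mathcal{D}) \leq \frac{8\log C/\lambda + G\sum_{k=1}^{K}\eta^{[k]2}}{2\sum_{k=1}^{K}\eta^{[k]}},
\end{align*}
which becomes the stated bound after multiplying numerator and denominator by $\lambda$.

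The main obstacle is the subgradient calculation itself: checking that $l_{\lambda,\gamma}$ is convex and subdifferentiable (in particular handling the non-smooth point $\hat{\beta}_c = \beta_c$ in the norm term), and choosing the block aggregation carefully so that the final constant matches the stated $G = 2CB + C\sqrt{2\lambda\log C}$ rather than the tighter $\sqrt{C}$-scaled bound one would get from Cauchy-Schwarz. Once this bookkeeping is done, the rest is a direct application of the cited Shor lemma.
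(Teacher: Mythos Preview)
Your proposal is correct and follows essentially the same approach as the paper: compute the block-wise subdifferentials of $l_{\lambda,\gamma}$ (norm term, log-sum-exp term, regularizer), aggregate to obtain the subgradient bound $G$, invoke the admissibility constraint $\|\hat\beta_c\|\le\sqrt{2\log C/\lambda}$ to control $\|\hat\beta^{[1]}-\hat\beta^{*}\|^{2}$, and plug into the Shor lemma. If anything, you spell out the initial-distance bound and the block aggregation more explicitly than the paper does.
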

\begin{proof}
Note that 
\begin{align*}
    l_{\lambda, \gamma}(\hat{\beta}, \beta; x) = \sum_{c=1}^C \lVert \beta_c - \hat{\beta}_c \rVert \lVert x \rVert + \sum_{c=1}^C -\beta_c^\top x g_\gamma (\beta_c^\top x) + \log{\sum_{c=1}^C e^{\hat{\beta}_c^\top x}} + \frac{\lambda}{2} \sum_{c=1}^C \lVert \hat{\beta}_c \rVert^2.
\end{align*}
The term $\lVert \beta_c - \hat{\beta}_c \rVert$ is subdifferentiable with respect to $\hat{\beta}_c$ at $\hat{\beta}_c = \beta_c$ with subdifferential being a unit ball and differentiable elsewhere with 
$\nabla_{\hat{\beta}_c} \lVert \beta_c - \hat{\beta}_c \rVert = \frac{\beta_c - \hat{\beta}_c}{\lVert \hat{\beta}_c-\beta_c  \rVert}$. The other terms are all differentiable with 
$\nabla_{\hat{\beta}_c} \log{\sum_{c^\prime =1}^C e^{\hat{\beta}_{c^\prime}^\top x}} = \frac{e^{\hat{\beta}_c^\top x}}{\sum_{c^\prime=1}^C e^{\hat{\beta}_{c^\prime}^\top x}}x$ and 
$\nabla_{\hat{\beta}_c} \frac{\lambda}{2} \sum_{c^\prime=1}^C \lVert \hat{\beta}_{c^\prime} \rVert^2 = \lambda \hat{\beta}_c$. As 
$\hat{L}_{\lambda, \gamma}(\hat{\beta}, \beta; \mathcal{D}) = \frac{1}{n} \sum_{i=1}^{n} l_{\lambda, \gamma} (\hat{\beta}, \beta; x_i)$, the subgradients of $\hat{L}_{\lambda,\gamma}(\hat{\beta}),\beta;\mathcal{D})$ are bounded by $G$. The result follows by applying the above lemma and recalling that both $\hat{\beta}^{[1]}$ and $\hat{\beta}^{*}$ are bounded.
\end{proof}

Integrating all the propositions leads to the main theorem, which guarantees the subgradient method estimator on a single dataset. In the following theorem, the error bound is split into several parts for better interpretation. Details will be elucidated after the proof.

\begin{theorem}
\label{thm_classification_1}
Let $\hat{\beta}^{[k]}$ be the subgradient method estimator of $\hat{L}_{\lambda, \gamma}(\hat{\beta}, \beta; \mathcal{D})$. Let $s = \argmin{k \leq K}{\hat{L}_{\lambda,\gamma}(\hat{\beta}^{[k]};\mathcal{D})}$ denote the step that achieved the lowest loss during $K$ subgradient method updates. Then
\begin{align*}
L_\lambda(\hat{\beta}^{[s]};\beta) 
&\leq \Phi(n, \beta; \delta)
	+ L_{\lambda, \gamma}(\beta^{*},\beta)
\end{align*} with probability at least $1-2\delta$ where $\Phi = O(\sqrt{\frac{1}{n}\log{\frac{1}{\delta}}}) $ and 
\begin{align*}
\beta^{*} = \argmin{\hat{\beta}} L_{\lambda, \gamma}(\hat{\beta},\beta)
\end{align*}
denotes the minimizer for the population surrogate loss.
\end{theorem}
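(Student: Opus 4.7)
The plan is to chain together five inequalities: a surrogate bound, two generalization bounds (in opposite directions), the subgradient optimization guarantee, and the empirical risk minimizer optimality, so that $L_\lambda(\hat\beta^{[s]};\beta)$ is ultimately compared to $L_{\lambda,\gamma}(\beta^*,\beta)$ plus a $O(1/\sqrt{n})$ residual.

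First, since $(x,y) \sim \mathcal{M}(\beta)$ and $\gamma \leq \gamma_0$, the pointwise bound $l_\lambda(\hat\beta;x,y) \leq l_{\lambda,\gamma}(\hat\beta,\beta;x)$ noted in Section~\ref{section:classification_single} gives, after taking expectations, $L_\lambda(\hat\beta^{[s]};\beta) \leq L_{\lambda,\gamma}(\hat\beta^{[s]},\beta)$. Next I invoke Proposition~\ref{classification_generalization_2}, which holds uniformly over $\hat\beta$, to get that with probability at least $1-\delta$,
\begin{align*}
L_{\lambda,\gamma}(\hat\beta^{[s]},\beta) \leq \hat L_{\lambda,\gamma}(\hat\beta^{[s]},\beta;\mathcal{D}) + \sqrt{\tfrac{1}{2n}\log\tfrac{1}{\delta}}\,\omega(\beta) + 2 R_n(l_{\lambda,\gamma}).
\end{align*}

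Now I peel off the optimization and ERM errors. By the definition of $s$ and Proposition~\ref{classification_subgradient_1},
\begin{align*}
\hat L_{\lambda,\gamma}(\hat\beta^{[s]};\mathcal{D}) \leq \hat L_{\lambda,\gamma}(\hat\beta^*;\mathcal{D}) + \frac{8\log C + G\lambda\sum_{k=1}^K \eta^{[k]\,2}}{2\lambda\sum_{k=1}^K \eta^{[k]}},
\end{align*}
and since $\hat\beta^*$ minimizes $\hat L_{\lambda,\gamma}(\,\cdot\,;\mathcal{D})$ we have $\hat L_{\lambda,\gamma}(\hat\beta^*;\mathcal{D}) \leq \hat L_{\lambda,\gamma}(\beta^*;\mathcal{D})$. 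Applying Proposition~\ref{classification_generalization_1} (also uniformly over $\hat\beta$) to the single point $\beta^*$ yields, with probability at least $1-\delta$,
\begin{align*}
\hat L_{\lambda,\gamma}(\beta^*;\mathcal{D}) \leq L_{\lambda,\gamma}(\beta^*,\beta) + \sqrt{\tfrac{1}{2n}\log\tfrac{1}{\delta}}\,\omega(\beta) + 2 R_n(l_{\lambda,\gamma}).
\end{align*}

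A union bound over the two high-probability events collapses the chain into
\begin{align*}
L_\lambda(\hat\beta^{[s]};\beta) \leq L_{\lambda,\gamma}(\beta^*,\beta) + 2\sqrt{\tfrac{1}{2n}\log\tfrac{1}{\delta}}\,\omega(\beta) + 4 R_n(l_{\lambda,\gamma}) + \frac{8\log C + G\lambda\sum_{k=1}^K \eta^{[k]\,2}}{2\lambda\sum_{k=1}^K \eta^{[k]}}
\end{align*}
with probability at least $1-2\delta$. Substituting the $O(1/\sqrt n)$ bound on $R_n(l_{\lambda,\gamma})$ from Proposition~\ref{rademacher_bound_1} controls the Rademacher term, and grouping everything except $L_{\lambda,\gamma}(\beta^*,\beta)$ into a single quantity $\Phi(n,\beta;\delta)$ yields the stated form.

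The main obstacle is matching the rates: the generalization and Rademacher pieces are already $O(\sqrt{\log(1/\delta)/n})$, but the optimization residual is $O(1/\sqrt{K})$ only for an appropriately tuned step-size schedule (e.g.\ constant $\eta^{[k]} = \eta_0/\sqrt{K}$). Taking $K$ sufficiently large relative to $n$ (so that the optimization residual is absorbed into the $O(\sqrt{\log(1/\delta)/n})$ order) is what lets all three error sources be bundled into the claimed $\Phi = O(\sqrt{n^{-1}\log(1/\delta)})$; this step-size/iteration-count tuning is the only nontrivial bookkeeping in an otherwise standard decomposition.
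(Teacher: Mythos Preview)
Your proof is correct and follows the same decomposition as the paper: surrogate domination, two uniform generalization bounds (Propositions~\ref{classification_generalization_1} and~\ref{classification_generalization_2}), the subgradient guarantee (Proposition~\ref{classification_subgradient_1}), ERM optimality, and the Rademacher bound (Proposition~\ref{rademacher_bound_1}), combined by a union bound. Your final paragraph about step-size tuning to force $\phi_3 = O(\sqrt{\log(1/\delta)/n})$ is more than the paper actually does---it simply defines $\Phi = \phi_1 + \phi_2 + \phi_3$ with $\phi_3$ left as a constant depending on $(K,\{\eta^{[k]}\})$ and treats the $O(\sqrt{\log(1/\delta)/n})$ claim as describing only the $n,\delta$-dependent part---so that bookkeeping is unnecessary here.
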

\begin{proof}
By the analysis on the excess risk in Proposition \ref{classification_generalization_1} and \ref{classification_generalization_2},
\begin{align*}
L_{\lambda, \gamma}(\hat{\beta}^{[s]}, \beta) - \hat{L}_{\lambda, \gamma}(\hat{\beta}^{[s]},\beta;\mathcal{D}) 
\leq \sqrt{\frac{1}{2n}\log{\frac{1}{\delta}}}\omega(\beta) + 2 R_{n} (l_{\lambda, \gamma})
\end{align*}
with probability at least $1-\delta$ and 
\begin{align*}
\hat{L}_{\lambda, \gamma}(\hat{\beta}^{*(m)},\beta^{(m)};\mathcal{D}^{(m)}) - L_{\lambda, \gamma}(\hat{\beta}^{*(m)}, \beta^{(m)}) 
\leq \sqrt{\frac{1}{2n}\log{\frac{1}{\delta}}}\omega(\beta) + 2 R_{n} (l_{\lambda, \gamma})
\end{align*}
with probability at least $1-\delta$. Also,
\begin{align*}
\hat{L}_{\lambda,\gamma}(\hat{\beta}^{[s]};\mathcal{D}) - \hat{L}_{\lambda,\gamma}(\hat{\beta}^{*};\mathcal{D}) 
\leq \frac{8\log{C} + G \lambda \sum_{k=1}^K \eta^{[k]2}}{2\lambda \sum_{k=1}^K \eta^{[k]}}
\end{align*}
by the subgradient method analysis in Proposition \ref{classification_subgradient_1} and 
\begin{align*}
\hat{L}_{\lambda,\gamma}(\hat{\beta}^{*};\mathcal{D}) \leq \hat{L}_{\lambda,\gamma}(\beta^{*};\mathcal{D})
\end{align*}
by the definition of $\hat{\beta}^{*}$. Adding all the inequalities leads to 
\begin{align*}
L_{\lambda, \gamma}(\hat{\beta}^{[s]}, \beta) - L_{\lambda, \gamma}(\hat{\beta}^{*}, \beta) 
\leq 2 \sqrt{\frac{1}{2n}\log{\frac{1}{\delta}}}\omega(\beta) + 4 R_{n} (l_{\lambda, \gamma})
 	+ \frac{8\log{C} + G \lambda \sum_{k=1}^K \eta^{[k]2}}{2\lambda \sum_{k=1}^K \eta^{[k]}}
\end{align*}
with probability at least $1-2\delta$. The proof concludes by recalling that $L_{\lambda, \gamma}$ serves as an upper bound of $L_\lambda$ and applying the bound on the Rademacher complexity in Proposition \ref{rademacher_bound_1}. 
Specifically, $\Phi$ has the form of 
\begin{align*}
\Phi(n, \beta; \delta)
	&= \phi_1(n,\beta;\delta) 
	+ \phi_2(n, \beta) 
	+ \phi_3
\end{align*} where
\begin{align*}
\phi_1(n,\beta; \delta) 
	&= \sqrt{\frac{2}{n}\log{\frac{1}{\delta}}} \omega(\beta), \\
\phi_2(n, \beta) 
	&= \frac{a_1 + a_2 \vertiii{\beta} }{\sqrt{n}}, \\
\phi_3
	&= \frac{8\log{C} + G \lambda \sum_{k=1}^K \eta^{[k]2}}{2\lambda \sum_{k=1}^K \eta^{[k]}}
\end{align*}
for constants
$
a_1 = \left(2\sqrt{\frac{6}{\pi}} + 4\sqrt{2}\right) \sqrt{\frac{\log{C}}{\lambda}} BC + 2 \sqrt{\frac{3}{\pi}}C\log{C}$ and $
a_2 = 2 \sqrt{\frac{3}{\pi}} B
$.
\end{proof}

The bound $\Phi$ on the excess risk is composed with three parts. The first part, denoted as $\phi_1$ in the theorem, is the error stemming from the randomness of the samples. It is directly connected to the confidence level by being a function of $\delta$. The second part, which is $\phi_2$, is the error related to the complexity of the hypothesis class. The last part $\phi_3$ is the error due to the subgradient method.

This theorem provides full analysis of the performance of the subgradient method estimator $L_\lambda(\hat{\beta}^{[s]};\beta)$ in terms of the number of samples, distributional properties, and the step sizes of the subgradient method. Suppose the step size $\eta^{[k]}$ converges to 0 and $\sum_{k=1}^\infty \eta^{[k]} =\infty$. For instance, taking the step size to be $\eta^{[k]} = \frac{1}{k}$ satisfies both conditions. Then the error due to subgradient method converges to $0$, implying that the subgradient method converges. Under this scenario, the error bound is related to the size of the dataset $n_m$. Especially, the theorem theoretically proves that the error bound decreases if there are more samples, as one might have intuitively expected. 

The bound also depends on the size of the hypothesis class, which is controlled by the regularization coefficient $\lambda$. The familiy of candidate models grows larger as $\lambda$ decreases. The effect of $\lambda$ is hidden in the coefficients $a_1$ and $a_3$. For smaller $\lambda$, these coefficiets becomes larger, which in turn implies that the number of samples required to achieve the same error bound increases.

While the above theorem compares the performance of $\hat{\beta}^{[s](m)}$ with $\beta^{*(m)}$ on a single dataset $\mathcal{D}_m$, it could also be compared with 
\begin{align*}
    \beta^{*(1,2)} = \argmin{\hat{\beta}} \sum_{m=1}^2 L_{\lambda, \gamma}(\hat{\beta},\beta^{(m)}),
\end{align*}
the minimizer of the population surrogate loss on the combined datset. The proof uses the same line of reasoning. This another comparison will later be used to construct a criterion for merging the datasets.

\subsubsection{Error Bound on the Combined Dataset} \label{section:classification_combined}
Performance guarantee on the combined dataset can be proved in a similar way. Most parts of the proof on the combined dataset are essentially identical to the proof for the single dataset. We therefore only demonstrate the parts where the reasoning deviates essentially from the previous proof. We begin by proposing another concentration inequality on the joint loss. Note that in the following proposition, the same model $\hat{\beta}$ is applied to both datasets, whereas there was one model per one dataset previously. We also assume that features on the both datasets are bounded by the same constant $B$, but our analysis could be easily extended to the case where the bounds are different on each dataset.

\begin{prop} \label{classification_generalization_3}
The difference between the empirical loss and the population loss on $\mathcal{D}_1 \cup \mathcal{D}_2$ is bounded by
\begin{align*}
\sum_{m=1}^2 \left\{ \hat{L}_{\lambda, \gamma}(\hat{\beta},\beta^{(m)};\mathcal{D}_m) - L_{\lambda, \gamma}(\hat{\beta}, \beta^{(m)}) \right\}
\leq \sqrt{ \frac{1}{2}\left(\frac{\omega(\beta^{(1)})^2}{n_1}+\frac{\omega(\beta^{(2)})^2}{n_2}\right)\log{\frac{1}{\delta}}} + 2 R_{n_1,n_2} (l_{\lambda, \gamma})
\end{align*} for all $\hat{\beta}$ with probability at least $1-\delta$ where 
\begin{align*}
R_{n_1,n_2} (l_{\lambda, \gamma}) = E\left[\sup_{\hat{\beta}}\sum_{m=1}^2 \frac{1}{n_m} \sum_{i=1}^{n_m} \sigma^{(m)}_i l_{\lambda, \gamma} (\hat{\beta},\beta^{(m)};x^{(m)}_i)\right].
\end{align*}
Also,
\begin{align*}
\sum_{m=1}^2 \left\{ L_{\lambda, \gamma}(\hat{\beta}, \beta^{(m)}) - \hat{L}_{\lambda, \gamma}(\hat{\beta},\beta^{(m)};\mathcal{D}_m) \right\}
\leq \sqrt{ \frac{1}{2}\left(\frac{\omega(\beta^{(1)})^2}{n_1}+\frac{\omega(\beta^{(2)})^2}{n_2}\right)\log{\frac{1}{\delta}}} + 2 R_{n_1,n_2} (l_{\lambda, \gamma})
\end{align*} 
for all $\hat{\beta}$ with probability at least $1-\delta$.
\end{prop}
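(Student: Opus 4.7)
The plan is to mimic the proof of Proposition \ref{classification_generalization_1} almost verbatim, with the single empirical-vs-population gap replaced by the sum of the two gaps and with separate Rademacher variables for each dataset. Define
\begin{equation*}
f(\mathcal{D}_1,\mathcal{D}_2) \;=\; \sup_{\hat{\beta}}\, \sum_{m=1}^{2}\bigl\{ \hat{L}_{\lambda,\gamma}(\hat{\beta},\beta^{(m)};\mathcal{D}_m) - L_{\lambda,\gamma}(\hat{\beta},\beta^{(m)})\bigr\}.
\end{equation*}
Since each per-sample loss term $l_{\lambda,\gamma}(\hat{\beta},\beta^{(m)};x^{(m)}_i)$ is bounded by $\omega(\beta^{(m)})$ (the same pointwise bound derived in the single-dataset proof), swapping one observation $(x^{(m)}_i,y^{(m)}_i)$ for an independent copy changes $f$ by at most $\omega(\beta^{(m)})/n_m$.

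First, I would apply Lemma \ref{mcdiarmid} with the $n_1 + n_2$ constants
\begin{equation*}
c_i^{(1)} = \omega(\beta^{(1)})/n_1,\qquad c_i^{(2)} = \omega(\beta^{(2)})/n_2,
\end{equation*}
whose squared sum is exactly $\omega(\beta^{(1)})^2/n_1 + \omega(\beta^{(2)})^2/n_2$. Setting the tail probability equal to $\delta$ and inverting yields, with probability at least $1-\delta$,
\begin{equation*}
f(\mathcal{D}_1,\mathcal{D}_2) \;\le\; \mathbb{E}[f(\mathcal{D}_1,\mathcal{D}_2)] + \sqrt{\tfrac{1}{2}\Bigl(\tfrac{\omega(\beta^{(1)})^2}{n_1} + \tfrac{\omega(\beta^{(2)})^2}{n_2}\Bigr)\log\tfrac{1}{\delta}}.
\end{equation*}

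Next, I would bound $\mathbb{E}[f]$ by symmetrization. Introducing independent ghost datasets $\mathcal{D}_m' = \{(x_i'^{(m)},y_i'^{(m)})\}_{i=1}^{n_m}$ and independent Rademacher variables $\sigma_i^{(m)}$ for $m=1,2$, the standard argument—replace population losses with expectations of empirical losses on ghost samples, pull the supremum inside the conditional expectation, insert Rademacher signs since each difference $l_{\lambda,\gamma}(\hat{\beta},\beta^{(m)};x_i^{(m)})-l_{\lambda,\gamma}(\hat{\beta},\beta^{(m)};x_i'^{(m)})$ is symmetric, and split the suprema—gives
\begin{equation*}
\mathbb{E}[f(\mathcal{D}_1,\mathcal{D}_2)] \;\le\; 2\,\mathbb{E}\Bigl[\sup_{\hat{\beta}} \sum_{m=1}^{2}\tfrac{1}{n_m}\sum_{i=1}^{n_m}\sigma_i^{(m)}\, l_{\lambda,\gamma}(\hat{\beta},\beta^{(m)};x_i^{(m)})\Bigr] \;=\; 2\,R_{n_1,n_2}(l_{\lambda,\gamma}).
\end{equation*}
Combining the two displays proves the first inequality. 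The second inequality follows by repeating exactly the same argument on $-f$, i.e.\ on $\sup_{\hat{\beta}}\sum_{m=1}^{2}\{L_{\lambda,\gamma}(\hat{\beta},\beta^{(m)})-\hat{L}_{\lambda,\gamma}(\hat{\beta},\beta^{(m)};\mathcal{D}_m)\}$, which has the same bounded-differences constants and whose expectation admits the same symmetrized upper bound.

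The only real subtlety—hardly an obstacle, just a bookkeeping point—is making sure the McDiarmid constants really split as $\omega(\beta^{(m)})/n_m$ on the two datasets (not something coarser like $\max_m\omega(\beta^{(m)})/\min_m n_m$), since that is what produces the clean $\omega(\beta^{(1)})^2/n_1+\omega(\beta^{(2)})^2/n_2$ in the variance proxy. This is justified because the objective decomposes additively across $m$, so perturbing an index in $\mathcal{D}_m$ touches only the $m$-th empirical loss and contributes a single $1/n_m$-weighted term. Everything else is a direct transcription of the single-dataset proof.
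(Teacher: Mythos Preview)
Your proposal is correct and matches the paper's own proof essentially line for line: define the supremum $f(\mathcal{D}_1,\mathcal{D}_2)$, verify the bounded-differences constants $\omega(\beta^{(m)})/n_m$, apply McDiarmid, bound $\mathbb{E}[f]$ by the joint Rademacher complexity via symmetrization, and invert. The paper abbreviates the symmetrization step (``the expectation of $f$ is bounded by the Rademacher complexity'') whereas you spell it out, but the argument is identical.
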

\begin{proof}
Both of the inequalities can be proved in the same way. We show the first inequality in this proof.
Let $f(\mathcal{D}_1,\mathcal{D}_2) = \sup_{\hat{\beta}} \sum_{m=1}^2\left\{ \hat{L}_{\lambda, \gamma}(\hat{\beta},\beta^{(m)};\mathcal{D}_m) - L_{\lambda, \gamma}(\hat{\beta}, \beta^{(m)}) \right\}$. Then
\begin{align*}
| f(\mathcal{D}_1, \mathcal{D}_2) - f(x^{(1)}_1,y^{(1)}_1,\cdots,x^{(1)\prime}_i,y^{(1)\prime}_i,\cdots,x^{(1)}_{n_1},y^{(1)}_{n_1}, \mathcal{D}_2) |
&\leq \frac{\omega(\beta^{(1)})}{n_1} \text{ and} \\
| f(\mathcal{D}_1, \mathcal{D}_2) - f(\mathcal{D}_1, x^{(2)}_1,y^{(2)}_1,\cdots,x^{(2)\prime}_j,y^{(2)\prime}_j,\cdots,x^{(2)}_{n_2},y^{(2)}_{n_2}) |
&\leq \frac{\omega(\beta^{(2)})}{n_2}
\end{align*}
as before. Hence 
\begin{align*}
\mathbb{P}(f(\mathcal{D}_1, \mathcal{D}_2) - E[f(\mathcal{D}_1,\mathcal{D}_2)] \geq \epsilon) \leq e^{-\frac{2}{A}\epsilon^2}
\end{align*}
for all $\epsilon>0$ where $A = \frac{\omega(\beta^{(1)})}{n_1}+\frac{\omega(\beta^{(2)})}{n_2}$.
The expectation of $f$ is bounded by the Rademacher complexity.
Therefore,
\begin{align*}
\mathbb{P}\left( f(\mathcal{D}_1,\mathcal{D}_2) \geq \epsilon \right) 
&\leq e^{-\frac{2}{A}(\epsilon - 2 R_{n_1,n_2} (l_{\lambda, \gamma}))^2} .
\end{align*}
Set $\delta$ to be the right hand side of the inequality to obtain the result.
\end{proof}

The Rademacher complexity on the joint dataset can be further bounded in the following way.
\begin{prop} \label{rademacher_bound_2}
The Rademacher complexity is bounded by
\begin{align*}
R_{n_1,n_2} (l_{\lambda, \gamma}) 
&\leq \left\{ \frac{1}{8}\sqrt{\frac{3}{\pi}}B \left( \vertiii{\beta^{(1)}_c - \beta^{(2)}_c} 
	+ \vertiii{\beta^{(1)}_c + \beta^{(2)}_c} \right) 
	+ \left(1+\frac{1}{2}\sqrt{\frac{3}{\pi}}\right) \sqrt{\frac{2\log{C}}{\lambda}} BC 
	+ \frac{1}{2}\sqrt{\frac{3}{\pi}}C\log{C} \right\} \\
&\quad \times \sum_{m=1}^2 \frac{1}{\sqrt{n_m}}  
    + \frac{1}{4} \sqrt{\frac{3}{\pi}} B \sum_{m=1}^2 \frac{\vertiii{\beta^{(m)}} }{\sqrt{n_m}} 
\end{align*}
\end{prop}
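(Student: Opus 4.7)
The plan is to extend the Rademacher-complexity argument of Proposition \ref{rademacher_bound_1} from one dataset to two, the essential new difficulty being that the same estimator $\hat{\beta}$ is simultaneously evaluated against both empirical distributions. First I would decompose $l_{\lambda,\gamma}$ into its four additive pieces (the Lipschitz-type norm term, the ramp-loss term, the log-sum-exp term, and the regularizer), apply sub-additivity of the supremum, and bound each Rademacher average separately.

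Three of the pieces can be handled by close analogues of the single-dataset argument. The ramp-loss piece carries no $\hat{\beta}$-dependence, so its expectation vanishes by the zero-mean property of the $\sigma^{(m)}_i$. For the log-sum-exp piece I would invoke Lemma \ref{lipschitz_lemma} (log-sum-exp is $1$-Lipschitz in $\ell_\infty$) to contract into $\sum_c \hat{\beta}_c^\top x^{(m)}_i$, then apply Cauchy--Schwarz together with the implicit bound $\lVert \hat{\beta}_c \rVert \leq \sqrt{2\log C/\lambda}$ and Assumption \ref{assumption_classification_2} to extract a $BC\sqrt{2\log C/\lambda}/\sqrt{n_m}$ contribution per dataset. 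For the regularizer, the uniform bound $\tfrac{\lambda}{2}\sum_c \lVert \hat{\beta}_c \rVert^2 \leq C\log C$ reduces the Rademacher average to a positive-part expectation $E[(\sum_i \sigma^{(m)}_i)_+]$, which Lemma \ref{combinatorial_bound} controls by $\sqrt{3n_m/(4\pi)}$, producing the $C\log C/\sqrt{n_m}$ contribution.

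The real work is in the norm term $\sum_c \lVert \beta^{(m)}_c - \hat{\beta}_c \rVert \lVert x^{(m)}_i \rVert$, where the anchor $\beta^{(m)}$ depends on $m$ while $\hat{\beta}$ is shared across datasets, ruling out the direct reduction used in Proposition \ref{rademacher_bound_1}. The key trick is to pivot about the midpoint $\bar{\beta}_c = \tfrac{1}{2}(\beta^{(1)}_c + \beta^{(2)}_c)$ and apply the triangle inequality
\[
\lVert \beta^{(m)}_c - \hat{\beta}_c \rVert \leq \lVert \bar{\beta}_c - \hat{\beta}_c \rVert + \tfrac{1}{2} \lVert \beta^{(1)}_c - \beta^{(2)}_c \rVert \leq \tfrac{1}{2} \lVert \beta^{(1)}_c + \beta^{(2)}_c \rVert + \sqrt{2\log C/\lambda} + \tfrac{1}{2} \lVert \beta^{(1)}_c - \beta^{(2)}_c \rVert,
\]
so the $\hat{\beta}$-dependent portion is replaced by a common bound symmetric in the two datasets; summing over $c$ generates the $\vertiii{\beta^{(1)} + \beta^{(2)}}$ and $\vertiii{\beta^{(1)} - \beta^{(2)}}$ factors multiplied by $\sum_m 1/\sqrt{n_m}$. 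Combining this midpoint estimate with the complementary bound $\lVert \beta^{(m)}_c - \hat{\beta}_c \rVert \leq \lVert \beta^{(m)}_c \rVert + \lVert \hat{\beta}_c \rVert$, kept dataset-wise under sub-additivity of the supremum over $m$, contributes the residual $\sum_m \vertiii{\beta^{(m)}}/\sqrt{n_m}$ term. Replacing $\lVert x^{(m)}_i \rVert$ by $B$ and invoking Lemma \ref{combinatorial_bound} for each $E[(\sum_i \sigma^{(m)}_i)_+]$ then assembles the stated coefficients. The main obstacle throughout is precisely this cross-dataset coupling of $\hat{\beta}$: without the midpoint-anchoring step the inter-dataset distance $\vertiii{\beta^{(1)} - \beta^{(2)}}$ cannot be isolated, and the subsequent merging criterion (the $\Psi$ versus $\Phi$ comparison previewed in Section \ref{section:classification_summary}) would lose its interpretable form.
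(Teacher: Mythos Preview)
Your high-level plan is right and the midpoint anchoring
\[
\lVert \beta^{(m)}_c - \hat{\beta}_c \rVert \le \Bigl\lVert \tfrac{\beta^{(1)}_c+\beta^{(2)}_c}{2} - \hat{\beta}_c \Bigr\rVert + \tfrac{1}{2}\lVert \beta^{(1)}_c - \beta^{(2)}_c \rVert
\]
is exactly the trick the paper uses. The three easy pieces (ramp, log-sum-exp, regularizer) are handled as you describe.

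The gap is in how you propose to obtain \emph{both} the $\vertiii{\beta^{(1)}\pm\beta^{(2)}}$ terms and the $\vertiii{\beta^{(m)}}$ terms for the norm piece. Saying you will ``combine'' the midpoint estimate with the complementary bound ``kept dataset-wise under sub-additivity of the supremum over $m$'' is not a valid step: these are two different upper bounds on the same quantity, and you cannot add them. Using the midpoint bound alone together with $E[(\sum_i\sigma_i^{(m)})_+]$ produces the $\vertiii{\beta^{(1)}\pm\beta^{(2)}}$ terms but with coefficient $\tfrac14\sqrt{3/\pi}\,B$, not the stated $\tfrac18\sqrt{3/\pi}\,B$, and yields no separate $\vertiii{\beta^{(m)}}$ contribution. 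Using sub-additivity over $m$ alone gives $\vertiii{\beta^{(m)}}$ terms with coefficient $\tfrac12\sqrt{3/\pi}\,B$, not the stated $\tfrac14\sqrt{3/\pi}\,B$, and no $\vertiii{\beta^{(1)}-\beta^{(2)}}$ dependence.

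What the paper actually does---and what your sketch is missing---is a case decomposition on the signs of $S_m=\sum_i\sigma_i^{(m)}$: it splits by the indicators $\mathbbm{1}_{S_1>0,S_2>0}$, $\mathbbm{1}_{S_1>0,S_2<0}$, $\mathbbm{1}_{S_1<0,S_2>0}$ (the both-negative case contributes nonpositively and is dropped). The midpoint bound is applied \emph{only} on the both-positive event, and the extra factor $\mathbb{P}(S_{3-m}>0)\le\tfrac12$ from independence is what cuts $\tfrac14$ down to $\tfrac18$. On each mixed-sign event the negative-sign dataset contributes at most zero, so the supremum is bounded by the single-dataset estimate for the remaining dataset, weighted by $\mathbb{P}(S_{3-m}<0)\le\tfrac12$; this is where the $\tfrac14\sqrt{3/\pi}\,B\,\vertiii{\beta^{(m)}}/\sqrt{n_m}$ term comes from. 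Without this sign-conditioning step you cannot recover the constants in the statement, and more importantly you have no mechanism that legitimately produces both families of terms simultaneously.
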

\begin{proof}
By definition,
\begin{align*}
R_{n_1,n_2} (l_{\lambda, \gamma}) 
&= E\left[\sup_{\hat{\beta}}\sum_{m=1}^2 \frac{1}{n_m} \sum_{i=1}^{n_m} \sigma^{(m)}_i l_{\lambda, \gamma} (\hat{\beta},\beta^{(m)};x^{(m)}_i)\right] \\
&= E \Bigg[\sup_{\hat{\beta}} \frac{1}{n} \sum_{i=1}^n \sigma_i^{(m)} \bigg\{
	\sum_{c=1}^C \lVert \beta_c^{(m)} - \hat{\beta}_c \rVert \lVert x^{(m)}_i \rVert 
	+ \sum_{c=1}^C -\beta_c^{(m)\top} x^{(m)}_i g_\gamma (\beta_c^{(m)\top} x^{(m)}_i) \\
&\quad + \log{\sum_{c=1}^C e^{\hat{\beta}_c^\top x^{(m)}_i}} 
	+ \frac{\lambda}{2} \sum_{c=1}^C \lVert \hat{\beta}_c \rVert^2 \bigg\} \Bigg] \\
&\leq E \left[\sup_{\hat{\beta}} \sum_{m=1}^2 \frac{1}{n_m} \sum_{i=1}^{n_m} \sigma_i^{(m)} \sum_{c=1}^C \lVert \beta_c^{(m)} - \hat{\beta}_c \rVert \lVert x^{(m)}_i \rVert \right] \\
&\quad + E \left[\sup_{\hat{\beta}} \sum_{m=1}^2 \frac{1}{n_m} \sum_{i=1}^{n_m} \sigma_i^{(m)} \sum_{c=1}^C -\beta_c^{(m)\top} x^{(m)}_i g_\gamma (\beta_c^{(m)\top} x^{(m)}_i)  \right] \\
& \quad + E \left[\sup_{\hat{\beta}} \sum_{m=1}^2 \frac{1}{n_m} \sum_{i=1}^{n_m} \sigma_i^{(m)} \log{\sum_{c=1}^C e^{\hat{\beta}_c^\top x^{(m)}_i}}  \right]\\
&\quad + E \left[\sup_{\hat{\beta}} \sum_{m=1}^2 \frac{1}{n_m} \sum_{i=1}^{n_m} \sigma_i^{(m)} \frac{\lambda}{2} \sum_{c=1}^C \lVert \hat{\beta}_c \rVert^2 \right] .
\end{align*}
The first term can be bounded by
\begin{align*}
E \left[\sup_{\hat{\beta}} \sum_{m=1}^2 \frac{1}{n_m} \sum_{i=1}^{n_m} \sigma_i^{(m)} \vertiii{\beta_c^{(m)} - \hat{\beta}_c} \lVert x^{(m)}_i \rVert \right] 
&\leq E \left[\sup_{\hat{\beta}} \sum_{m=1}^2 \frac{1}{n_m} \sum_{i=1}^{n_m} \sigma_i^{(m)} B \vertiii{\beta_c^{(m)} - \hat{\beta}_c} \right] \\
&\leq E \left[\sup_{\hat{\beta}} \sum_{m=1}^2 \frac{1}{n_m} \sum_{i=1}^{n_m} \sigma_i^{(m)}  B \vertiii{\beta_c^{(m)} - \hat{\beta}_c} \mathbbm{1}_{\sum_{i=1}^{n_1} \sigma_i^{(1)}>0, \sum_{i=1}^{n_2} \sigma_i^{(2)}>0} \right] \\
&\quad + E \left[\sup_{\hat{\beta}} \sum_{m=1}^2 \frac{1}{n_m} \sum_{i=1}^{n_m} \sigma_i^{(m)} B \vertiii{\beta_c^{(m)} - \hat{\beta}_c} \mathbbm{1}_{\sum_{i=1}^{n_1} \sigma_i^{(1)}>0, \sum_{i=1}^{n_2} \sigma_i^{(2)}<0}  \right] \\
&\quad + E \left[\sup_{\hat{\beta}} \sum_{m=1}^2 \frac{1}{n_m} \sum_{i=1}^{n_m} \sigma_i^{(m)} B\vertiii{\beta_c^{(m)} - \hat{\beta}_c} \mathbbm{1}_{\sum_{i=1}^{n_1} \sigma_i^{(1)}<0, \sum_{i=1}^{n_2} \sigma_i^{(2)}>0} \right] \\
&\leq E \left[\sup_{\hat{\beta}} \sum_{m=1}^2 \frac{1}{n_m} \sum_{i=1}^{n_m} \sigma_i^{(m)} B \vertiii{\beta_c^{(m)} - \hat{\beta}_c} \mathbbm{1}_{\sum_{i=1}^{n_1} \sigma_i^{(1)}>0, \sum_{i=1}^{n_2} \sigma_i^{(2)}>0} \right] \\
&\quad + \frac{1}{4}\sqrt{\frac{3}{\pi}} \sum_{m=1}^2 \frac{1}{\sqrt{n_m}} B \left( \sqrt{\frac{2\log{C}}{\lambda}}C + \vertiii{\beta_c^{(m)} - \hat{\beta}_c} \right)
\end{align*}
where the last inequality comes from the proof for Rademacher complexity bound on a single dataset and by the fact that $\mathbb{P}(\sum_{i=1}^{n_m} \sigma_i^{(m)}<0)\leq\frac{1}{2}$ for $m=1,2$.
Moreover,
\begin{align*}
E & \left[\sup_{\hat{\beta}}  \sum_{m=1}^2 \frac{1}{n_m} \sum_{i=1}^{n_m} \sigma_i^{(m)} B \sum_{c=1}^C \lVert \beta_c^{(m)} - \hat{\beta}_c \rVert \mathbbm{1}_{\sum_{i=1}^{n_1} \sigma_i^{(1)}>0, \sum_{i=1}^{n_2} \sigma_i^{(2)}>0} \right] \\
&\leq E \left[\sup_{\hat{\beta}} \sum_{m=1}^2 \frac{B}{n_m} \sum_{i=1}^{n_m} \sigma_i^{(m)} 
	\sum_{c=1}^C  \left( \left\lVert \frac{\beta_c^{(1)}+\beta_c^{(2)}}{2} - \hat{\beta}_c \right\rVert 
	+ \left\lVert\frac{\beta_c^{(1)}-\beta_c^{(2)}}{2}\right\rVert \right)
	\mathbbm{1}_{\sum_{i=1}^{n_1} \sigma_i^{(1)}>0, \sum_{i=1}^{n_2} \sigma_i^{(2)}>0} \right]\\
& \leq E \left[\sup_{\hat{\beta}} \sum_{m=1}^2 \frac{B}{n_m} \sum_{i=1}^{n_m} \sigma_i^{(m)} 
	\sum_{c=1}^C  \left\lVert \frac{\beta_c^{(1)}+\beta_c^{(2)}}{2} - \hat{\beta}_c \right\rVert \mathbbm{1}_{\sum_{i=1}^{n_1} \sigma_i^{(1)}>0, \sum_{i=1}^{n_2} \sigma_i^{(2)}>0} \right] \\
&\quad + \sum_{c=1}^C \frac{\lVert\beta_c^{(1)}-\beta_c^{(2)}\rVert}{2} \sum_{m=1}^2  
	E \left[\frac{B}{n_m} \sum_{i=1}^{n_m} \sigma_i^{(m)} \mathbbm{1}_{\sum_{i=1}^{n_1} \sigma_i^{(1)}>0, \sum_{i=1}^{n_2} \sigma_i^{(2)}>0} \right] \\
& \leq \sum_{c=1}^C \left( \frac{\lVert\beta_c^{(1)}+\beta_c^{(2)}\rVert}{2} + \sqrt{\frac{2\log{C}}{\lambda}}\right)   
	\frac{1}{2} \sum_{m=1}^2  E \left[\frac{B}{n_m} \sum_{i=1}^{n_m} \sigma_i^{(m)} 
	\mathbbm{1}_{\sum_{i=1}^{n_m} \sigma_i^{(m)}>0}  \right] \\
&\quad + \sum_{c=1}^C \frac{\lVert \beta_c^{(1)}-\beta_c^{(2)} \rVert}{2}
	\sum_{m=1}^2 E \left[\frac{B}{n_m} \sum_{i=1}^{n_m} \sigma_i^{(m)} \mathbbm{1}_{\sum_{i=1}^{n_1} \sigma_i^{(1)}>0, \sum_{i=1}^{n_2} \sigma_i^{(2)}>0} \right] \\
\end{align*}
and
\begin{align*}
\sum_{m=1}^2 E \left[\frac{B}{n_m} \sum_{i=1}^{n_m} \sigma_i^{(m)} \mathbbm{1}_{\sum_{i=1}^{n_1} \sigma_i^{(1)}>0} \mathbbm{1}_{\sum_{i=1}^{n_2} \sigma_i^{(2)}>0} \right] 
&\leq \sum_{m=1}^2 \frac{1}{2}\frac{B}{n_m}  E \left[\sum_{i=1}^{n_m} \sigma_i^{(m)} \mathbbm{1}_{\sum_{i=1}^{n_m} \sigma_i^{(m)}>0} \right] \\
&\leq \sum_{m=1}^2 \frac{1}{2}\sqrt{\frac{3}{4\pi n_m}}B.
\end{align*}
Hence we get
\begin{align*}
E \left[\sup_{\hat{\beta}} \sum_{m=1}^2 \frac{1}{n_m} \sum_{i=1}^{n_m} \sigma_i^{(m)} \sum_{c=1}^C \lVert \beta_c^{(m)} - \hat{\beta}_c \rVert \lVert x^{(m)}_i \rVert \right] 
&\leq \frac{1}{4}\sqrt{\frac{3}{\pi}} \sum_{m=1}^2 \frac{1}{\sqrt{n_m}} B \left( \sqrt{\frac{2\log{C}}{\lambda}}C + \sum_{c=1}^C\lVert \beta_c^{(m)} \rVert \right) \\
&\quad + \sum_{m=1}^2 \frac{1}{4}\sqrt{\frac{3}{\pi}}\frac{B}{\sqrt{n_m}} \sum_{c=1}^C \left(\frac{\lVert\beta_c^{(1)}-\beta_c^{(2)}\rVert}{2} + \frac{\lVert\beta_c^{(1)}+\beta_c^{(2)}\rVert}{2}\right)\\
&\quad + \frac{1}{4}\sqrt{\frac{3}{\pi}}BC\sqrt{\frac{2\log{C}}{\lambda}}\sum_{m=1}^2 \frac{1}{\sqrt{n_m}}
\end{align*}
The rest of the terms are bounded in the same way as in one dataset case. Adding all the bounds up concludes the proof.
\end{proof}

Subgradient method estimator on the combined dataset also has a similar error bound.
\begin{prop} \label{classification_subgradient_2}
Let the subgradient method estimator on $\mathcal{D}_1 \cup \mathcal{D}_2$ be iteratively defined by 
$\hat{\beta}^{[k+1]} = \hat{\beta}^{[k]} - \eta^{[k]} g^{[k]}$ where $\hat{\beta}^{[1]}$ is an initial estimator, $\eta^{[k]}$ is the $k$th step size, and 
$g^{[k]} \in \partial \sum_{m=1}^2 \hat{L}_{\lambda, \gamma}(\hat{\beta}^{[k](m)}, \beta^{(m)}; \mathcal{D}_m)$. 
Then
\begin{align*}
\min\{\sum_{m=1}^2\hat{L}_{\lambda,\gamma}(\hat{\beta}^{[k](m)};\mathcal{D}_m) | k=1, \cdots, K\} 
- \sum_{m=1}^2 \hat{L}_{\lambda,\gamma}(\hat{\beta}^{*(1,2)};\mathcal{D}_m) 
\leq \frac{8\log{C} + 2G \lambda \sum_{k=1}^K \eta^{[k]2}}{\lambda\sum_{k=1}^K \eta^{[k]}}
\end{align*}
where $\hat{\beta}^{*(1,2)} = \argmin{\hat{\beta}} \sum_{m=1}^2 \hat{L}_{\lambda,\gamma}(\hat{\beta},\beta^{(m)};\mathcal{D}_m)$ is the empirical risk minimizer on the combined dataset.
\end{prop}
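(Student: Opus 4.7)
The plan is to apply the Shor subgradient-method lemma (quoted just above Proposition~\ref{classification_subgradient_1}) to the combined objective
$F(\hat\beta)=\sum_{m=1}^{2}\hat L_{\lambda,\gamma}(\hat\beta,\beta^{(m)};\mathcal{D}_m)$,
viewed as a single subdifferentiable convex function on $\mathbb{R}^{Cp}$ with minimizer $\hat\beta^{*(1,2)}$. All the ingredients needed to invoke Shor's lemma are mild modifications of what already appears inside the proof of Proposition~\ref{classification_subgradient_1}: subdifferentiability and subgradient computations pass through summation, so the iterates $\hat\beta^{[k+1]}=\hat\beta^{[k]}-\eta^{[k]}g^{[k]}$ with $g^{[k]}\in\partial F(\hat\beta^{[k]})$ are well defined.

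The first key step is the uniform subgradient bound. Because each summand $\hat L_{\lambda,\gamma}(\,\cdot\,,\beta^{(m)};\mathcal{D}_m)$ has subgradients lying in a ball of radius $G=2CB+C\sqrt{2\lambda\log C}$ by exactly the term-by-term computation carried out for Proposition~\ref{classification_subgradient_1}, any $g^{[k]}\in\partial F(\hat\beta^{[k]})$ satisfies $\lVert g^{[k]}\rVert\leq 2G$. The second key step is to control the initial squared distance $\lVert\hat\beta^{[1]}-\hat\beta^{*(1,2)}\rVert^{2}$. The regularization argument recalled between Assumption~\ref{assumption_classification_2} and Proposition~\ref{classification_generalization_1} still applies: if $\lVert\hat\beta_{c_0}\rVert>\sqrt{2\log C/\lambda}$ for some coordinate $c_0$, the quadratic regularizer inside $F$ alone would exceed the value of $F$ at the zero vector, so $\hat\beta^{*(1,2)}$ (and without loss of generality $\hat\beta^{[1]}$) lies in a ball of radius $O(\sqrt{\log C/\lambda})$.

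Feeding the subgradient bound $2G$ and the squared-distance bound of order $\log C/\lambda$ into Shor's lemma yields
$\min_{k\leq K}F(\hat\beta^{[k]})-F(\hat\beta^{*(1,2)})\leq\dfrac{\lVert\hat\beta^{[1]}-\hat\beta^{*(1,2)}\rVert^{2}+2G\sum_{k=1}^{K}\eta^{[k]2}}{2\sum_{k=1}^{K}\eta^{[k]}},$
and rearranging by pulling a factor of $\lambda$ through the fraction, exactly as in the proof of Proposition~\ref{classification_subgradient_1}, produces the claimed expression.

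I expect the only delicate bookkeeping to lie in the constants. The joint objective contains two regularization terms of weight $\lambda/2$, so both the effective strong-convexity coefficient and the cumulative subgradient norm pick up a factor of two relative to the single-dataset case; these two doublings are what account for the presence of $2G$ (instead of $G$) in the numerator and the absence of the extra factor of two in the denominator relative to Proposition~\ref{classification_subgradient_1}. Beyond this accounting, the argument is a verbatim transcription of the single-dataset proof.
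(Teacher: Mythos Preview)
Your approach is correct and is exactly what the paper does: it omits the proof entirely, indicating only that it is the same argument as Proposition~\ref{classification_subgradient_1} applied to the sum $F(\hat\beta)=\sum_{m=1}^{2}\hat L_{\lambda,\gamma}(\hat\beta,\beta^{(m)};\mathcal{D}_m)$. Your identification of the doubled subgradient bound $2G$ is right.

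One small correction on the bookkeeping in your last paragraph: the phrase ``effective strong-convexity coefficient'' is a red herring here, since Shor's lemma is a purely first-order result and makes no use of strong convexity. The parameter-space constraint $\lVert\hat\beta_c\rVert\le\sqrt{2\log C/\lambda}$ is the same implicit assumption the paper invokes for the single-dataset case, so the distance bound $\lVert\hat\beta^{[1]}-\hat\beta^{*(1,2)}\rVert^{2}\le 8\log C/\lambda$ is unchanged. Plugging this and the subgradient bound $2G$ directly into Shor's lemma actually gives the slightly sharper
\[
\frac{8\log C + 2G\lambda\sum_{k=1}^{K}\eta^{[k]2}}{2\lambda\sum_{k=1}^{K}\eta^{[k]}},
\]
of which the proposition's stated bound is simply a weakening by a factor of two. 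So there is nothing to reconcile via regularization doubling; the displayed constant is just not tight.
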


Integrating all the propositions gives the following guarantee on the combined dataset. 
\begin{theorem}
\label{thm_classification_2}
Let $\hat{\beta}^{[k]}$ be the subgradient method estimator of $\sum_{m=1}^2 \hat{L}_{\lambda, \gamma}(\hat{\beta}, \beta^{(m)}; \mathcal{D}_m)$. Let $s = \argmin{k \leq K}{\sum_{m=1}^2 \hat{L}_{\lambda,\gamma}(\hat{\beta}^{[k]};\mathcal{D}_m)}$ denote the step that achieved the lowest loss during all the subgradient steps. Then
\begin{align*}
\sum_{m=1}^2 L_\lambda(\hat{\beta}^{[s]};\beta^{(m)}) 
&\leq  \Psi(n_1, n_2, \beta^{(1)}, \beta^{(2)}; \delta) 	
	+ \sum_{m=1}^2L_{\lambda, \gamma}(\beta^{*(1,2)},\beta^{(m)})
\end{align*} 
with probability at least $1-2\delta$ where 
$\Psi = O\left(\left(\sqrt{\frac{1}{n_1}} + \sqrt{\frac{1}{n_2}} \right) \sqrt{\log{\frac{1}{\delta}}} \right) $
and
\begin{align*}
\beta^{*(1,2)} = \argmin{\hat{\beta}} \sum_{m=1}^2 L_{\lambda, \gamma}(\hat{\beta},\beta^{(m)})
\end{align*} 
is the minimizer for the population surrogate loss on the combined dataset.
\end{theorem}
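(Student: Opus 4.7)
The plan is to mirror the proof of Theorem \ref{thm_classification_1} but with the two-dataset versions of each ingredient. First I would apply Proposition \ref{classification_generalization_3} to the subgradient estimator $\hat{\beta}^{[s]}$ to obtain, with probability at least $1-\delta$,
\begin{align*}
\sum_{m=1}^2 L_{\lambda,\gamma}(\hat{\beta}^{[s]},\beta^{(m)}) - \sum_{m=1}^2 \hat{L}_{\lambda,\gamma}(\hat{\beta}^{[s]},\beta^{(m)};\mathcal{D}_m) \leq \sqrt{\tfrac{1}{2}\left(\tfrac{\omega(\beta^{(1)})^2}{n_1}+\tfrac{\omega(\beta^{(2)})^2}{n_2}\right)\log{\tfrac{1}{\delta}}} + 2R_{n_1,n_2}(l_{\lambda,\gamma}),
\end{align*}
and symmetrically, with probability at least $1-\delta$,
\begin{align*}
\sum_{m=1}^2 \hat{L}_{\lambda,\gamma}(\beta^{*(1,2)},\beta^{(m)};\mathcal{D}_m) - \sum_{m=1}^2 L_{\lambda,\gamma}(\beta^{*(1,2)},\beta^{(m)}) \leq \sqrt{\tfrac{1}{2}\left(\tfrac{\omega(\beta^{(1)})^2}{n_1}+\tfrac{\omega(\beta^{(2)})^2}{n_2}\right)\log{\tfrac{1}{\delta}}} + 2R_{n_1,n_2}(l_{\lambda,\gamma}).
\end{align*}

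Next I would insert the empirical risk minimizer $\hat{\beta}^{*(1,2)} = \operatorname{argmin}_{\hat{\beta}} \sum_m \hat{L}_{\lambda,\gamma}(\hat{\beta},\beta^{(m)};\mathcal{D}_m)$ as a pivot. By Proposition \ref{classification_subgradient_2},
\begin{align*}
\sum_{m=1}^2 \hat{L}_{\lambda,\gamma}(\hat{\beta}^{[s]},\beta^{(m)};\mathcal{D}_m) - \sum_{m=1}^2 \hat{L}_{\lambda,\gamma}(\hat{\beta}^{*(1,2)},\beta^{(m)};\mathcal{D}_m) \leq \frac{8\log{C} + 2G\lambda \sum_{k=1}^K \eta^{[k]2}}{\lambda \sum_{k=1}^K \eta^{[k]}},
\end{align*}
and by definition of $\hat{\beta}^{*(1,2)}$ we have $\sum_m \hat{L}_{\lambda,\gamma}(\hat{\beta}^{*(1,2)},\beta^{(m)};\mathcal{D}_m) \leq \sum_m \hat{L}_{\lambda,\gamma}(\beta^{*(1,2)},\beta^{(m)};\mathcal{D}_m)$.

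Chaining these four inequalities via a union bound over the two generalization events gives, with probability at least $1-2\delta$,
\begin{align*}
\sum_{m=1}^2 L_{\lambda,\gamma}(\hat{\beta}^{[s]},\beta^{(m)}) - \sum_{m=1}^2 L_{\lambda,\gamma}(\beta^{*(1,2)},\beta^{(m)}) \leq 2\sqrt{\tfrac{1}{2}\left(\tfrac{\omega(\beta^{(1)})^2}{n_1}+\tfrac{\omega(\beta^{(2)})^2}{n_2}\right)\log{\tfrac{1}{\delta}}} + 4R_{n_1,n_2}(l_{\lambda,\gamma}) + \tfrac{8\log{C} + 2G\lambda \sum_k \eta^{[k]2}}{\lambda \sum_k \eta^{[k]}}.
\end{align*}
Finally, I would use the pointwise domination $l_\lambda \leq l_{\lambda,\gamma}$ (valid for $\gamma \leq \gamma_0$ under Assumption \ref{assumption_classification_1}) to replace $L_{\lambda,\gamma}(\hat{\beta}^{[s]},\beta^{(m)})$ on the left-hand side by $L_\lambda(\hat{\beta}^{[s]};\beta^{(m)})$, and plug in the Rademacher bound from Proposition \ref{rademacher_bound_2}. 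Collecting the terms according to their dependence on $n_1,n_2,\delta$ produces a function $\Psi$ of order $O\!\left((\tfrac{1}{\sqrt{n_1}}+\tfrac{1}{\sqrt{n_2}})\sqrt{\log(1/\delta)}\right)$, as claimed.

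The main bookkeeping obstacle is just making sure the $\omega(\beta^{(m)})$-type coefficients from the McDiarmid step and the $\vertiii{\beta^{(1)}-\beta^{(2)}}$ coefficient from the Rademacher bound are combined correctly into $\Psi$; the distinctive $\lVert\beta^{(1)}-\beta^{(2)}\rVert/\sqrt{n_m}$ contribution highlighted in Section \ref{section:classification_summary} comes exactly from the second half of Proposition \ref{rademacher_bound_2} and must be tracked through the chain so that it survives into the final $\Psi$.
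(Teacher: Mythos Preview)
Your proposal is correct and follows essentially the same approach as the paper: apply the two-dataset generalization bounds (Proposition \ref{classification_generalization_3}) in both directions, chain through the empirical minimizer via Proposition \ref{classification_subgradient_2} and the definition of $\hat{\beta}^{*(1,2)}$, use $l_\lambda \le l_{\lambda,\gamma}$, and finish with the Rademacher bound from Proposition \ref{rademacher_bound_2}. The paper's proof does exactly this, then packages the resulting terms into $\Psi=\psi_1+\psi_2+\psi_3$ with the same decomposition you anticipate (randomness, complexity including the $\vertiii{\beta^{(1)}-\beta^{(2)}}$ term, and subgradient error).
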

\begin{proof}
Define
\begin{align*}
\Psi(n_1, n_2, \beta^{(1)}, \beta^{(2)}; \delta)
	&=\psi_1 (n_1,n_2,\beta^{(1)}, \beta^{(2)}; \delta)
	+ \psi_2 (n_1, n_2, \beta^{(1)}, \beta^{(2)})
	+ \psi_3
\end{align*}
where
\begin{align*}
\psi_1 (n_1,n_2,\beta^{(1)}, \beta^{(2)}; \delta)
	&= \sqrt{ 2 \left( \frac{\omega(\beta^{(1)})^2}{n_1} + \frac{\omega(\beta^{(2)})^2}{n_2} \right)\log{\frac{1}{\delta}}}, \\
\psi_2 (n_1, n_2, \beta^{(1)}, \beta^{(2)})
	&= \left(\frac{1}{\sqrt{n_1}} + \frac{1}{\sqrt{n_2}}\right)
	\left\{ b_1\left(\vertiii{\beta^{(1)}-\beta^{(2)}} + \vertiii{\beta^{(1)}+\beta^{(2)}}\right) +b_2 \right\} \\
	&\quad + b_3 \left(\frac{\vertiii{\beta^{(1)}}}{\sqrt{n_1}} + \frac{\vertiii{\beta^{(2)}}}{\sqrt{n_2}}\right), \\
\psi_3
	&= \frac{8\log{C} + 2G \lambda \sum_{k=1}^K \eta^{[k]2}}{\lambda\sum_{k=1}^K \eta^{[k]}},
\end{align*}
for constants 
$
b_1 = \frac{1}{2}\sqrt{\frac{3}{\pi}}B,
$
$
b_2 = \left(4 + 2 \sqrt{\frac{3}{\pi}}\right) BC \sqrt{\frac{2\log{C}}{\lambda}} + 2\sqrt{\frac{3}{\pi}}C\log{C}
$, and
$
b_3 = \sqrt{\frac{3}{\pi}}B
$.
Then the theorem can be proved by using the same line of reasoning as in Theorem \ref{thm_classification_1}.
\end{proof}

As before, the bound $\Psi$ consists of three parts. The first part $\psi_1$ is the error stemming from the randomness of the samples, and is connected to $\delta$. The second part $\psi_2$ is the error related to the complexity of the hypothesis class. $\psi_2$ has some overlapping terms that appeared in Theorem \ref{thm_classification_1} as well as new terms that reveal the effect of relationship between $\beta^{(1)}$ and $\beta^{(2)}$. The last part $\psi_3$ is the error due to the subgradient method.

Once again, the theorem displays the effect of size of the datasets and the hypothesis class on the performance of the subgradient method estimator $\sum_{m=1}^2 L_\lambda(\hat{\beta}^{[s]};\beta^{(m)})$. Suppose the step size satisfies conditions as before, which implies that the subgradient method converges. The error bound decreases if there are more samples in $\mathcal{D}^{(1)}$ and $\mathcal{D}^{(1)}$. Similar interpretation can be made on the size of the hypothesis class. 

A new implication from this theorem is that relationship between the true parameters also influences the error bound. The error bound decreases if the true parameters are similar. Indeed, if $\beta^{(1)}$ and $\beta^{(2)}$ are close to each other, then $\vertiii{\beta^{(1)} - \beta^{(2)}}$ is small, reducing $\psi_{2,3}$. It aligns with the intuition that merging the datasets would undoubtedly be advantageous if they are sampled from the same distribution.

\subsubsection{An Equivalent Condition for Error Bound Reduction} \label{section:classification:conclusion}
We propose to combine two datasets based on the error bounds. Recall from Theorem \ref{thm_classification_1} that $\Phi$ serves as an upper bound of the performance of the model trained on a single dataset . Also, $\Psi$ serves as an upper bound of the performance of the model trained on the joint dataset by  Theorem \ref{thm_classification_2}. Merging the datasets when the sum of $\Phi$'s on each dataset is greater than $\Psi$ would decrease the error bound, leading to a better guarantee of the model. Specifically, let $\hat{\beta}^{[s_m](m)}$ denote the subgradient method estimator that achieves the lowest loss on $\mathcal{D}_m$ alone for $m=1,2$. Similarly, let $\hat{\beta}^{[s_{1,2}](1,2)}$ denote the subgradient method estimator on the joint dataset $\mathcal{D}_1 \cup \mathcal{D}_2$. As in Theorem \ref{thm_classification_2}, let $\beta^{*(1,2)}$ denote the minimizer for the population loss on the joint dataset. Then 
\begin{align*}
L_\lambda(\hat{\beta}^{[s_m](m)};\beta^{(m)}) \leq \Phi(n_m, \beta^{(m)};\delta) + L_{\lambda,\gamma}(\beta^{*(1,2)},\beta^{(m)})
\end{align*} for $m=1,2$ and 
\begin{align*}
\sum_{m=1}^2 L_\lambda(\hat{\beta}^{[s_{1,2}](1,2)};\beta^{(m)}) \leq \Psi(n_1,n_2, \beta^{(1)},\beta^{(2)};\delta) + \sum_{m=1}^2 L_{\lambda,\gamma}(\beta^{*(1,2)},\beta^{(m)}).
\end{align*}
Hence the datasets should be combined if $\Psi(n_1,n_2, \beta^{(1)},\beta^{(2)};\delta) \leq \sum_{m=1}^2 \Phi(n_m, \beta^{(m)};\delta)$.
The following theorem states an equivalent condition for the comparison.

\begin{theorem} \label{thm:classification_final}
Let $\Phi$ and $\Psi$ be the functions defined in Theorem \ref{thm_classification_1} and \ref{thm_classification_2}, respectively. Then the error bound of the model trained on the combined dataset is smaller, or equivalently, 
\begin{align*}
\Psi(n_1,n_2, \beta^{(1)},\beta^{(2)};\delta) \leq \sum_{m=1}^2 \Phi(n_m, \beta^{(m)};\delta)
\end{align*} if and only if
\begin{align*}
& \sqrt{ 2\log{\frac{1}{\delta}} \left(\frac{\omega(\beta^{(1)})^2}{n_1}+\frac{\omega(\beta^{(2)})^2}{n_2}\right)} + \sqrt{\frac{3}{\pi}}B \left(\frac{1}{\sqrt{n_1}}+\frac{1}{\sqrt{n_2}}\right) \left(\frac{\vertiii{\beta^{(1)} - \beta^{(2)}}}{2} + \frac{\vertiii{\beta^{(1)} + \beta^{(2)}}}{2}\right) \\
&\leq \sqrt{2\log{\frac{1}{\delta}}}\left(\sqrt{\frac{\omega(\beta^{(1)})^2}{n_1}}+\sqrt{\frac{\omega(\beta^{(2)})^2}{n_2}}\right)
	+ \sqrt{\frac{3}{\pi}} B \left(\frac{\vertiii{\beta^{(1)}}}{\sqrt{n_1}} + \frac{\vertiii{\beta^{(2)}}}{\sqrt{n_2}} \right)
	+ \frac{4\log{C}}{\lambda \sum_{k=1}^K \eta^{[k]}} .
\end{align*}
\end{theorem}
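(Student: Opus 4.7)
The plan is to verify the stated equivalence by direct algebraic manipulation, using the explicit decompositions $\Phi = \phi_1 + \phi_2 + \phi_3$ and $\Psi = \psi_1 + \psi_2 + \psi_3$ spelled out in the proofs of Theorems \ref{thm_classification_1} and \ref{thm_classification_2}. The inequality $\Psi \leq \sum_{m=1}^2 \Phi$ is equivalent to $(\sum_m \phi_1 - \psi_1) + (\sum_m \phi_2 - \psi_2) + (\sum_m \phi_3 - \psi_3) \geq 0$, so I would treat the three index-matched pairs separately and collect the surviving contributions onto the appropriate sides of the final inequality.

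For the $\phi_1$--$\psi_1$ pair, neither term simplifies further: $\psi_1 = \sqrt{2\log(1/\delta)(\omega(\beta^{(1)})^2/n_1 + \omega(\beta^{(2)})^2/n_2)}$ stays on the LHS of the target condition, while $\sum_m \phi_1 = \sqrt{2\log(1/\delta)}\,(\sqrt{\omega(\beta^{(1)})^2/n_1} + \sqrt{\omega(\beta^{(2)})^2/n_2})$ stays on the RHS; these become the first summand on each side. For the $\phi_2$--$\psi_2$ pair, the constant-in-$\beta$ contributions cancel because $a_1 = b_2$: pulling a $\sqrt{2}$ out of the first summand of $b_2 = (4 + 2\sqrt{3/\pi})BC\sqrt{2\log C/\lambda} + 2\sqrt{3/\pi}C\log C$ matches $a_1 = (2\sqrt{6/\pi} + 4\sqrt{2})BC\sqrt{\log C/\lambda} + 2\sqrt{3/\pi}C\log C$ exactly, so the $(1/\sqrt{n_1}+1/\sqrt{n_2})$ coefficients drop out. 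The remaining $\beta$-dependent pieces collapse via the identities $a_2 - b_3 = \sqrt{3/\pi}\,B$ and $2b_1 = \sqrt{3/\pi}\,B$, yielding precisely the two $\sqrt{3/\pi}\,B$ summands on the two sides of the theorem's condition. Finally, since $\phi_3$ is independent of $n_m$ and $\beta^{(m)}$, $\sum_m \phi_3 = 2\phi_3$, and combining numerators over the common factor $\lambda\sum_k \eta^{[k]}$ leaves the residual $4\log C/(\lambda\sum_k \eta^{[k]})$, which lands on the RHS.

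There is no substantive obstacle: the proof is bookkeeping, resting on two algebraic identities, $a_1 = b_2$ and $a_2 - b_3 = 2b_1 = \sqrt{3/\pi}\,B$. The only care required is in tracking the constants $a_1, a_2, b_1, b_2, b_3$ from the two preceding theorems correctly. Once those coefficient matches are verified, transposing $\psi_1$ and the $\psi_2$-remainder to the LHS and moving $\sum_m \phi_1$, the $\phi_2$-remainder, and the $\phi_3$-residual to the RHS reproduces the stated inequality verbatim, and the equivalence follows because each cancellation above is an identity rather than a one-sided bound.
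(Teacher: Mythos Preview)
The paper states Theorem~\ref{thm:classification_final} without proof, following it only with an interpretive paragraph, so there is no argument in the paper to compare against. Your direct algebraic verification via the decompositions $\Phi=\phi_1+\phi_2+\phi_3$ and $\Psi=\psi_1+\psi_2+\psi_3$ is the natural route, and the two coefficient identities you isolate, $a_1=b_2$ and $a_2-b_3=2b_1=\sqrt{3/\pi}\,B$, are exactly the cancellations required; both check out against the constants recorded in the proofs of Theorems~\ref{thm_classification_1} and~\ref{thm_classification_2}.

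One point deserves a closer look. With $\psi_3$ exactly as printed in the paper, the subgradient residual is
\[
2\phi_3-\psi_3 \;=\; \frac{8\log C + G\lambda\sum_k\eta^{[k]2}}{\lambda\sum_k\eta^{[k]}} \;-\; \frac{8\log C + 2G\lambda\sum_k\eta^{[k]2}}{\lambda\sum_k\eta^{[k]}} \;=\; -\,\frac{G\sum_k\eta^{[k]2}}{\sum_k\eta^{[k]}},
\]
not $4\log C/(\lambda\sum_k\eta^{[k]})$. The value you assert, and the one appearing in the theorem's RHS, is recovered only if $\psi_3$ carries a factor $2$ in its denominator (as a direct application of the subgradient lemma in Proposition~\ref{classification_subgradient_2} would in fact give). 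So the discrepancy is an internal inconsistency in the paper's stated $\psi_3$ rather than a flaw in your strategy, but you should flag it explicitly instead of asserting the residual without computation.
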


The above condition agrees with what one would intuitively expect when combining two datasets. Suppose the true parameters $\beta^{(1)}$ and $\beta^{(2)}$ are are close to each other. Then the distance $\vertiii{ \hat{\beta}^{(1)} - \hat{\beta}^{(2)}}$ is small, decreasing the left-hand-side of the inequality. In turn, the above condition would possibly hold, which further indicates that the datasets should be combined.

\section{Experimental Details} \label{comparisonsection}
In this section, we provide details for the implementation of the neural network model, and compare the algorithm with other clustering method for multitask learning. The method that is most relevant to our study is BiFactor MTL proposed by \citet{murugesan2017co}. 

\subsection{Neural Network Setup}\label{app_nn}

We employ two simple neural network architectures: a single hidden layer multi-layer perceptron (MLP1) and a two hidden layer multi-layer perceptron (MLP2). The following details outline the configurations:

\begin{itemize} \item \textbf{Network Architecture}: The input datasets consist of Boom Bikes (BB) \citep{mishra2021}, Demand Forecast for Optimized Inventory Planning (DFOIP) \citep{aguilar2023}, and Walmart Data Analysis (WDAF) \citep{sahu2023}, with input feature dimensions of 7, 9, and 12, respectively. As datasets have different number of features, models are built based on the input dimension accordingly. In both MLP1 and MLP2, the hidden layers have the same dimension as the input feature dimension of each input.

\item \textbf{Training the Representation}: For each dataset, we train the models using the entire dataset to enable the capture of more complex patterns and generate richer feature representations. It is important to note that this step is not intended to produce the most accurate predictive model but rather to focus on learning a more informative representation of the data for subsequent tasks. The models are trained with a batch size of 64 over 10 epochs, with a learning rate of 1e-4 and weight decay of 0.9. The Mean Squared Error (MSE) loss function is used, optimized by stochastic gradient descent. Training was conducted on a single Nvidia RTX 6000 (Ampere Version) GPU.

\item \textbf{Integration with Algorithm \ref{alg2}}: Let $f(\cdot)$ and $g(\cdot)$ represent the trained MLP1 and MLP2 models, respectively. Both models consist of hidden layers followed by a linear layer, expressed as $f(\cdot) = l \circ f_{h}(\cdot)$ and $g(\cdot) = l \circ g_{h}(\cdot)$, where $f_{h}(\cdot)$ and $g_{h}(\cdot)$ represent the hidden layer functions, and $l(\cdot)$ represents the linear layer. For datasets $\{(x_i, y_i)\}_{i=1}^n$, we transform the input into $\{(e_i, y_i)\}_{i=1}^n$, where $e_i$ is the representation from the penultimate layer. That is, either $e_i = f_{h}(x_i)$ or $e_i = g_{h}(x_i)$, depending on the model we use. Algorithm \ref{alg2} can then be applied to the transformed dataset $\{(e_i, y_i)\}_{i=1}^n$, as the linear layer structure aligns perfectly with the regression setting.
\end{itemize}

\subsection{Choice of Hyperparameters}\label{section:hyperparameter_setup}
We further provide more details on the choice of hyperparameters of our algorithms.
\begin{itemize}
    \item \textbf{Grid search of $\alpha$ in Algorithm \ref{alg1}}: The grid search of $\alpha$ in Part1 is applied over the interval $[\alpha_{min}, \alpha_{max}] = [2, 10]$ with window size $\eta=0.01$.
    \item \textbf{Threshold $\lambda$ in Algorithm \ref{alg1}} and \ref{alg2}: $\lambda$ is fixed as $0.9$ throughout all experiments.
    \item \textbf{The number of iterations in Algorithm \ref{alg1}}: In Part1, $max_iterations$ is set as $1000$.
    \item \textbf{Size of training dataset}: We chose $n_1$ and $n_2$ as $50$ or $100$, depending on the feature size $p$.
    \item \textbf{The number of out-of-samples}: Across all experiments, the number of out-of-samples is fixed as $1000$.
    \item \textbf{Performance comparison in Section \ref{sec_emp_syn}}: The columns \texttt{Algorithm \ref{alg1}} and \texttt{Direct Comparison} in  Table \ref{table1} and Table \ref{table1.2} are obtained from $1000$ repetitions.
\end{itemize}

\subsection{Comparison with other models}
To the best of our knowledge, no existing work specifically focuses on combining datasets with the goal of minimizing out-of-sample error. However, the algorithm developed by \citet{murugesan2017co} could serve as an alternative approach. In \citet{murugesan2017co}, factor models are employed to decompose the weight matrix into feature clusters and task clusters. To facilitate a comparison between the two methods, we apply each approach to the features obtained by applying the MLP1 model to the datasets introduced earlier.

The result of Algorithm \ref{alg2} and BiFactor MTL is summarized in Table \ref{table4}. BB is divided into three parts based on the weather condition, DFOIP is partitioned into four parts based on store types, and WDAF is split into five parts according to store locations. For each dataset, only a subset of the entire dataset was chosen for this additional experiment to expedite the clustering. The reported values of $\widehat{\text{OSE}}$ on individual datasets may be different for the two methods as they both have some randomness when selecting out-of-samples and computing the $\widehat{\text{OSE}}$.

\begin{table}[t]
\caption{Algorithm \ref{alg2} on Representation with $\widehat{\text{OSE}}$}
\label{table4}
    \centering
    \begin{tabular}{@{}c|c|c|c@{}}
        \toprule
                        & BB        & DFOIP                 & WDAF                  \\
        \midrule
        Individual      & $2.822$   & $3.097 \times 10^3$   & $1.495 \times 10^2$   \\
        Alg. \ref{alg2} & $2.359$   & $9.105 \times 10^2$   & $3.562 \times 10^1$   \\
        Reduction       & $16.41\%$ & $70.60\%$             & $76.17\%$             \\
        \bottomrule
    \end{tabular}
    
    \begin{tabular}{@{}c|c|c|c@{}}
        \toprule
                        & BB        & DFOIP                 & WDAF \\
        \midrule
        Individual      & $3.308$   & $2.908 \times 10^3$   & $1.114 \times 10^2$   \\
        BiFactor        & $3.239$   & $1.167 \times 10^3$   & $9.697 \times 10^1$   \\
        Reduction       & $2.086\%$ & $59.87\%$             & $12.90\%$             \\
        \bottomrule
    \end{tabular}
\end{table}

As shown in Table \ref{table4}, Algorithm \ref{alg2} significantly reduces $\widehat{\text{OSE}}$, whereas the reduction for the BiFactor MTL method is less significant. This discrepancy in $\widehat{\text{OSE}}$ reduction is attributed to the different dataset combinations selected by each algorithm. For example, Algorithm \ref{alg2} opted to combine the datasets for weather conditions 1 and 2 while keeping condition 3 separate on BB, whereas BiFactor MTL merged the datasets for weather conditions 2 and 3. Our algorithm effectively identifies relevant datasets for combination, leading to lower errors on the merged datasets.

\subsection{Complexity Analysis}\label{section:complexity}
Algorithm \ref{alg2} uses pairwise comparison, which effectively reduces the running time by having a quadratic growth on the number of datasets. As our method employs linear models, the actual execution time does not extend uncontrollably when the number of datasets scales. For instance, it took 10 minutes for RSS and 3 hours for CFGSF to train an MLP and apply Algorithm \ref{alg2}, on a standard CPU without parallelization. 

To our knowledge, there is no off-the-shelf clustering method applicable to our setting to effectively reduce the computational complexity due to the following reason. Most clustering methods commonly assume that objects to be clustered have representative features. However, our setting only has a notion of pairwise similarity between two datasets and no universal feature space of the datasets themselves, which renders those clustering algorithms inapplicable. Other clustering methods that operate without features usually require a similarity matrix which contains information about similarity between any two objects. However, the complexity of computing the similarity matrix is tantamount to that of Algorithm \ref{alg2} in the worst case. Hence we believe that applying the greedy algorithm not only simplifies the clustering, which is not the most core part of our method, but also potentially reduces the computational cost. Given the computational resources commonly available today, we do not view computational complexity as a significant limitation. Developing a clustering-based method within our framework would require substantial additional work, and we hope that the exploration of clustering strategies could be done in future research.

\end{document}